\relax
%File: formatting-instruction.tex
\documentclass[letterpaper]{article} % DO NOT CHANGE THIS
\usepackage{aaai20}  % DO NOT CHANGE THIS
\usepackage{times}  % DO NOT CHANGE THIS
\usepackage{helvet} % DO NOT CHANGE THIS
\usepackage{courier}  % DO NOT CHANGE THIS
\usepackage[hyphens]{url}  % DO NOT CHANGE THIS
\usepackage{graphicx} % DO NOT CHANGE THIS
% \urlstyle{rm} % DO NOT CHANGE THIS
  % DO NOT CHANGE THIS
\usepackage{graphicx}  % DO NOT CHANGE THIS
\frenchspacing  % DO NOT CHANGE THIS
\setlength{\pdfpagewidth}{8.5in}  % DO NOT CHANGE THIS
\setlength{\pdfpageheight}{11in}  % DO NOT CHANGE THIS

\usepackage{adjustbox}
% \usepackage{wrapfig}
% Recommended, but optional, packages for figures and better typesetting:
\usepackage{microtype}
\usepackage{svg}
\usepackage{mathrsfs}

\usepackage[utf8]{inputenc} % allow utf-8 input
\usepackage{hyperref}       % hyperlinks
\usepackage{url}            % simple URL typesetting
\usepackage{booktabs}       % professional-quality tables
\usepackage{amsfonts}       % blackboard math symbols
\usepackage{nicefrac}       % compact symbols for 1/2, etc.
\usepackage{microtype}      % microtypography

\usepackage{graphicx} 
\usepackage{tikz}
\usepackage{amsmath}

\usepackage{booktabs}
% \input{insbox.tex}

%%%%%%%%%%%%%%%%%%%% INSBOX.TEX %%%%%%%%%%%%%%%%%%%%%%%%%
%%%%%%%%%%%%%%%%%%%%%%%%%%%%%%%%%%%%%%%%%%%%%%%%%%%%%%%%%%%%%%%%%%%%%
%      INSBOX --- macros for inserting pictures into paragraphs     %
%       Micha\l{} Gulczy\'nski, Szczecin, Jan 1996 / Feb 1998       %
%                     mgulcz@we.tuniv.szczecin.pl                   %
%%%%%%%%%%%%%%%%%%%%%%%%%%%%%%%%%%%%%%%%%%%%%%%%%%%%%%%%%%%%%%%%%%%%%
%
%  version 2.2
%
%  available macros:
%    * \InsertBoxC{anybox}
%        insert a centered box (use int _inside_ a paragraph)
%    * \InsertBoxL{after_line}{anybox}[correction]
%    * \InsertBoxR{after_line}{anybox}[correction]
%        insert a box in the left/right after specified number of lines;
%        correction specified in square brackets is optional;
%        both macros should be called _before_ a paragraph
%    * \MoveBelowBox
%        start a new paragraph just below the current frame
%
%  see the demo.tex file for more information
%

\catcode`\@ = 11
%
%  Margin between the text and the box:
\newdimen\@InsertBoxMargin
\@InsertBoxMargin = 2mm
%
%  definition of \ParShape, an inproved version of plain \parshape
%
\newcount\@numlines    % sum: m_1+...+m_n
\newcount\@linesleft   % counter used when reading lines of \ParShape
\def\ParShape{%
    \@numlines = 0
    \def\@parshapedata{ }% here we'll collect data for plain \parshape
    \afterassignment\@beginParShape
    \@linesleft
}%
\def\@beginParShape{%
    \ifnum \@linesleft = 0
      \let\@whatnext = \@endParShape
    \else
      \let\@whatnext = \@readnextline
    \fi
    \@whatnext
}%
\def\@endParShape{%
    \global\parshape = \@numlines \@parshapedata
}%
\def\@readnextline#1 #2 #3 {% #1 #2 #3 are: m_i, leftskip_i, rightskip_i
    \ifnum #1 > 0
      \bgroup  % I want to keep changes of \dimen0 and \count0 local
        \dimen0 = \hsize
        \advance \dimen0 by -#2  % \parshape requires left skip and
        \advance \dimen0 by -#3  % _length_of_line_ (not right skip!)
        \count0 = 0
        \loop
          \global\edef\@parshapedata{%
            \@parshapedata    % add to \@parshapedata:
            #2                % left skip
            \space            % a space
            \the\dimen0       % length of line
            \space            % another space
          }%
          \advance \count0 by 1
          \ifnum \count0 < #1
        \repeat
      \egroup
      \advance \@numlines by #1
    \fi
    \advance \@linesleft by -1
    \@beginParShape
}%
%
%  \InsertBoxC, \InsertBoxL, \InsertBoxR
%
\newbox\@boxcontent     % box containing the picture to be inserted
\newcount\@numnormal    % number of leading lines to typeset normally
\newdimen\@framewidth   % width of the frame
\newdimen\@wherebottom  % position of frame's bottom
\newif\if@byframe       % true if we are just beside the frame
\@byframefalse
\def\InsertBoxC#1{%
  \leavevmode
  \vadjust{
    \vskip \@InsertBoxMargin
    \hbox to \hsize{\hss#1\hss}
    \vskip \@InsertBoxMargin
  }%
}%
\def\InsertBoxL#1#2{%
  \@numnormal = #1
  \setbox\@boxcontent = \hbox{#2}%
  \let\@side = 0
  \futurelet \@optionalparameter \@InsertBox
}
\def\InsertBoxR#1#2{%
  \@numnormal = #1
  \setbox\@boxcontent = \hbox{#2}%
  \let\@side = 1
  \futurelet \@optionalparameter \@InsertBox
}%
\def\@InsertBox{%
  \ifx \@optionalparameter [
    \let\@whatnext = \@@InsertBoxCorrection
  \else
    \let\@whatnext = \@@InsertBoxNoCorrection
  \fi
  \@whatnext
}%
\def\@@InsertBoxCorrection[#1]{%
  \ifx \@side 0
    \@@InsertBox{#1}{0}{{\the\@framewidth} 0cm}%
  \else
    \@@InsertBox{#1}{1}{0cm {\the\@framewidth}}%
  \fi
}%
\def\@@InsertBoxNoCorrection{%
  \@@InsertBoxCorrection[0]%
}%
\def\@@InsertBox#1#2#3{%
  \MoveBelowBox
  \@byframetrue
  % \@wherebottom = \pagetotal + (\@numnormal * \baselineskip) +
  %                 (height of \@boxcontent) + (2 * \@InsertBoxMargin)
  \@wherebottom = \baselineskip
  \multiply \@wherebottom by \@numnormal
  \advance \@wherebottom by 2\@InsertBoxMargin
  \advance \@wherebottom by \ht\@boxcontent
  \advance \@wherebottom by \pagetotal
  % I have no idea why, but \InsertBox called at the top of a page
  % calculates space for the box one line too big
  \ifdim \pagetotal = 0cm
    \advance \@wherebottom by -\baselineskip  % ^ reduction
  \fi
  % add the correction
  \advance \@wherebottom by #1\baselineskip
  % \@framewidth = (width of \@boxcontent} + \@InsertboxMargin
  \@framewidth = \wd\@boxcontent
  \advance \@framewidth by \@InsertBoxMargin
  \bgroup  % to keep changes of \dimen0 local
    % check if the box fits in the page
    \ifdim \pagetotal = 0cm
      \dimen0 = \vsize
    \else
      \dimen0 = \pagegoal
    \fi
    \ifdim \@wherebottom > \dimen0
      % print a warning message ...
      \immediate\write16{+--------------------------------------------------------------+}%
      \immediate\write16{| The box will not fit in the page. Please, re-edit your text. |}%
      \immediate\write16{+--------------------------------------------------------------+}%
      % ... and mark this place in document with a black box
      \vrule width \overfullrule
    \fi
  \egroup
  \prevgraf = 0
  % insert the box in the left (if #2 = 0) or in the right (if #2 = 1)
  \vbox to 0cm{%
    \dimen0 = \baselineskip
    \multiply \dimen0 by \@numnormal
    \advance \dimen0 by -\baselineskip
    \setbox0 = \hbox{y}%
    \vskip \dp0
    \vskip \dimen0
    \vskip \@InsertBoxMargin
    \ifnum #2 = 1
      \vtop{\noindent \hbox to \hsize{\hss \box\@boxcontent}}%
    \else
      \vtop{\noindent \box\@boxcontent}%
    \fi
    \vss
  }%
  % I have no idea why, but this is really necessary
  \vglue -\parskip
  \vskip -\baselineskip
  % each following paragraph needs to be formatted properly
  \everypar = {%
    % are we already below the bottom of the box?
    \ifdim \pagetotal < \@wherebottom
      % no...
      \bgroup  % to keep some changes local
        % let's calculate parameters for \ParShape
        \dimen0 = \@wherebottom
        \advance \dimen0 by -\pagetotal
        \divide \dimen0 by \baselineskip
        \count1 = \dimen0
        \advance \count1 by 1
        \advance \count1 by -\@numnormal
        \ifnum #2 = 1
          \ParShape = 3
                      {\the\@numnormal}   0cm   0cm
                      {\the\count1}       0cm   {\the\@framewidth}
                      1                   0cm   0cm
        \else
          \ParShape = 3
                      {\the\@numnormal}   0cm                  0cm
                      {\the\count1}       {\the\@framewidth}   0cm
                      1                   0cm                  0cm
        \fi
      \egroup
    \else
      % yes!
      \@restore@    % it's time to end everything
    \fi
  }%
  % this definition isn't very necessary --- just in case the paragraph
  % following \InsertBoxL or \InsertBoxR has fewer lines that the
  % first argument of the macro
  \def\par{%
      \endgraf
      \global\advance \@numnormal by -\prevgraf
      \ifnum \@numnormal < 0
        \global\@numnormal = 0
      \fi
      \prevgraf = 0
  }%
}%
%
%  call this macro to move the current position just below the
%  current frame
%
\def\MoveBelowBox{%
  \par
  \if@byframe
    \global\advance \@wherebottom by -\pagetotal
    \ifdim \@wherebottom > 0cm
      \vskip \@wherebottom
    \fi
    \@restore@
  \fi
}%
%
%  normal settings are as follows:
%
\def\@restore@{%
    \global\@wherebottom = 0cm
    \global\@byframefalse
    \global\everypar = {}%
    \global\let \par = \endgraf
    \global\parshape = 1 0cm \hsize
}%
%
%  someone told me that in LaTeX there is no \pageno counter;
%  the counterpart is \c@page
%
\ifx \documentclass \@Dont@Know@What@It@Is@
\else
  \let \pageno = \c@page
\fi

\catcode`\@ = 12

%%%%%%%%%%%%%%%%%%%%%%%%%%%%%%%%%%%%%%%%%%%%%%%%%%%%%%%%%%

\usepackage{multirow}

\usepackage{algorithmic}
\usepackage[linesnumbered, ruled]{algorithm2e} 

\usepackage{mwe}
% \usepackage{hyperref}

% subfigures
% \usepackage{subcaption}

\usepackage{xr}

% \usepackage[compact]{titlesec}

% \GenlareMathOperator{\Tr}{{Tr}}

\usepackage{amsmath,amsthm}
\usepackage{environ}
\usepackage{etoolbox}
\usepackage{comment}

\makeatletter
\newtheorem*{rep@theorem}{\rep@title}
\newcommand{\newreptheorem}[2]{%
\newenvironment{rep#1}[1]{%
 \def\rep@title{#2 \ref{##1}}%
 \begin{rep@theorem}}%
 {\end{rep@theorem}}}
\makeatother

\newtheorem{theorem}{Theorem}
\newtheorem{assumption}{Assumption}
\newtheorem{conjecture}{Conjecture}
\newtheorem{lemma}{lemma}

\newreptheorem{proposition}{Proposition}
\newreptheorem{theorem}{Theorem}
\newreptheorem{lemma}{Lemma}
\newreptheorem{observation}{Observation}

\usepackage{changepage}

\newtheorem{observation}{Observation}
\theoremstyle{definition}
\newtheorem{definition}{Definition}%[section]

\usepackage{amsmath,amsfonts,bm}

% Mark sections of captions for referring to divisions of figures

% Highlight a newly defined term

% Figure reference, lower-case.

% Figure reference, capital. For start of sentence

% Section reference, lower-case.

% Section reference, capital.

% Reference to two sections.

% Reference to three sections.

% Reference to an equation, lower-case.
\def\eqref#1{equation~\ref{#1}}
% Reference to an equation, upper case

% A raw reference to an equation---avoid using if possible

% Reference to a chapter, lower-case.

% Reference to an equation, upper case.

% Reference to a range of chapters

% Reference to an algorithm, lower-case.

% Reference to an algorithm, upper case.

% Reference to a part, lower case

% Reference to a part, upper case

\def\1{\bm{1}}

% Random variables

% rm is already a command, just don't name any random variables m

% Random vectors

\def\rvb{{\mathbf{b}}}

\def\rvv{{\mathbf{v}}}
\def\rvw{{\bm{\beta}}}
\def\rvx{{\mathbf{x}}}

% Elements of random vectors

% Random matrices

% Elements of random matrices

% Vectors
\def\vzero{{\bm{0}}}

\def\vmu{{\bm{\mu}}}

\def\vbb{{\bm{b}}}
\def\vb{{\bm{u}}}

\def\vk{{\bm{k}}}

\def\vu{{\bm{u}}}

\def\vx{{\bm{x}}}
\def\vy{{\bm{y}}}
\def\vz{{\bm{z}}}

% Elements of vectors

% Matrix

\def\mK{{\bm{K}}}

\def\mY{{\bm{Y}}}

% Tensor
\DeclareMathAlphabet{\mathsfit}{\encodingdefault}{\sfdefault}{m}{sl}
\SetMathAlphabet{\mathsfit}{bold}{\encodingdefault}{\sfdefault}{bx}{n}

% Graph

\def\gN{{\mathcal{N}}}
\def\gO{{\mathcal{O}}}
\def\gP{{\mathcal{P}}}

\def\gX{{\mathcal{X}}}

% Sets

% Don't use a set called E, because this would be the same as our symbol
% for expectation.

% Entries of a matrix

% entries of a tensor
% Same font as tensor, without \bm wrapper

% \def\softplus{{\text{softplus}}}

% \def\logsumexp{{\text{logsumexp}}}

% The true underlying data generating distribution

% The empirical distribution defined by the training set

% The model distribution

% Stochastic autoencoder distributions

 % Laplace distribution

\newcommand{\E}{\mathbb{E}}
\newcommand{\W}{\mathbb{W}}

\newcommand{\R}{\mathbb{R}}

\def\calD{{\mathcal{D}}}
\def\calF{{\mathcal{F}}}

\def\calB{{\mathcal{B}}}
\def\calN{{\mathcal{N}}}

\def\calX{{\mathcal{X}}}

\let\log\relax
\DeclareMathOperator{\log}{ln}

\def\P{{\mathbb{P}}}

\def\Q{{\mathbb{Q}}}

\def\I{{\mathbb{I}_d}}
\def\ind{{\bm{1}}}
\def\R{{\mathbb{R}}}

\def\Pr{{\gP_{\pi_{0,1}}}}
\def\hPr{{\hat{\gP}_{\pi_{0,1}}}}

\def\hphi{{\hat{\phi}}}

\newcommand{\KL}{D_{\mathrm{KL}}}

% Wolfram Mathworld says $L^2$ is for function spaces and $\ell^2$ is for vectors
% But then they seem to use $L^2$ for vectors throughout the site, and so does
% wikipedia.

\def\vec{{\text{vec}}}

 % See usage in notation.tex. Chosen to match Daphne's book.

\DeclareMathOperator*{\argmax}{arg\,max}
\DeclareMathOperator*{\argmin}{arg\,min}
\DeclareMathOperator*{\arginf}{arg\,inf}

%%%%%%%%%%%%%%%%%%%%%%%%%%% MATH COMMANDS END %%%%%%%%%%%%%%%%%%%%%%%%%

% fix for numbering coming  https://tex.stackexchange.com/questions/461186/how-to-use-lineno-with-amsmath-align
\usepackage[mathlines]{lineno} %% <- with [mathlines] to number lines in equations
% \usepackage{amsmath}           %% <- after lineno
% \usepackage{etoolbox}          %% <- for \cspreto, \csappto and \patchcmd

%% Patch 'normal' math environments:
\newcommand*\linenomathpatch[1]{%
  \cspreto{#1}{\linenomath}%
  \cspreto{#1*}{\linenomath}%
  \csappto{end#1}{\endlinenomath}%
  \csappto{end#1*}{\endlinenomath}%
}
%% Patch AMS math environments:
\newcommand*\linenomathpatchAMS[1]{%
  \cspreto{#1}{\linenomathAMS}%
  \cspreto{#1*}{\linenomathAMS}%
  \csappto{end#1}{\endlinenomath}%
  \csappto{end#1*}{\endlinenomath}%
}

%% definition of \linenomathAMS depends on whether the mathlines option is provided
\expandafter\ifx\linenomath\linenomathWithnumbers
  \let\linenomathAMS\linenomathWithnumbers
  %% The following line gets rid of an extra line numbers at the bottom:
  \patchcmd\linenomathAMS{\advance\postdisplaypenalty\linenopenalty}{}{}{}
\else
  \let\linenomathAMS\linenomathNonumbers
\fi

\linenomathpatch{equation}
\linenomathpatchAMS{gather}
\linenomathpatchAMS{multline}
\linenomathpatchAMS{align}
\linenomathpatchAMS{alignat}
\linenomathpatchAMS{flalign}

% Disable line numbering during measurement step of multline
\makeatletter
\patchcmd{\mmeasure@}{\measuring@true}{
  \measuring@true
  \ifnum-\linenopenaltypar>\interdisplaylinepenalty
    \advance\interdisplaylinepenalty-\linenopenalty
  \fi
  }{}{}
\makeatother

 \pdfinfo{
/Title (Solving Schr\"odinger Bridges via Maximum Likelihood)
/Author (Francisco Vargas, Pierre Thodoroff, Austen Lamacraft, Neil D. Lawrence)
} %Leave this	

\setcounter{secnumdepth}{2} %May be changed to 1 or 2 if section numbers are desired.

% The file aaai20.sty is the style file for AAAI Press 
% proceedings, working notes, and technical reports.
%
\setlength\titlebox{2.5in} % If your paper contains an overfull \vbox too high warning at the beginning of the document, use this
% command to correct it. You may not alter the value below 2.5 in
\title{Solving Schrödinger Bridges via Maximum
Likelihood }
%Your title must be in mixed case, not sentence case. 
% That means all verbs (including short verbs like be, is, using,and go), 
% nouns, adverbs, adjectives should be capitalized, including both words in hyphenated terms, while
% articles, conjunctions, and prepositions are lower case unless they
% directly follow a colon or long dash

% \author{ \Large \textbf{Anonymous Author(s)} \\ 

% Affiliation\\
% email
% }

\author{ \Large \textbf{Francisco Vargas\textsuperscript{\rm 1} , Pierre Thodoroff\textsuperscript{\rm 1} , Austen Lamacraft\textsuperscript{\rm 2} , Neil D. Lawrence\textsuperscript{\rm 1} }\\ % All authors must be in the same font size and format. Use \Large and \textbf to achieve this result when breaking a line

% \thanks{Under review.}

% A
\textsuperscript{\rm 1} Department of Computer Science, Cambridge University\\ %If you have multiple authors and multiple affiliations
\textsuperscript{\rm 2} Department of Physics Cambridge University\\

% use superscripts in text and roman font to identify them. For example, Sunil Issar,\textsuperscript{\rm 2} J. Scott Penberthy\textsuperscript{\rm 3} George Ferguson,\textsuperscript{\rm 4} Hans Guesgen\textsuperscript{\rm 5}. Note that the comma should be placed BEFORE the superscript for optimum readability
% 2275 East Bayshore Road, Suite 160\\
% Palo Alto, California 94303\\

\{fav25,pt440,al200,ndl21\}@cam.ac.uk % email address must be in roman text type, not monospace or sans serif

}

\begin{document}

\newcommand{\citet}[1]{\citeauthor{#1}\shortcite{#1}}
\newcommand{\citep}{\cite}

\maketitle

\begin{abstract}
The Schrödinger bridge problem (SBP) finds the most likely stochastic evolution between two probability distributions given a prior stochastic evolution. As well as applications in the natural sciences, problems of this kind have important applications in machine learning such as dataset alignment and hypothesis testing. Whilst the theory behind this problem is relatively mature, scalable numerical recipes to estimate the Schrödinger bridge remain an active area of research. Our main contribution is the proof of equivalence between solving the SBP and an autoregressive maximum likelihood estimation objective. This formulation circumvents many of the challenges of density estimation and enables direct application of successful machine learning techniques. We propose a numerical procedure to estimate SBPs using Gaussian process and demonstrate the practical usage of our approach in numerical simulations and experiments.
\end{abstract}

\section{Introduction}

Analysis of cross-sectional data is ubiquitous in machine learning and science. Temporal data are typically sampled at discrete intervals due to technological or physical constraints. This means information between time points is lost. This motivates the need to model the stochastic evolution of a process between sampled time points. The classical Schrödinger bridge problem \cite{schrodinger1931uber,schrodinger1932theorie} finds the most likely stochastic process that evolves a distribution $\pi_0(\vx)$ to another distribution $\pi_1(\vy)$ consistently with a pre-specified Brownian motion. We consider a more general dynamical Schrödinger bridge problem for \emph{any} pre-specified diffusion prior. Practically, this generalization allows us to exploit domain knowledge, e.g. oceanic and atmospheric flows might be interpolated from empirical measurements using previously established dynamics as priors.

In the classical set up numerical approaches to solve the Schrödinger bridge are mainly based on the Sinkhorn-Knopp algorithm \cite{sinkhorn1967concerning}. However, extending those algorithms to more general diffusion priors and marginals requires complex adaptations. We introduce an iterative proportional maximum likelihood (IPML) algorithm to solve the general Schrödinger bridge problem. The IPML algorithm also obtains a good approximation for the dynamics of the underlying physical process that solves the SBP. This contrasts to previous approaches \cite{cuturi2013sinkhorn,feydy2019interpolating,chizat2020faster} that estimate the value that extremises the SBP objective. Practically, this means we obtain physically interpretable solutions that we can leverage for downstream tasks. 
%\end{paracol}
\begin{figure}[t!]
    \centering
    \includegraphics[width=0.85\columnwidth]{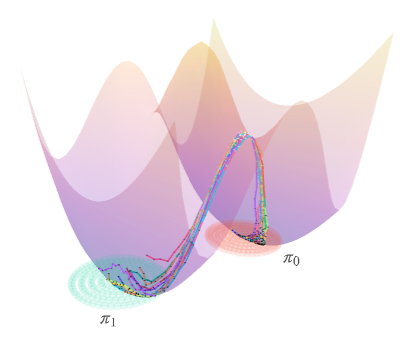}
    \caption{  Learned SBP trajectories in the double well experiment of Section \ref{sec:double}, with prior $\Q_0^\gamma$ expressed in terms of an energy landscape $U(x,y)$ as $d \rvx(t) = -\nabla_{\vx}U (\rvx(t)) + \gamma d\rvw(t)$}
    \label{fig:well_tr}
    % \caption{Learned SBP trajectories in the double well experiment of Section \ref{sec:double}, with prior $\Q_0^\gamma$ expressed in terms of an energy landscape $U(x,y)$ as $d \rvx(t) = -\nabla_{\vx}U (\rvx) + \gamma d\rvw(t)$}
    % \caption{}
\end{figure}
\begin{figure}
    \centering
    \includegraphics[width=0.8\columnwidth]{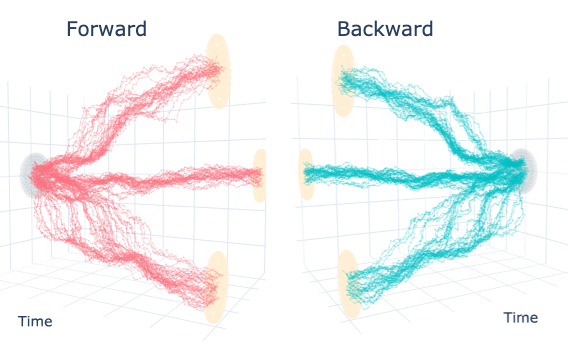}
    % \caption{Fitted SBP on the start and end data slices with Brownian prior on single cell human Embryo data. Observations depicted as red point clouds. See section \ref{sec:cells} for experimental details.}
    % \caption{}
 
    \caption{  Forwards and backwards diffusion of learned SBP between unimodal and multimodal boundary distributions (see section \ref{sec:simple} for experimental details)} \label{fig:well_and_forward}
\end{figure}
%\begin{paracol}{2}
%\linenumbers
%\switchcolumn
IPML's inspiration is from \emph{probabilistic numerics} (PN) \cite{hennig2015probabilistic}. We combine a PN styled formulation with the iterative proportional fitting procedure (IPFP) \cite{kullback1968probability,ruschendorf1995convergence}. The algorithm iteratively simulates trajectories that converge to the SBP. We prove  that IPML converges in probability at each iteration. Our numerical experiments show the algorithm can be implemented with Gaussian process (GP) models of the drifts. GPs allow us to incorporate functional prior information. We demonstrate the practical use of our algorithm on real-world embryoid cells data (see Figure \ref{fig:single_cel_trajectories}) by quantitatively and qualitatively comparing the performance of our algorithm to state-of-the-art deep learning methods and optimal transport techniques.  To summarise the main contributions of our work are:
\begin{itemize}
    \item We recast the iterations of the dynamic IPFP algorithm as a regression based maximum likelihood objective. This is formalised in Theorem \ref{theorem:mle-rev} and Observation \ref{obs:half}. This differs to prior approaches such as \cite{pavon2018data} where their maximum likelihood formulation solves a density estimation problem. This allows the application of many regression methods from machine learning that scale well to high dimensional problems. Note that this formulation can be parametrised with any method we chose GPs however neural networks would also be well suited.
    \item We solve the aforementioned regression objectives using  GPs \cite{williams2006gaussian} motivated by the connection between the drift of stochastic differential equations and  GPs \cite{ruttor2013approximate}.
    \item We provide a conceptual and empirical comparison with the approach by \cite{pavon2018data} and detail why the density estimation formulation in \cite{pavon2018data} scales poorly with dimension.
    % \item Finally we re-implement the approach by \cite{pavon2018data} and compare approaches across a series of numerical experiments. Furthermore we empirically show how our approach works well in dimensions higher than 2.
\end{itemize}

\section{Technical Background}
% \begin{wrapfigure}{r}{0.5\textwidth}
% % \begin{minipage}{0.5\textwidth}
% % \begin{figure} [t!]
%     \begin{center}
%     \includegraphics[width=0.47\textwidth]{images/reverse_forward.png}
%     \end{center}
%     \caption{Forwards and backwards diffusion of learned SBP between unimodal and multimodal boundary distributions (see section \ref{sec:simple} for experimental details).}
%     \label{fig:forback}
% % \end{figure}
% % \end{minipage}
% \end{wrapfigure}
Our solution has three components. (1) We reformulate the SBP as a dynamical system giving a stochastic differential equation (SDE) with initial value (IV) and final value (FV) constraints \cite{oksendal2003stochastic}. (2) We reverse the system, reformulating the FV constraint as an IV constraint. Both IV problems are solved through a stochastic control formulation (Section \ref{control}). (3) We iterate the IV and FV constrained problems to converge to the full boundary value SDE.

% \begin{figure}[t!]
%     \centering
%     \includegraphics[scale=0.5]{images/well-illustration.png}
%     \caption{Learned SBP trajectories in the double well experiment of Section \ref{sec:double}, with prior $\Q_0^\gamma$ expressed in terms of an energy landscape $U(x,y)$ as $d \rvx(t) = -\nabla_{\vx}U (\rvx) + \gamma d\rvw(t)$}
%     \label{fig:well_illustr}
% \end{figure}

\subsection{Dynamic Formulation}

The dynamic version of the Schrödinger bridge is written in terms of measures over the space of trajectories, that describe the stochastic dynamics defined over the unit interval.

\begin{definition}
    (Dynamic Schrödinger problem) The dynamic Schrödinger problem is given by
    \begin{linenomath}
    \begin{equation}
        \inf_{\Q \in \calD(\pi_0, \pi_1)} \KL\left(\Q \big|\big| \Q_0^{\gamma}\right),
    \end{equation}
    \end{linenomath}
    where $\Q\!\!\in\!\!\calD(\pi_0, \pi_1)$ is a  measure with prescribed marginals of $\pi_0, \pi_1$ at times $0, 1$ that is $(X_0)_{\#}\Q=\pi_0$ and $(X_1)_{\#}\Q=\pi_1$.  $\Q_0^\gamma$ is a drift augmented Brownian motion with a scalar volatility $\gamma$ acting as the prior, see Figure \ref{fig:well_tr}. Traditionally $\Q_0^{\gamma}\!:=\!\!\W^{\gamma}$ is a Wiener measure with volatility $\gamma$, however we consider the general setting in this work. 
\end{definition}

The prior $\Q_0^{\gamma}$ can be written as a solution  of the diffusion:
\begin{linenomath}
\begin{equation}\label{eq:sde1}
    d\rvx(t) = \vbb_0^{+}(\rvx(t),t) dt + \sqrt{\gamma} d\rvw^{+}(t), \;\;\; \rvx(0) \sim \pi_0^{ \Q_0^{\gamma}}.
\end{equation}
\end{linenomath}

From here we use $\vbb_0$ to denote the drift of the prior $\Q_0^{\gamma}$. Note that a finite KL implies $\Q$ and $\Q_0^\gamma$  are both Itô SDEs with volatility $\gamma$. The diffusion coefficient $\gamma$ is a time homogeneous constant and for the KL-divergence to be finite the process $\Q$ must have $\gamma$ as its diffusion coefficient.

\subsubsection{Time Reversal of Diffusions}

When sampling solutions, IV constraints are trivially solved by initialisation of samples, but FV constraints are more problematic. However, if we can reformulate the SBP as a \emph{time reversed diffusion} the FV constraint becomes an IV constraint (Section \ref{need_time_rev}). Here we review how the diffusion is reversed. 
We reverse time (see Figure \ref{fig:well_and_forward}) in the random variable $\rvx(t)$  described by the diffusion in Equation \ref{eq:sde1} such that $\rvx^{-}(t) = \rvx(1-t)$. The reverse time diffusion $\rvx^{-}(t)$ is also an Itô process. For a modern application of time reversal in Machine Learning see \cite{song2020score}.

\begin{lemma} \cite{nelson1967dynamical} 
If $\rvx^+(t)$ obeys the SDE
\begin{linenomath}
\begin{equation*}
    d\rvx^+(t)= \vb^{+}(\rvx^+(t),t) dt + \sqrt{\gamma} d\rvw^{+}(t), \;\;\; \rvx^+(0) \sim \pi_0^{ \Q_0^{\gamma}},
\end{equation*}
\end{linenomath}
then $\rvx^{-}(t) = \rvx^+(1-t)$ obeys
\begin{linenomath}
\begin{equation*}
    d\rvx^{-}(t) = \vb^{-}(\rvx^{-}(t),t) dt + \sqrt{\gamma} d\rvw^{-}(t), \;\;\; \rvx^{-}(0) = \rvx^+(1).
\end{equation*}
\end{linenomath}
 where $\rvw^{-}(t)$ is a Brownian motion adapted to the reverse filtration $(\calF^{-}_i)_{i\in T}$, that is $\calF^{-}_t \!\!\subseteq \calF^{-}_s, \; s \!\leq\! t$%\footnote{This simply means $\rvw^-(t)$ is only aware about the future.}
 
 Furthermore, the dual drift $\vb^{-}(\rvx,t)$ satisfies Nelson's duality relation:
 \begin{linenomath}
\begin{equation}\label{eq:nelson}
      {\vb^{+}(\rvx,t) - \vb^{-}(\rvx,t)} =\gamma \nabla_{\vx} \ln p(\rvx, t),
\end{equation}
\end{linenomath}
where $p(\rvx, t)$ solves the associated Fokker--Planck equation.
\end{lemma}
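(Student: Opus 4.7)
The plan is to identify the reversed drift by matching Fokker--Planck equations and then read Nelson's duality off as a direct consequence. Let $p(\vx,t)$ denote the marginal density of $\rvx^+(t)$, which satisfies the forward Kolmogorov equation
$$\partial_t p = -\nabla_\vx\!\cdot\!(\vb^+ p) + \tfrac{\gamma}{2}\Delta_\vx p.$$
Setting $q(\vx,t) := p(\vx,1-t)$ gives the density of $\rvx^-(t)=\rvx^+(1-t)$, and differentiating in $t$ with the chain rule yields
$$\partial_t q = \nabla_\vx\!\cdot\!\bigl(\vb^+(\cdot,1-t)\,q\bigr) - \tfrac{\gamma}{2}\Delta_\vx q.$$

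Next I would use the elementary identity $\tfrac{\gamma}{2}\Delta q = \tfrac{\gamma}{2}\nabla\!\cdot\!(q\,\nabla\ln q)$ to absorb the negative diffusion term and rearrange the right-hand side into the standard Fokker--Planck form $\partial_t q = -\nabla\!\cdot\!(\vb^-\,q) + \tfrac{\gamma}{2}\Delta q$ for some as-yet-unknown drift $\vb^-$. Matching coefficients forces
$$\vb^-(\vx,t) = \gamma\,\nabla_\vx\ln p(\vx,1-t) - \vb^+(\vx,1-t),$$
which, transcribed back to physical time, is exactly the stated duality $\vb^+(\vx,t)-\vb^-(\vx,t) = \gamma\nabla_\vx\ln p(\vx,t)$. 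Any Itô process with this drift, diffusion coefficient $\sqrt{\gamma}$ and initial condition $\rvx^+(1)$ therefore has one-dimensional marginals agreeing with those of the reversed process.

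The main obstacle is that matching Fokker--Planck equations only pins down marginals, not the full law of the SDE solution, so one still has to exhibit a driving process $\rvw^-$ that is a Brownian motion adapted to the reverse filtration $\calF^-$. The standard remedy is the Haussmann--Pardoux time-reversal theorem: integration by parts against smooth test functions, combined with the forward FPE above, shows that the compensated process $\rvx^-(t) - \int_0^t \vb^-(\rvx^-(s),s)\,ds$ is a square-integrable $\calF^-$-martingale with quadratic variation $\gamma t$, so by L\'evy's characterisation it equals $\sqrt{\gamma}\,\rvw^-(t)$ for some $\calF^-$-Brownian motion. Mild regularity on $\vb^+$ and $p$ (sufficient smoothness and integrability to justify the PDE manipulations) is the implicit hypothesis; modulo that, the argument is complete.
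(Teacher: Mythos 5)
The paper does not actually prove this lemma: its ``proof'' is a pointer to the literature (Nelson, Anderson, F\"ollmer, Haussmann--Pardoux, Elliott--Anderson), together with a warning that the sign conventions in Nelson's duality vary across those references. What you have written is essentially the standard argument from that literature --- Fokker--Planck matching to identify the candidate reversed drift, then the Haussmann--Pardoux integration-by-parts/martingale argument plus L\'evy's characterisation to upgrade from marginals to the full law --- so you are supplying the proof the paper delegates, and your route is sound. Two caveats. First, ``matching coefficients forces'' overstates what the PDE step alone gives: equating the two Fokker--Planck forms only yields $\nabla_\vx\cdot\bigl((\vb^{+}+\vb^{-}-\gamma\nabla_\vx\ln q)\,q\bigr)=0$, which pins down the drift only up to a $q$-weighted divergence-free field; it is the martingale identification in your final paragraph (the reversed drift as a conditional expectation) that actually removes this ambiguity, so that step is carrying more weight than your phrasing suggests. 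Second, your computation produces the forward-in-reversed-time drift $\gamma\nabla_\vx\ln p(\vx,1-t)-\vb^{+}(\vx,1-t)$, whereas the identity you then ``transcribe back'' is $\vb^{+}(\vx,t)-\vb^{-}(\vx,t)=\gamma\nabla_\vx\ln p(\vx,t)$ with both drifts at the same physical time; reconciling the two requires flipping a sign as well as the time argument, i.e.\ adopting Nelson's convention that $\vb^{-}$ is the backward drift at physical time $t$ rather than the literal drift of the reversed SDE. That is consistent with the convention the paper says it uses (and with the $-\bar{\vb}^{-}$ fed to SDESolve in its algorithm), but the transcription should be made explicit, since this sign is precisely the point on which the cited references disagree.
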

\begin{proof}
A variety of proofs can be found for this result \cite{nelson1967dynamical,anderson1982reverse,elliott1985reverse,follmer1984entropy,haussmann1985time}. Note the formulation of Equation \ref{eq:nelson} varies slightly across studies we use the one from \cite{anderson1982reverse} and \cite{nelson1967dynamical}.\footnote{See page 87, definition of osmotic velocity in \cite{nelson1967dynamical}.}
\end{proof}

\begin{algorithm} \label{alg:gipfp}
        \SetKwInOut{Input}{input}
        \Input{$\pi_0(\vx), \pi_1(\vy), \Q^{\gamma}_0$}
        $\Q^{*}_0 =\Q^{\gamma}_0$\\
        $i=0$ \\
            \Repeat{convergence}{
              $i := i + 1$ \\
             $ \P_i^{*} = \arginf_{\P \in \calD( \cdot, \pi_1)} \KL  (\P|| \Q^{*}_{i-1})$\\ 
             $ \Q_i^{*}  = \arginf_{\Q \in \calD(\pi_0, \cdot)} \KL  (\Q||  \P^{*}_{i})$
            }
        \Return{$\Q_i^{*}, \P_i^{*}$}
        \caption{Iterative Proportional Fitting (IPFP) (Cramer 2000), for the traditional SBP with Brownian prior set $\Q^{\gamma}_0:=\W^{\gamma}$. } % \cite{cramer2000probability}
\end{algorithm}
\subsubsection{Stochastic Control Formulation}\label{control}

Now that the FV constraint has been converted to an IV constraint we cast the problem into a stochastic control formulation to estimate the drift of each diffusion process. Following from the dynamic formulation, the control formulation casts the problem explicitly in terms of stochastic differential equations. The control formulation is used to enforce constraints as initial value problems. Furthermore, the drift based formulations of the SBP admit a reverse time formulation which starts the chain at the end of the interval and progresses the dynamics backwards in time to the start.
\begin{lemma}\cite{pavon1991free}
% Let $\Q_0^{\gamma}$ be the measure representing solutions to the SDE
% \begin{align*}
%     d\rvx^+(t)& = \vb_0^{+}(\rvx^+(t),t) dt + \sqrt{\gamma} \rvw^{+}(t), \;\;\; \rvx^+(0) \sim \pi_0^{ \Q_0^{\gamma}}
% \end{align*}
% and similarly for $\Q$,
Let the measure $\Q$ be defined by solutions to the SDE
\begin{linenomath}
\begin{equation*}
    d\rvx^+(t) = \rvb^{+}(t) dt + \sqrt{\gamma} d\rvw^{+}(t),\;\;\; \rvx^+(0) \sim \pi^{\Q}_0 
\end{equation*}
\end{linenomath}
Then the KL divergence $\KL\left(\Q \big|\big| \Q_0^{\gamma}\right)$ can be decomposed in terms of either the forward or reversed diffusion as
% \begin{linenomath}
\begin{align}\label{eq:free_energy_11}
     \KL\big(&\Q \big|\big| \Q_0^{\gamma}\big) = \KL(\pi^{\Q}_{\frac{1\mp 1}{2}} || \pi_{\frac{1\mp 1}{2}}^{ \Q_0^{\gamma}}) \\
     &+ \E_\Q\left[\int_0^1 \frac{1}{2\gamma}\big|\big|\rvb^{\pm}(t) - \vbb_0^{\pm}(\rvx^\pm(t),t)\big|\big|^2 dt\right],
\end{align}
% \end{linenomath}

% \begin{align}\label{eq:free_energy_1}
%      \KL\big(\Q \big|\big| \Q_0^{\gamma}\big) &= \KL(\pi^{\Q}_0 || \pi_0^{ \Q_0^{\gamma}}) + \E_\Q\left[\int_0^1 \frac{1}{2\gamma}\big|\big|\rvb^{+}(t) - \vbb_0^{+}(\rvx^+(t),t)\big|\big|^2 dt\right].\nonumber \\
%      &= \KL(\pi^{\Q}_1 || \pi_1^{ \Q_0^{\gamma}}) + \E_\Q\left[\int_0^1\frac{1}{2\gamma} \big|\big|\rvb^{-}(t) - \vbb_0^{-}(\rvx^-(t),t)\big|\big|^2 dt\right], \nonumber \\ 
%     %  d\rvx(t) &= \rvb^{+}(t) dt + \sqrt{\gamma} \rvw^+(t),
% \end{align}
%  Similarly $\KL\left(\Q \big|\big| \Q_0^{\gamma}\right)$ can also be decomposed in terms of reversed time diffusion:
% \begin{align}\label{eq:free_energy_2}
%      \KL\left(\Q \big|\big| \Q_0^{\gamma}\right) &= \KL(\pi^{\Q}_1 || \pi_1^{ \Q_0^{\gamma}}) \nonumber\\
%      &+ \E_\Q\left[\int_0^1\frac{1}{2\gamma} \big|\big|\rvb^{-}(t) - \vb_0^{-}(\rvx^-(t),t)\big|\big|^2 dt\right], \nonumber \\ 
%      d\rvx^-(t)& = \rvb^{-}(t) dt + \sqrt{\gamma} \rvw^{-}(t),
% \end{align}
where $\rvx^{-}\!(t)$ is the time reversal of $\rvx^+\!(t)$, and $\rvx^+\!(1)\!\sim\! \pi^{\Q}_1$.
\end{lemma}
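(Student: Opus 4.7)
The plan is to exploit the disintegration of path measures together with Girsanov's theorem, treating the two signs symmetrically via the time-reversal lemma already in hand.

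I would first handle the forward case ($+$). By the chain rule for relative entropy, disintegrate both $\Q$ and $\Q_0^\gamma$ into their time-$0$ marginal and their $\rvx(0)$-conditional path laws:
\begin{equation*}
\KL\left(\Q \big|\big| \Q_0^{\gamma}\right) = \KL\left(\pi_0^{\Q} \big|\big| \pi_0^{\Q_0^{\gamma}}\right) + \E_{\rvx(0)\sim \pi_0^{\Q}}\!\left[\KL\left(\Q_{|\rvx(0)} \big|\big| (\Q_0^{\gamma})_{|\rvx(0)}\right)\right].
\end{equation*}
Conditional on $\rvx^+(0)$, both processes are strong solutions of Itô SDEs with common volatility $\sqrt{\gamma}$ and drifts $\rvb^+(t)$ and $\vbb_0^+(\rvx^+(t),t)$ respectively. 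Under a Novikov-type integrability condition (implied by finiteness of the KL), Girsanov's theorem gives the Radon--Nikodym derivative, and a direct computation of its expected logarithm yields
\begin{equation*}
\KL\left(\Q_{|\rvx(0)} \big|\big| (\Q_0^{\gamma})_{|\rvx(0)}\right) = \E_{\Q}\!\left[\int_0^1 \frac{1}{2\gamma} \big\| \rvb^+(t) - \vbb_0^+(\rvx^+(t),t)\big\|^2 dt \,\bigg|\, \rvx(0)\right].
\end{equation*}
Substituting back and using the tower property delivers the $+$ version of the claim with the marginal term at time $\tfrac{1-1}{2}=0$.

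For the reverse case ($-$), I would apply the previous time-reversal lemma to both $\Q$ and $\Q_0^\gamma$. The reversed processes $\rvx^-(t)=\rvx^+(1-t)$ are themselves Itô diffusions with drifts $\rvb^-(t)$ and $\vbb_0^-(\rvx^-(t),t)$ and common volatility $\sqrt{\gamma}$, now adapted to the reverse filtration. Crucially, KL divergence is invariant under the measurable bijection on path space $\omega \mapsto \omega(1-\cdot)$, so
\begin{equation*}
\KL\left(\Q \big|\big| \Q_0^{\gamma}\right) = \KL\left(\Q^{-} \big|\big| (\Q_0^{\gamma})^{-}\right),
\end{equation*}
where $\Q^-$ denotes the law of $\rvx^-$. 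Now repeating the disintegration argument but conditioning on $\rvx^-(0)=\rvx^+(1)$ and applying Girsanov to the reverse-time SDEs yields the $-$ version of the decomposition, with the marginal term at time $\tfrac{1+1}{2}=1$.

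The main obstacle is the reverse case, specifically justifying that Girsanov applies to the time-reversed diffusions with the same quadratic-form expression in $\rvb^-$ and $\vbb_0^-$. This requires that the reversed drifts be well-defined (they involve the score $\nabla_\vx \ln p(\rvx,t)$ via Nelson's duality), that the marginals $p(\rvx,t)$ of $\Q$ be smooth enough for the reversal lemma to hold, and that the corresponding Novikov condition remains finite; all of these follow from the hypothesis that $\KL(\Q\|\Q_0^\gamma)<\infty$, which forces $\Q$ to be an Itô SDE with volatility $\gamma$ as noted in the background. Once these regularity issues are acknowledged, the rest is bookkeeping of the chain rule and Girsanov's formula.
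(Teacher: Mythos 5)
Your proposal is correct and follows exactly the route the paper indicates: disintegration of the path measures (chain rule for relative entropy) followed by Girsanov's theorem, with the reverse-time case handled via invariance of KL under the time-reversal bijection on path space together with Nelson's duality for the reversed drifts. The paper itself only sketches this (deferring details to Pavon--Wakolbinger), so your write-up is if anything more explicit than the original, and your flagged regularity caveats for the reversed diffusion are the right ones.
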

\begin{proof}
This theorem follows by a direct application of the disintegration theorem (Appendix \ref{app:desint}) followed by Girsanov's theorem. A detailed proof can be found in \cite{pavon1991free}.
\end{proof}
% See proof in Appendix \ref{app:control}. 
Using the above decompositions, we can solve the SBP by minimising either decomposition in \eqref{eq:free_energy_11} over the space of random processes $\rvb^{\pm}(t)$ that satisfy a valid Itô SDE drift.
The backwards and forwards objectives are respectively: 
\begin{linenomath}
\begin{align} \label{eq:controlled_bridge_forward}
    \min_{\Q \in \calD(\pi_0, \pi_1)} \KL\big(\Q \big|\big|& \Q_0^{\gamma}\big) 
    \\=  \min_{\rvb^{\pm} \in \calB }  \E_\Q\Big[\int_0^1 \frac{1}{2\gamma}\big|\big|\rvb^{\pm}(t)  &- \vbb_0^{\pm}(\rvx^{\pm}(t),t)\big|\big|^2 dt\Big], \nonumber \\
    \text{s.t.}\;\;\; d\rvx^{\pm}(t) = \rvb^{\pm}(t) dt + \sqrt{\gamma}& \rvw^{\pm}(t),  \nonumber
    \\ \; \rvx^{+}\!(0) \!\sim\! \pi_{0}, \;\; \rvx^+\!(1) \!\sim\! \pi_1, \;\; \rvx^-\!(1)\!&\sim\!\pi_0, \;\; \rvx^-\!(0)\!\sim\!\pi_1. \nonumber %\nonumber% \\ 
    %  & \rvx^-(1) \sim \pi_1 , \;\; \rvx^-(0) \sim \pi_0.
\end{align}
\end{linenomath}
% \item Forward Objective:
% \begin{align} \label{eq:controlled_bridge_backward}
%     &\min_{\Q \in \calD(\pi_0, \pi_1)} \KL\big(\Q \big|\big| \Q_0^{\gamma}\big) = \nonumber\\
%     &\min_{\rvb^{-} \in \calB }  \E_\Q\left[\int_0^1 \frac{1}{2\gamma}\big|\big|\rvb^{-}(t)  - \vb_0^{-}(\rvx^-(t),t)\big|\big|^2 dt\right], \nonumber \\
% s.t.\;\;\; &d\rvx^-(t) = \rvb^{-}(t) t + \sqrt{\gamma} \rvw^{-}(t), \nonumber\\ 
%   & \rvx^-(1) \sim \pi_1 , \;\; \rvx^-(0) \sim \pi_0.
% \end{align}
% \end{itemize}

While we do not directly use the stochastic control formulations, the drift based formulation serves as inspiration for our iterative scheme. Specifically, the existence and parametrisation of an optimal drift as shown in \cite{pavon1991free,pavon2018data}. 
\begin{lemma}\label{lemma:pavon_opt}
\cite{pavon1991free} The objectives in Eq \eqref{eq:controlled_bridge_forward} 
% \begin{align} \label{eq:controlled_bridge_forward1}
%     &\min_{\rvb^{+} \in \calB }  \E_\Q\left[\int_0^1 \frac{1}{2\gamma}\big|\big|\rvb^{+}(t)  - \vb_0^{+}(\rvx(t),t)\big|\big|^2 dt\right], \nonumber \\
%     \text{s.t.}\;\;\; &d\rvx(t) = \rvb^{+}(t) dt + \sqrt{\gamma} \rvw^{+}(t), \nonumber \\ 
%     &\rvx^+(0) \sim \pi_0, \;\; \rvx^+(1) \sim \pi_1.
% \end{align}
have optimal drifts:
\begin{linenomath}
\begin{equation}
    \rvb^{+*}(t) \!= \gamma \nabla_{\vx}  \phi(\rvx^+(t), t),\;\rvb^{-*}(t) \!= \gamma \nabla_{\vx}  \hat\phi(\rvx^-\!(t), t)
    \label{eq:feedback_parametrisation}
\end{equation}
\end{linenomath}
where the potentials $\phi, \hat{\phi}$ solve the Schrödinger system \cite{pavon1991free}.
\end{lemma}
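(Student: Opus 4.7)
The plan is to follow the dynamic programming plus Hopf--Cole linearization strategy of \cite{pavon1991free}. For the forward direction of \eqref{eq:controlled_bridge_forward}, I would first relax the terminal marginal constraint $\rvx^+(1)\sim\pi_1$ by adding a Lagrange-multiplier term to the cost, producing an unconstrained stochastic control problem with running cost $L(b,x,t)=\tfrac{1}{2\gamma}\|b-\vbb_0^+(x,t)\|^2$ and a terminal penalty. Letting $V(x,t)$ denote the value function conditional on $\rvx^+(t)=x$, the dynamic programming principle yields the HJB equation
$$\partial_t V + \inf_{b}\Big\{\tfrac{1}{2\gamma}\|b-\vbb_0^+\|^2 + b\cdot\nabla V\Big\} + \tfrac{\gamma}{2}\Delta V = 0,$$
and pointwise minimization in $b$ gives the optimal feedback $b^{+*}(x,t)=\vbb_0^+(x,t)-\gamma\nabla V(x,t)$, so the optimum already differs from the prior drift by a gradient.

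Substituting this optimizer back yields the nonlinear Hamilton--Jacobi PDE $\partial_t V + \vbb_0^+\cdot\nabla V - \tfrac{\gamma}{2}\|\nabla V\|^2 + \tfrac{\gamma}{2}\Delta V = 0$. The logarithmic (Hopf--Cole) substitution $V=-\log\phi$ cancels the quadratic term and reduces this to the linear backward Kolmogorov equation $\partial_t\phi + \vbb_0^+\cdot\nabla\phi + \tfrac{\gamma}{2}\Delta\phi = 0$ for the prior generator. The optimal drift then takes the form $b^{+*}=\vbb_0^+ + \gamma\nabla\log\phi$, which matches the stated gradient expression once the symbol $\phi$ in the lemma is identified with this log-potential. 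Repeating the argument using the time-reversed decomposition in \eqref{eq:free_energy_11} and the Nelson-dual drift $\vbb_0^-$ produces a companion log-potential $\hat\phi$ satisfying the adjoint (Fokker--Planck-type) equation of the prior generator; the pair $(\phi,\hat\phi)$ coupled through the marginals $\pi_0,\pi_1$ is precisely the Schrödinger system.

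The main obstacle will be rigorous verification rather than the formal calculation. One must establish that $V$ is $C^{1,2}$ so that Itô's lemma and a classical verification theorem apply, that the candidate feedback is admissible in the sense that the induced SDE is well-posed and its law lies in $\calD(\pi_0,\pi_1)$, and that the terminal Lagrange multiplier translates into the Schrödinger coupling $\phi(x,t)\hat\phi(x,t)=p^{\Q^*}(x,t)$ that simultaneously pins down both marginals. Existence and regularity of $(\phi,\hat\phi)$ under reasonable assumptions on $\pi_0,\pi_1,\vbb_0$ is itself a nontrivial analytic issue; these pieces, together with a disintegration- and Girsanov-based alternative derivation that sidesteps the HJB route entirely, are worked out in detail in \cite{pavon1991free}.
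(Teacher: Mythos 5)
The paper never proves this lemma itself --- it is imported wholesale from \cite{pavon1991free} --- so there is no in-paper argument to match against. Your HJB/Hopf--Cole sketch is the standard derivation in that reference and is formally correct: the pointwise minimisation giving $b^{+*}=\vbb_0^+-\gamma\nabla V$, the cancellation of the quadratic term under $V=-\log\varphi$, and the resulting linear backward Kolmogorov equation for $\varphi$ all check out, as does the time-reversed companion argument producing $\hat\varphi$ and the coupling $\varphi\hat\varphi=p^{\Q^*}$. The closest material in the paper is Appendix B, which reaches the potentials by an entirely different route: it works with the \emph{static} bridge, introduces Lagrange multipliers for the two marginal constraints, and reads off $q^*(\vx,\vy)=\hat\phi(\vx)\,p^{\Q^\gamma}(\vy|\vx)\,\phi(\vy)$ together with the Schrödinger system, without ever touching a value function. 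The static route gets the potentials and their defining system cheaply but says nothing directly about the optimal drift; your dynamic-programming route delivers the drift formula as the main output, at the cost of the regularity and verification issues you correctly flag. The two are complementary, and combining them (static duality for existence of $(\phi,\hat\phi)$, HJB for the feedback form) is essentially what the cited source does.

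One point you should make explicit rather than leave implicit: for a general prior with nonzero drift, your derivation yields $b^{+*}=\vbb_0^{+}+\gamma\nabla\log\varphi$, whereas the lemma as printed reads $\rvb^{+*}=\gamma\nabla_{\vx}\phi$ with no prior-drift term. This literally matches only in the classical Wiener-prior case $\vbb_0^+=0$ (with $\phi$ denoting the log-potential), or after redefining $\phi$ to absorb both the logarithm and the prior drift. Since the whole point of the paper's generalisation is to allow $\vbb_0^+\neq 0$, you should state which convention reconciles your formula with the lemma's, not just note the log identification; otherwise a reader will conclude your answer and the stated one disagree by the term $\vbb_0^+$.
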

% Note that via Nelson's duality relation (Equation \ref{eq:nelson}) a direct corollary of Lemma \ref{lemma:pavon_opt} is that the optimal backwards drift takes the form $ \rvb^{-*}(t) =-\gamma \nabla_{\vx}  \phi(\rvx^-(t), t) + \gamma \nabla_{\vx} \ln p(\rvx^-(t), t)$. 
This Lemma is key in formulating our ML approach to IPFP since it justifies our parametrisation of the drift in terms of a deterministic function $\vb^{\pm*}$ i.e. $ \rvb^{\pm*}(t) = \vb^{\pm*}(\rvx^{\pm}(t), t)$. For a brief introduction to the Schrödinger system and potentials see Appendix \ref{appdx:system}.

\begin{figure} %{0.45\textwidth}
         \centering
         \includegraphics[width=0.7\columnwidth]{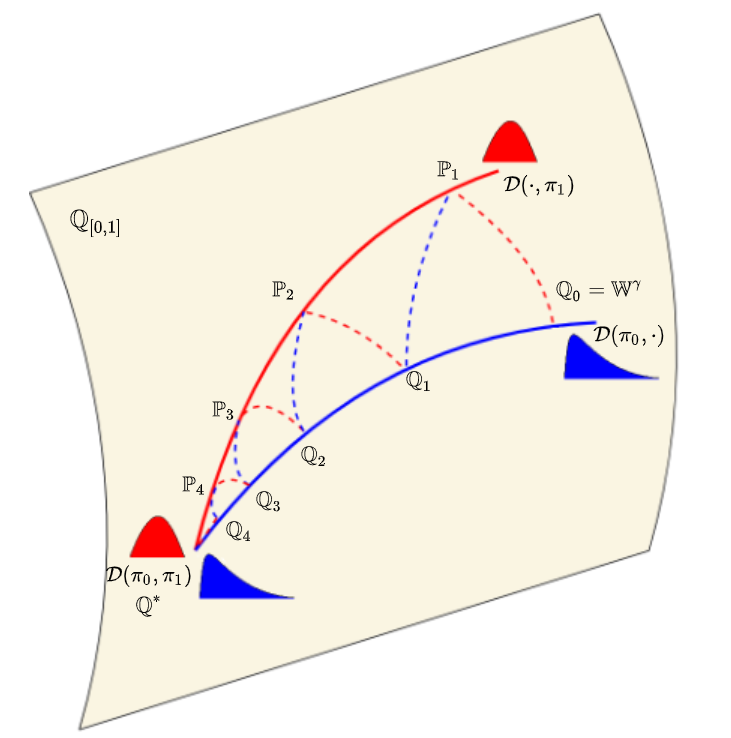}
         
    %  \end{minipage}
     \caption{ Illustration of the convergence of the IPFP Algorithm} \label{fig:IPFP}
\end{figure}
 %\begin{paracol}{2}
%\linenumbers
%\switchcolumn

\subsection{Iterative Proportional Fitting Procedure}\label{IPFP}

We now have two boundary value constrained diffusion processes solved through a stochastic control formulation. We use the iterative proportional fitting procedure (IPFP) to alternate between the forward and backward formulation with only one initial value constraint enforced at a time (see Figure \ref{fig:IPFP}), such that convergence to the full boundary value problem is guaranteed.

% We use an iterative proportional fitting procedure (IPFP) to find that consistent solution. \citep[see][]{cramer2000probability, bernton2019schr}
The measure theoretic version of IPFP we introduce is an extension of the continuous IPFP, initially proposed by \cite{kullback1968probability} to a more general setting over general probability measures (See \cite{cramer2000probability,bernton2019schr}). The convergence of IPFP has been shown in \cite{ruschendorf1995convergence} and further extensions and results have been presented in \cite{cramer2000probability,bernton2019schr}. The idea behind this family of approaches is to alternate minimising KL between the two marginal distribution constraints, 
\begin{linenomath}
\begin{equation*}
    \P_i^{*} \!= \!\arginf_{\P \in \calD( \cdot, \pi_1)} \KL  (\P|| \Q^{*}_{i-1}),\;\Q_i^{*} \! =\! \arginf_{\Q \in \calD(\pi_0, \cdot)} \KL  (\Q||  \P^{*}_{i}),
\end{equation*}
\end{linenomath}
% \begin{algorithm} \label{alg:gipfp}
% \SetKwInOut{Input}{input}
% \Input{$\pi_0(\vx), \pi_1(\vy), \Q^{\gamma}_0$}
% $\Q^{*}_0 =\Q^{\gamma}_0$\\
% $i=0$ \\
%     \Repeat{convergence}{
%       $i := i + 1$ \\
%      $ \P_i^{*} = \arginf_{\P \in \calD( \cdot, \pi_1)} \KL  (\P|| \Q^{*}_{i-1})$\\ 
%      $ \Q_i^{*}  = \arginf_{\Q \in \calD(\pi_0, \cdot)} \KL  (\Q||  \P^{*}_{i})$
%     }
% \Return{$\Q_i^{*}, \P_i^{*}$}
% \caption{Iterative Proportional Fitting (IPFP) \cite{cramer2000probability}, for the traditional SBP with Brownian prior set $\Q^{\gamma}_0:=\W^{\gamma}$. }
% \end{algorithm}
until convergence. The quantities ${\P}^{*}_i$ and  ${\Q}^{*}_i$ in Algorithm \ref{alg:gipfp} are known as half bridges \cite{pavon2018data}  and can be expressed in closed form in terms of known quantities,
% \begin{linenomath}
\begin{align}\label{eq:half_bridge_forward}
    {\P}^{*}_i\left( A_{[0,1)} \times A_1\right) &=  \int_{ A_{[0,1)} \times A_1}  \frac{d \pi_1}{ d p_1^{\Q^{*}_{i-1}}} d\Q^{*}_{i-1}, \\ 
    {\Q}^{*}_i\left(A_0 \times A_{(0,1]}\right) &=  \int_{A_0\times A_{(0,1]}} \frac{d \pi_0}{ dp_0^{\P^{*}_i}} d\P^{*}_i,
\end{align}
% \end{linenomath}
where $p_1^{\Q^{*}_{i-1}}$ and $p_0^{\P^{*}_i}$ are the marginals of $\Q^{*}_{i-1}$ and $\P^{*}_i$ at times $1$ and $0$ respectively. While a closed form expression to estimate half-bridges is known, its components are usually not available in closed form and require approximations. Note that the time reversed formulations \ref{control} and a simpler variant of Lemma \ref{lemma:pavon_opt} also apply to the half bridge problems. Applying the disintegration theorem (see Appendix \ref{app:desint}) we can reduce the IPFP introduced to the more popular instance of IPFP proposed by  \cite{kullback1968probability}. Furthermore for the case of discrete measures this algorithm reduces to the  Sinkhorn-Knopp algorithm \cite{sinkhorn1967concerning}.
% , by noting that $\Q^*_i(\cdot |\vx,\vy) =\P^*_i(\cdot |\vx,\vy) = \W(\cdot |\vx,\vy)$ remains invariant across iterations.
\section{Methodology}

In this section we introduce IPML by providing the theoretical foundations of our algorithm and convergence guarantees. Then, we propose a practical implementation of IPML based on a Bayesian non-parametric model (GP). These components allow us to solve the general SBP problem.

% To implement in practice, we need to introduce a probabilistic model for the drift terms parametrising the dual diffusion processes.
% The outline of this section is as follows:

% \begin{itemize}
%     \item First we will discuss how to rephrase the half bridge solution as a Maximum Likelihood Estimation problem for the drift. Furthermore we will provide convergence guarantees of our MLE rephrasing the true Half Bridge solution.
%     \item We will then discuss how this new formulation feeds back into the IPFP framework.
%     \item Finally we propose a practical method based on Gaussian Processes for estimating the drift.
% \end{itemize}

\subsection{Approximate Half Bridge Solving as Optimal Drift Estimation}

We present a novel approach to approximately solve the empirical Schrödinger bridge problem by exploiting the closed form expressions of the half bridge problem.  Rather than parametrising the measures in the half bridge and solving the optimisation numerically, we seek to directly approximate the measure that extremises the half bridge objective. We do this by using Gaussian processes \cite{williams2006gaussian} to estimate the drift of the trajectories sampled from the optimal half bridge measure.

We start from the following observation, which tells us how to sample from the optimal half bridge distribution (see  Appendix \ref{app:half} for proof).
\begin{observation} \label{obs:half}
We can parametrise a measure $\Q$ with its drift as  either of the SDEs,
\begin{linenomath}
\begin{equation*}
    d\rvx^{\pm}(t) =  \rvb^{\pm}(t) + \sqrt{\gamma} d \rvw^{\pm}(t) ,\;\;\rvx^{\pm}(0) \sim \pi_{0,1}^\Q. %,\:\rvx^-(0) \sim \pi_1^\Q.
\end{equation*}
\end{linenomath}
Then, we can sample from the solution to the half bridges,
\begin{linenomath}
\begin{equation*}
   \P^{*-}  \!\!= \!\!\arginf_{\P \in \calD(\cdot,\pi_1)} \!\!\KL(\P || \Q),\; \P^{*+}  \!\!= \!\!\arginf_{\P \in \calD(\pi_0,\cdot )} \!\!\KL(\P || \Q),
\end{equation*}
\end{linenomath}
via simulating trajectories (e.g.\ using the Euler-Maruyama (EM) method) following the SDEs
\begin{linenomath}
\begin{equation*}
     d\rvx^{\pm}(t) =   \rvb^{\pm}(t) + \sqrt{\gamma} d \rvw^{\pm}(t) , \;\;  \rvx^{\pm}(0) \sim \pi_{0,1}. %\\
    %  d\rvx^{+}(t) &=  \rvb^{+}(t) + \sqrt{\gamma} d \rvw^{+}(t) , \quad  \rvx(0) \sim \pi_0.
\end{equation*}
\end{linenomath}
Solutions to the above SDEs are distributed according to $\P^{*-}$ and $\P^{*+}$ respectively.
\end{observation}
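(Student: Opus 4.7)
The plan is to invoke the disintegration theorem from Appendix \ref{app:desint} at the constrained endpoint, apply the chain rule for KL divergence, and then read off that the optimal half-bridge measure shares the conditional law of $\Q$ but with a corrected marginal. Since the conditional law of an Itô diffusion given a fixed starting point is itself governed by the same SDE initialised at that point, the conclusion is that only the initial distribution of the simulation has to change; the drift $\rvb^{\pm}$ and the diffusion coefficient $\sqrt{\gamma}$ stay fixed. This matches the sampling prescription in the statement.

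For the forward half bridge $\P^{*+} = \arginf_{\P \in \calD(\pi_0,\cdot)} \KL(\P||\Q)$, I would write the path measures as
\begin{equation*}
\Q = \pi_0^{\Q} \otimes \Q_{(0,1]\,|\,0}, \qquad \P = \pi_0 \otimes \P_{(0,1]\,|\,0},
\end{equation*}
where the conditionals are regular conditional probabilities on $C([0,1];\R^d)$ guaranteed by disintegration. The chain rule for KL then gives
\begin{equation*}
\KL(\P||\Q) = \KL(\pi_0 \,||\, \pi_0^{\Q}) + \E_{x_0 \sim \pi_0}\!\big[\KL\big(\P_{(0,1]\,|\,0}(\cdot\,|\,x_0) \,\big|\big|\, \Q_{(0,1]\,|\,0}(\cdot\,|\,x_0)\big)\big].
\end{equation*}
The first summand is constant on the constraint set $\calD(\pi_0,\cdot)$, while the second is non-negative and vanishes iff the path conditionals agree $\pi_0$-almost surely. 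By the strong Markov property and uniqueness of strong solutions, the conditional law of $\Q$ given $\rvx(0) = x_0$ is exactly the law of $d\rvx^{+}(t) = \rvb^{+}(t)\,dt + \sqrt{\gamma}\,d\rvw^{+}(t)$ started at $x_0$. Gluing this back with $\pi_0$ therefore describes the path measure produced by sampling $x_0 \sim \pi_0$ and then running the SDE — which is precisely what Euler--Maruyama simulates.

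The reverse half bridge $\P^{*-}$ is handled symmetrically by disintegrating $\Q$ at time $1$ and invoking the time-reversal lemma to identify the conditional given $\rvx(1) = x_1$ with the law of the reverse SDE $d\rvx^{-}(t) = \rvb^{-}(t)\,dt + \sqrt{\gamma}\,d\rvw^{-}(t)$ initialised at $x_1$. Replacing the endpoint marginal by $\pi_1$ through the same chain-rule argument then yields $\P^{*-}$. As a consistency check, the resulting expression for $\P^{*+}$ coincides with the closed form $d\P^{*}_i/d\Q^{*}_{i-1} = d\pi_1/dp_1^{\Q^{*}_{i-1}}$ in Equation \eqref{eq:half_bridge_forward} (up to relabelling of endpoints) via Bayes's rule applied to the disintegration.

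The main technical obstacle is making the path-space disintegration and the KL chain rule rigorous, i.e.\ ensuring that the regular conditional distributions exist as Borel measures on the canonical space of continuous trajectories and that the resulting conditional KL is measurable in the conditioning variable; this is exactly what Appendix \ref{app:desint} supplies. A secondary subtlety is absolute continuity: for the argument to be informative we need $\pi_0 \ll \pi_0^{\Q}$ (respectively $\pi_1 \ll \pi_1^{\Q}$) so that $\KL(\P||\Q)$ is finite on the constraint set, which mirrors the admissibility assumption already implicit in Lemma \ref{lemma:pavon_opt}.
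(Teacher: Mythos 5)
Your proposal is correct and follows essentially the same route as the paper's own proof: disintegrate at the constrained endpoint, apply the KL chain rule, kill the conditional term by matching the path conditionals of $\Q$, and swap in the prescribed marginal. The extra details you supply (absolute continuity $\pi_0 \ll \pi_0^{\Q}$, identification of the conditional law with the SDE started at a point, and the consistency check against Equation \ref{eq:half_bridge_forward}) are welcome refinements of the same argument rather than a different approach.
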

% \begin{proof}See Appendix \ref{app:half}.
% \end{proof}
Intuitively, we are performing a cut-and-paste-styled operation by cutting the dynamics of the shortened (unconstrained) time interval and pasting the constraint to it at the corresponding boundary.
\begin{theorem}\label{theorem:mle-rev}
(Consistency of Reverse-MLE Formulations) Let $\left\{ \big(\vx^{(n)+}_{t_k} \big)_{k=0}^T \right\}_{n=0}^{N}$ be sampled/discretised trajectories from the SDE that represents the half bridge measure $\P^+$:
% \begin{linenomath}
\begin{equation*}
     d\rvx^{+}(t) =  \vb^{+}_0(\rvx^{+}(t),t) + \sqrt{\gamma} d \rvw^{+}(t) , \quad  \rvx(0) \sim \pi_0.
\end{equation*}
%\begin{align*}
%\end{align*}
% \end{linenomath}
Then carrying out maximum likelihood estimation of the time reversed drift ${\vb_0}^{-}$ on time reversed samples:
% \begin{linenomath}
\begin{align}\label{eq:mle}
   \prod_n p&\left( \big(\vx^{(n)+}_{t_k}\big)_{k=0}^T  \Big| \vb^{-}\right) \propto\\ \nonumber
    &\prod_n p(\rvx_{t_T}^{(n)+})\prod_{k=0}^T \gN \left( \vx^{(n)+}_{t_k- \Delta t} \Big| \vmu^{(n)+}_{t_k- \Delta t} , \gamma \Delta t\right),
\end{align}
where,
\begin{align}
    &\vmu^{(n)+}_{t_k- \Delta t} = \vx^{(n)+}_{t_k } -  \Delta t\vb^{-}\left(\vx^{(n)+}_{t_k}, 1-t_k\right) ,
\end{align}
% \end{linenomath}
% where,
% \begin{align}
%     &\vmu^{(n)+}_{t_k- \Delta t} = \vx^{(n)+}_{t_k } -  \Delta t\vb^{-}\left(\vx^{(n)+}_{t_k}, t_k - \Delta t\right) 
% \end{align}
and $\vb^{-}$ is the drift of our estimator SDE:
\begin{linenomath}
\begin{equation}
    d\rvx^{-}(t) =  \vb^{-}(\rvx^{-}(t), t) + \sqrt{\gamma} d \rvw^{-}(t) , \quad  \rvx^{-}(t)(0) \sim p,
\end{equation}
\end{linenomath}
converges in probability to the to the true dual drift $\vb^{-}_0(\vx,t)=\vb^{+}_0(\vx,t) -\gamma \nabla_{\vx} \ln p(\vx, t)$ where $\vb^{+}_0(x, t)$ is the optimal half bridge drift for $\P^{+}$ as $N \rightarrow \infty, \Delta t \rightarrow 0$ (see Appendix \ref{app:cons} for proof).
\end{theorem}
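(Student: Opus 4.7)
The strategy is to reinterpret the MLE objective as a discretised relative entropy between the empirical path measure of $\P^+$ (read backwards in time) and the candidate reverse diffusion with drift $\vb^-$. By Girsanov's theorem this relative entropy is a squared--drift-difference functional whose unique minimiser is the true reverse drift, which by Nelson's duality (Equation \ref{eq:nelson}) equals $\vb_0^+-\gamma\nabla_{\vx}\ln p$. Standard M-estimator consistency arguments then turn the population minimiser into convergence in probability of the MLE.

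\noindent\textbf{Concrete steps.} First I would take $-\frac{1}{N}\log$ of the product in \eqref{eq:mle}, discard the terms not depending on $\vb^-$, and rewrite the remainder as
\begin{equation*}
\hat{J}_N(\vb^-) \;=\; \frac{1}{2\gamma N}\sum_{n=1}^{N}\sum_{k=0}^{T}\frac{1}{\Delta t}\left\|\vx^{(n)+}_{t_k-\Delta t}-\vx^{(n)+}_{t_k}+\Delta t\,\vb^-(\vx^{(n)+}_{t_k},1-t_k)\right\|^{2}.
\end{equation*}
Because the $N$ trajectories are i.i.d.\ samples from $\P^+$, the law of large numbers gives pointwise convergence $\hat{J}_N(\vb^-)\to J_{\Delta t}(\vb^-)$ as $N\to\infty$, where $J_{\Delta t}$ replaces the empirical average by the $\P^+$-expectation. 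To take $\Delta t\to 0$ I would invoke the Nelson lemma in the excerpt: read backwards, the same path obeys the reverse SDE with drift $\vb_0^-$, so the backward increment expands as
\begin{equation*}
\vx_{t_k-\Delta t}-\vx_{t_k} \;=\; \Delta t\,\vb_0^-(\vx_{t_k},1-t_k) + \sqrt{\gamma\Delta t}\,\rvepsilon_k + o(\sqrt{\Delta t}),
\end{equation*}
with $\rvepsilon_k$ an i.i.d.\ standard Gaussian. Expanding the square, cancelling terms independent of $\vb^-$, and passing to the Riemann-sum limit yields
\begin{equation*}
J_\infty(\vb^-) \;=\; \frac{1}{2\gamma}\,\E_{\P^+}\!\left[\int_0^1\left\|\vb^-(\rvx(s),s)-\vb_0^-(\rvx(s),s)\right\|^{2}ds\right]+C,
\end{equation*}
which is precisely the Girsanov form (cf.\ the lemma in Section \ref{control}) of the KL between two $\gamma$-diffusions and is minimised $\P^+$-a.s.\ only by $\vb^-=\vb_0^-$. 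Nelson's duality then identifies this with $\vb_0^+-\gamma\nabla_{\vx}\ln p$.

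\noindent\textbf{Main obstacle.} Identifying the minimiser is the easy part; the delicate step is the consistency conclusion itself, since pointwise convergence $\hat{J}_N\to J_\infty$ does not on its own imply that $\argmin\hat{J}_N$ converges in probability to $\vb_0^-$. I would therefore restrict to a compact, equicontinuous class of admissible drifts (natural when $\vb^-$ is a GP posterior mean with a bounded kernel), establish a Glivenko--Cantelli-type uniform law of large numbers for the quadratic family $\{\vb^-\mapsto\hat{J}_N(\vb^-)\}$, and control the discretisation bias $|J_{\Delta t}-J_\infty|$ uniformly via Lipschitz regularity of $\vb_0^\pm$ together with strong-order-$\tfrac{1}{2}$ Euler--Maruyama error bounds so that the two limits $N\to\infty$ and $\Delta t\to 0$ may be coupled. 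The most technical bookkeeping is that the $\vb^-$-independent term $\tfrac{1}{\Delta t}\|\vx_{t_k-\Delta t}-\vx_{t_k}\|^2$ diverges like $\gamma d$ as $\Delta t\to 0$ and must be isolated cleanly before taking the limit; this is the same accounting that underlies derivations of Girsanov's formula from Euler discretisation, and I expect it to be the main source of technical work.
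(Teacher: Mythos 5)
Your plan is essentially the paper's own argument: cancel the $\vb^-$-independent (divergent) terms in the discretised log-likelihood, identify the population limit via Girsanov's theorem as a KL divergence between the time-reversed path measure and the candidate reverse diffusion — minimised uniquely at the dual drift given by Nelson's relation — and upgrade pointwise to uniform convergence over a compact drift class to conclude consistency of the M-estimator. The only cosmetic differences are that the paper takes $\Delta t\to 0$ before $N\to\infty$ and implements your "discard the divergent term" step by explicitly subtracting the log-likelihood under the true reverse drift (so the difference converges to the log Radon--Nikodym derivative), but these lead to the same functional and the same conclusion.
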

Note that w.l.o.g. the above result also holds for estimating the forward drift from backward samples. The combination of Observation \ref{obs:half} and Theorem \ref{theorem:mle-rev} constitute one the main contributions of this work as they allow us to solve half bridges with a simple regression objective.  It is important to highlight that in practice for most interesting SDEs we can only sample approximately using consistent schemes, for these settings our proof of Theorem \ref{theorem:mle-rev} does not hold, however we believe it is possible to adapt the result to this case.
% \begin{proof}
% See Appendix \ref{app:cons}.
% \end{proof}
% \begin{wrapfigure}{l}{0.5\textwidth}
% \begin{minipage}{0.5\textwidth}
% \begin{algorithm}[H] \label{alg:gp_gipfp}
% \SetKwInOut{Input}{input}
% \Input{$\pi_0(\vx), \pi_1(\vy), \Q_0^{\gamma}$}
% Initialise: $i := 0$  $\Q^{*}_0 := \Q^{\gamma}_0$\\
% % Obtain or estimate backward drift of prior: \\
% $\bar{\vb}^{-}_{\Q_{0}}(\vx, t) := \text{ObtainBackwardDrift}(\Q^{*}_0)$\\

% \Repeat{convergence}{
%       $i := i + 1$ \\
%       $\left\{ \!\vx^{(m)-}_{[1:T]}\!\right\}_{m} \!\!\!\!:= \! \!\text{SDESolve}\Big( \!\!-\bar{\vb}^{-}_{\Q_{i\text{-}1}},\!\gamma, \pi_1, \Delta t\!\Big)$\\
%      $ \bar{\vb}^{+}_{\P_{i}}\!\!:= \!\text{DriftFit}\Big(\!  \text{Reverse}\!\big( \!\left\{\vx^{(m)-}_{[1:T]} \right\}_{\!m}\! \big),  \frac{\gamma}{\Delta t }  \!\Big)$\\ 
%      $\left\{ \vx^{(n)+}_{[1:T]} \right\}_{n}  \!\!\!:= \text{SDESolve}\Big(\bar{\vb}^{+}_{\P_{i}}, \gamma, \pi_0, \Delta t\Big)$ \\
%      $\bar{\vb}^{-}_{\Q_{i}}\!\!\!:=\text{DriftFit}\Big(\!\text{Reverse}\big( \left\{ \!\vx^{(n)+}_{[1:T]}  \right\}_{\!n} \!\big), \frac{\gamma}{\Delta t } \!\Big)$
%     }
% \Return{$\bar{\vb}^{-}_{\Q_{i}}, \bar{\vb}^{+}_{\P_{i}}$}
% \caption{IPML}
% \end{algorithm}
% \end{minipage}
% \end{wrapfigure}

\subsection{On the Need for Time Reversal}\label{need_time_rev}
We have already provided the intuition as to how time reversal enables the exchange of initial value constraints for final value constraints. In what follows we give a more detailed technical review of this important simplification.
 % At first it may seem unnecessary as to why we consider the reverse formulation of the half bridges. We will now present a short motivation of why this is a useful parametrisation.
Consider the first half bridge problem for a given IPFP iteration, $\P^{*-} = \arginf_{\P \in \calD(\cdot,\pi_1)}\KL(\P || \Q)$.
When formulated directly in terms of the forward drift and the forward diffusion it reduces to a stochastic control problem which is subject to dynamics,
\begin{linenomath}
\begin{equation*}
   d\rvx^{+}(t) =   \rvb^{+}(t) + \sqrt{\gamma} d \rvw^{+}(t) , \quad  \rvx(1) \sim \pi_1,
\end{equation*}
\end{linenomath}
which is a forward SDE with a terminal hitting condition. Unlike a forward SDE with an initial value problem the above SDE is not trivial to sample from, one approach would be to solve the backwards Kolmogorov equation subject to the terminal condition. However, this approach requires (1) solving a parabolic PDE that typically involves mesh based methods that do not scale well in high dimensions and (2) carrying out density estimation on the samples from $\pi_1$, also problematic in high dimensions. However, if we consider the time reversed process, the terminal condition becomes an initial value problem, $d\rvx^{-}(t) =   \rvb^{-}(t) + \sqrt{\gamma} d \rvw^{-}(t) , \quad  \rvx(0) \sim \pi_1,$ for which it is easy to sample consistent trajectories via the Euler-Maruyama (EM) discretisation without mesh based methods or additional density estimations. 
%\end{paracol}

\begin{figure}
     \hspace*{0.13cm}
         \centering
          \includegraphics[width=0.85\columnwidth]{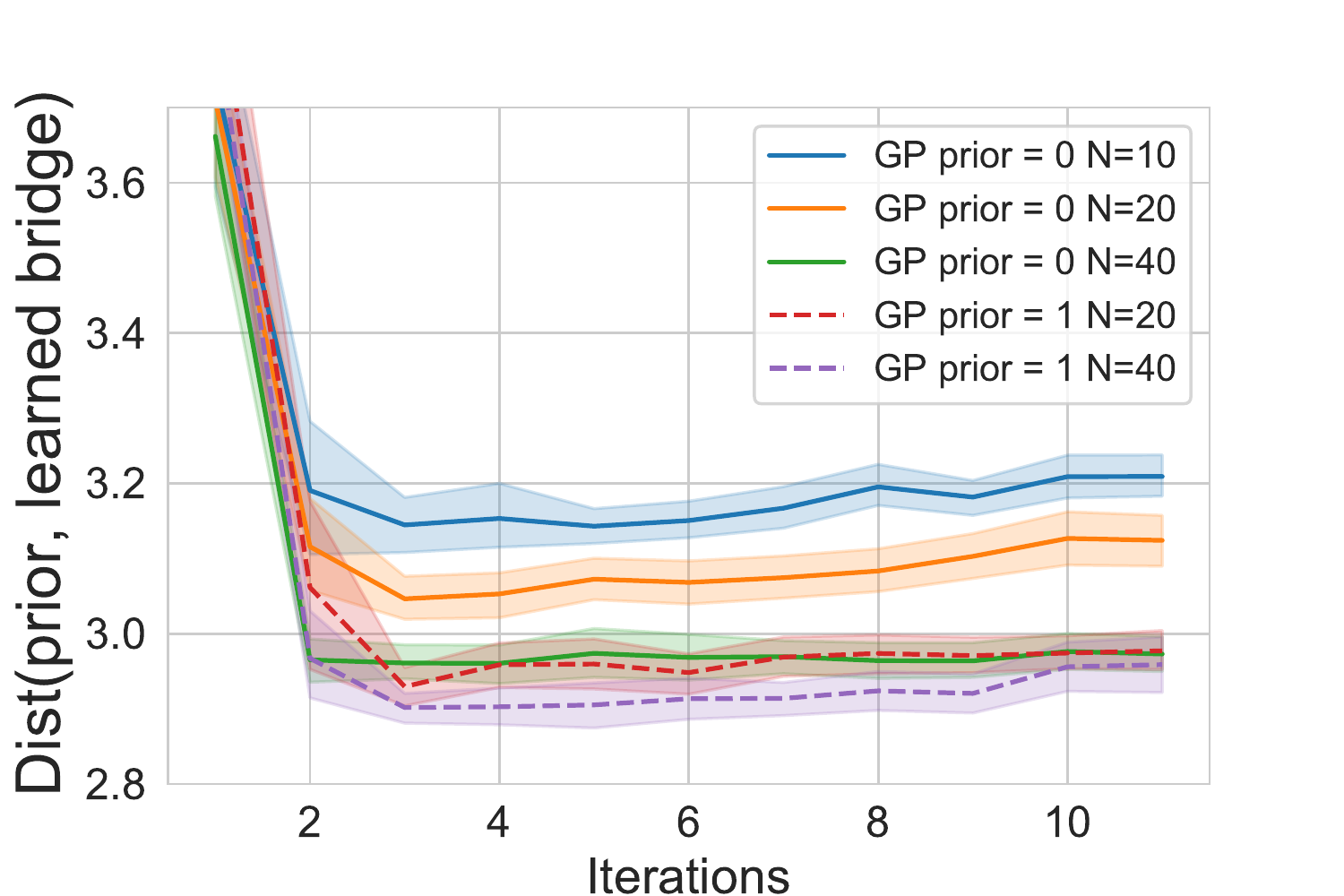}
         \caption{Distance between the prior and the learned bridge as a function of the iterations, number of samples used (N) and whether the prior is used for the GP ($T\!=\!\frac{1}{\Delta t}\!=\!100$).}
         \label{fig:compare_param}
 \end{figure}
 %\begin{paracol}{2}
%\linenumbers
%\switchcolumn
%Forwards and backwards diffusion of learned SBP between unimodal and multimodal boundary distributions (see section \ref{sec:simple} for experimental details).
\subsection{Iterative Proportional Maximum Likelihood (IPML)}

Combining the KL minimisation routines in the original IPFP algorithm with our MLE based drift estimation gives Algorithm \ref{alg:gp_gipfp}. The routine $\text{DriftFit}$ fits a dual drift on the sampled trajectories, and can be parametrised by any function estimation procedure with consistency guarantees. The $\text{SDESolve}$ routine generates $M$ trajectories using the EM method. 

%At each iteration, the computational cost of the algorithm is O(n
\textbf{Drift Estimation with Gaussian Processes}: One can choose any parametric or non parametric model to carry out the $\text{DriftFit}$ routine. In this section we will briefly introduce our implementation. Our $\text{DriftFit}$ routine is based on the work in \citet{papaspiliopoulos2012nonparametric,ruttor2013approximate} that uses GPs to estimate the drift of SDEs from observations. We refer to this routine as $\text{GPDriftFit}$. We can restate the regression problems given by Equation \ref{eq:mle} and its reversed counterpart in the following form:
\begin{align*}
        \frac{\vx^{(n)-}_{t_k} \!\!\!\!\!-\! \vx^{(n)-}_{t_k - \Delta t}}{\Delta t} \!&= \!\vb^{+}\!\left(\!\vx^{(n)-}_{(t_k \!- \Delta t)}, 1-(t_k \!-\! \Delta t)\right) \!+\!\frac{\sqrt{\bm{\gamma}} }{\Delta t}\epsilon ,\\ \frac{\vx^{(m)+}_{t_l \!\!- \Delta t} \!\!- \vx^{(m)+\!\!}_{t_l}}{\Delta t} \! &= \!-\vb^{-}\left(\!\vx^{(m)+}_{t_l}, 1-t_l \right) \!+\!\frac{\sqrt{\bm{\gamma}} }{\Delta t}\epsilon,
\end{align*}
where $\epsilon \sim \calN(\vzero, \I)$. Placing GP priors on the drift functions $\vb^{-} \sim \mathcal{GP} $ and $\vb^{+} \sim \mathcal{GP} $, we arrive at standard multioutput GP formulation.
Following \citet{ruttor2013approximate,batz2018approximate}, we assume the dimensions of the drift function are independent \footnote{Assuming that the drift dimensions are decoupled is the least restrictive assumption as coupling them would impose a form of regularization.} (equivalent to imposing a block-diagonal kernel matrix on a multi-output GP) and thus we fit a separate GP for each dimension. See Appendix \ref{appdx:predmean} for the specification of the predictive means.

Note that the advantage of the Bayesian GP interpretation of this procedure is that the GP posterior under certain conditions is naturally conjugate to the posterior of the SDE drift when modelled with a GP prior \cite{pokern2013posterior,papaspiliopoulos2012nonparametric}. Furthermore, the Bayesian non-parametric formulation allows for the encoding of the prior drift function as the mean function of the GP prior (i.e. $\vb \!\sim \!\mathcal{G}\mathcal{P}(  \vbb_0^{\Q^{\gamma}_0} , k)$). This becomes helpful when trying to sample unlikely paths and evaluating the fitted drift at samples that were not observed by the GP during training, here we want the fitted drift to fall back to the prior rather than to a Brownian motion given by a GP prior with 0 mean.

We now have the relevant ingredients to carry out IPML as specified in Algorithm  \ref{alg:gp_gipfp}. The computational cost of IPML with a Gaussian Process is detailed in Appendix \ref{app:comp}. The majority of the computation is spent fitting the GP in DriftFit at each iteration and scales as a function of the time discretization used as well as the size of $\pi_0$ and $\pi_1$.

\begin{algorithm} \label{alg:gp_gipfp}
\SetKwInOut{Input}{input}
\Input{$\pi_0(\vx), \pi_1(\vy), \Q_0^{\gamma}$}
Initialise: $i := 0$  $\Q^{*}_0 := \Q^{\gamma}_0$\\
% Obtain or estimate backward drift of prior: \\
$\bar{\vb}^{-}_{\Q_{0}}(\vx, t) := \text{ObtainBackwardDrift}(\Q^{*}_0)$\\

\Repeat{convergence}{
      $i := i + 1$ \\
      $\left\{ \!\vx^{(m)-}_{[1:T]}\!\right\}_{m} \!\!\!\!:= \! \!\text{SDESolve}\Big( \!\!-\bar{\vb}^{-}_{\Q_{i\text{-}1}},\!\gamma, \pi_1, \Delta t\!\Big)$\\
     $ \bar{\vb}^{+}_{\P_{i}}\!\!:= \!\text{DriftFit}\Big(\!  \text{Reverse}\!\big( \!\left\{\vx^{(m)-}_{[1:T]} \right\}_{\!m}\! \big),  \frac{\gamma}{\Delta t }  \!\Big)$\\ 
     $\left\{ \vx^{(n)+}_{[1:T]} \right\}_{n}  \!\!\!:= \text{SDESolve}\Big(\bar{\vb}^{+}_{\P_{i}}, \gamma, \pi_0, \Delta t\Big)$ \\
     $\bar{\vb}^{-}_{\Q_{i}}\!\!\!:=\text{DriftFit}\Big(\!\text{Reverse}\big( \left\{ \!\vx^{(n)+}_{[1:T]}  \right\}_{\!n} \!\big), \frac{\gamma}{\Delta t } \!\Big)$
    }
\Return{$\bar{\vb}^{-}_{\Q_{i}}, \bar{\vb}^{+}_{\P_{i}}$}
\caption{IPML}

\end{algorithm}

% \subsubsection{ Deterministic Annealing -- $\gamma$-scaling}

% % Comment: More popularly known as $\epsilon$-scaling. TODO: Add references , justify method, explain how / why its used to speed up convergence, mention intuitive analogy of exploration vs exploitation.

% The $\gamma$-scaling procedure more popularly known as $\epsilon$-scaling is a form of deterministic annealing \cite{kirkpatrick1983optimization,bertsekas1979distributed} applied to IPFP/Sinkhorn based algorithms that models $\gamma^{-1}$ as a temperature and cools it down every few iterations of the IPFP procedure until reaching a desired temperature. While practical success of $\gamma$-scaling for solving entropy regularised optimal transport problems has been demonstrated to a great extent in prior works \cite{schmitzer2019stabilized,feydy2020geometric,feydy2019fast}, theoretical analysis is so far lacking, however \cite{schmitzer2019stabilized} pioneers in this front and presents some of the first stability and potential convergence guarantees for $\gamma$-scaling.

% This procedure is known to accelerate IPFP based algorithms dramatically in practice in scenarios where $\gamma$ is small causing IPFP to converge very slowly and in some cases diverge \cite{feydy2020geometric}.
\section{Related Methodology}

In this section we carry out a conceptual comparison with two pre-existing numerical approaches for solving the static SBP. While our goal is to solve the dynamic SBP, the solution of the static SBP can be used to construct that of the dynamic SBP \cite{pavon2018data}, and this connection is central to our discussion. We would also like to highlight that an algorithm akin to IPML has been proposed concurrently and independently by \cite{de2021diffusion}, the main difference with our algorithm is that they estimate the drifts of the SDEs using neural networks score matching while we use using Gaussian processes and MLE based ideas.

\paragraph{Sinkhorn-Knop  Algorithms:} Within the machine learning community the static SBP with a Brownian motion prior is popularly known as entropic optimal transport \cite{cuturi2013sinkhorn}. In this formulation there are no trajectories and the empirical distributions are treated as discrete measures $[\bm{\pi_{0}}]_{j}\!\!=\!\frac{1}{N}$ and $[\bm{\pi_{1}}]_{i}\!\!=\! \frac{1}{M}$ thus the objective is given by $\min_{Q \in \calD(\bm{\pi_0}, \bm{\pi_1})} \langle Q, C^{\Q_0^\gamma}  \rangle + \gamma h(Q)$, where $h$ is the discrete entropy, $\langle \cdot,\cdot\rangle$ computes the dot product between two matrices. The cost matrix $C$ corresponds to the log transition density induced by the prior SDE:
%\begin{align*} 
    $C^{\Q_0^\gamma}_{ij} = \ln p^{\Q_0^{\gamma}}(\vy_i | \vx_j)$
%\end{align*}
which in the case of a Brownian motion prior reduces to a scaled Euclidean distance.

Once the problem has been discretized  as described, the Sinkhorn-Knopp algorithm \cite{sinkhorn1967concerning} can be applied directly to fit an optimal discrete transport map (and discrete SBP potentials $\phi, \hat{\phi}$) between the two distributions. Recent work \cite{chizat2020faster} has showed empirical success in forming a continuous approximation of the SBP potentials using the logsumexp formula\cite{chizat2020faster}, however, it still remains to formally analyse the accuracy of the logsumexp potentials.

Estimating the cost matrix $C^{\Q_0^\gamma}_{ij} $ required by the Sinkhorn algorithm requires a mixture of both density estimation and further simulation of the prior. Furthermore, once we have the logsumexp potentials, additional high dimensional integrals must be estimated every time we wish to evaluate the optimal drift. Details are discussed in Appendix \ref{app:sink}. In addition, the Sinkhorn-Knopp algorithm still faces challenges in high dimensional spaces, specially for small values of $\gamma$ \cite{feydy2020geometric}. Many proposed enhancements and literature \cite{cuturi2013sinkhorn,feydy2019interpolating} have focused on cost functions that implicitly require a Brownian motion prior and thus do not apply to our general setting. 

% In short we can conclude that adapting Sinkhorn-Knopp based algorithms to estimate the solutions to the dynamical SBP merit its own research and algorithm development which is not the focus of this work. 
% \begin{align}
%     \int \phi_1&(\vy; {\beta} )   \hphi_1^{(i)}(\vy) d\vy =\nonumber\\
%     &\int  \hphi_0(\vx; \hat{\beta} ) \int p^{\Q_0^{\gamma}}(\vy| \vx) \phi_1^{(i)}(\vx; \beta) d\vy d\vx.\label{eq:hdint2}.
% \end{align}
\paragraph{Data Driven Schrödinger Bridge (DDSB):} The method proposed by \cite{pavon2018data} is perhaps the most similar approach to our approach and consists of iterating  two coupled density estimation objectives (see Appendix \ref{app:pavonap}) fitted at the marginals until convergence. While conceptually similar to our approach, there are 3 key differences. As with Sinkhorn-Knopp based methods, their approach aims to solve the static SBP. Once converged it requires further approximation to estimate the optimal drift. The coupled maximum likelihood formulation of the static half bridges in \cite{pavon2018data} is based on un-normalized density estimation with respect to the SBP potentials $\phi, \hat{\phi}$. The coupling of $\phi, \hat{\phi}$ in these objective does not directly admit the application of modern methods in density estimation since it does not allow us to freely parametrise MLE estimators for the boundaries thus neural density estimators such as \cite{kingma2013auto,papamakarios2017masked,papamakarios2019neural} cannot be taken advantage of to circumvent the computation of the partition function. Regression problems are ubiquitous in machine learning (ML) and thus why we believe that this formulation can be very impactful as it allows us to leverage all these methods from ML. Experimentally, regression methods have been observed to scale better to high dimensional problems than density estimation methods, we observed similar evidence as we were unable to scale up the DDSB method beyond 2 dimensions. 

Additionally, to compute the normalizing term in the DDSB objective, we have to estimate a multidimensional integral that is not taken with respect to a probability distribution. This poses a difficult challenge in high dimensions. For a more detailed commentary, see Appendix \ref{app:pavonap}. Note that the method by \cite{pavon1991free} uses importance sampling to estimate these quantities which performs poorly beyond 2 dimensions.

It worth noting that we are interested in a method that can obtain a dynamic interpolation between two distributions. A downside of solving the static bridge either by Sinkhorn or \cite{pavon2018data} is that it does not directly provide us with an estimate of the optimal dynamics. In order to obtain an estimate of the optimal drift we require a series of approximations to estimate the integrals in \ref{eq:potentialt}, \ref{eq:potentialt2}, thus every time we evaluate the optimal drift using these approaches we have to simulate the prior SDE $\gO(N+M)$ times. This makes the run-time of obtaining the drift and dynamical interpolation expensive, since for each Euler step we take we have to simulate another SDE and backpropagate through it to evaluate the drift.  

Finally we would like to highlight that a series of modern approaches to generative modelling \cite{wang2021deep,de2021diffusion,huang2021schrodinger,kingma2021variational} motivate both empirically and theoretically the gain in accuracy obtained in generative modelling tasks when using a dynamical approach rather than a static one.

% In conclusion the method by \cite{pavon2018data} does not naturally lend itself to estimating the optimal dynamic bridge and requires the same approximations as with Sinkhorn-Knopp based algorithms.  Furthermore the normalized likelihood setup in this approach requires the estimation of non trivial high-dimensional integrals and does not scale well to high dimensions.
\section{Numerical Experiments}
In this section, we demonstrate the capability of IPML to solve the Schrödinger bridge while efficiently incorporating priors on a range of different tasks from synthetic experiments to embryo cells.\footnote{Code supporting experiments can be found at \url{https://github.com/AforAnonyMeta/IPML-2548}.}

%\footnote{Code supporting experiments can be found at \url{https://github.com/franciscovargas/GP_Sinkhorn}.}
% https://github.com/franciscovargas/GP_Sinkhorn

% \subsection{}
\label{sec:simple}
\subsection{Simple 1D and 2D Distribution Alignment}
%REMOVED: In the unimodal experiment, the marginals are distributed as follows: $\pi_0 \sim \mathcal{N}(0,1),\pi_1 \sim \mathcal{N}(4,0.1)$
The first experiment considered is a simple alignment experiment where $\pi_0$ and $\pi_1$ are either unimodal or bimodal Gaussian distributions (see Appendix \ref{sec:1d} for exact details on the distributions). In Table \ref{tab:pavon_compare} we compare the accuracy of the fitted marginals with our implementation of the Data-Driven Schrödinger Bridge (DDSB) by \cite{pavon2018data}. The scoring metrics used are the Earth mover's distance (EMD) as well as a Kolmogorov-Smirnov (KS) statistic on both sample sets.
% %\end{paracol}
\begin{table}
% \renewcommand{\arraystretch}{1.2} % Default value: 1
% \begin{table}[]
% \label{tab:pavon_compare}
% \begin{minipage}[b]{0.2\textwidth}
\caption{Performance comparison of the fitted marginals between DDSB and IPML on unimodal and bimodal experiment. }
\centering
\begin{adjustbox}{width=\columnwidth}
\begin{tabular}[t]{ccccccccc}
\toprule
      & \multicolumn{4}{c}{Unimodal}                              & \multicolumn{4}{c}{Bimodal}                               \\
      & \multicolumn{2}{c}{$\pi_0$} & \multicolumn{2}{c}{$\pi_1$} & \multicolumn{2}{c}{$\pi_0$} & \multicolumn{2}{c}{$\pi_1$} \\
      & KS            & EMD         & KS           & EMD          & KS           & EMD          & KS            & EMD         \\ \hline
DDSB & 0.17         & 0.34        & 0.19         & 0.13         & 0.18         & 0.12         & 0.07          &   0.04          \\
IPML  & 0.06         & 0.10         & 0.13         & 0.04         & 0.05         & 0.04         & 0.07          &    0.15         \\ \bottomrule
\end{tabular}
\end{adjustbox}
\label{tab:pavon_compare}
\end{table}
We were unable to get DDSB to work well when $\pi_0$ and $\pi_1$ were distant from each other. In these distant settings, DDSB collapses the mass of the marginals to a single data point (see Appendix \ref{appsec:delta}), as a result for the unimodal experiment we had to set $\gamma=100$ for the DDSB approach to yield sensible results. We can observe IPML obtains better marginals overall and at a lower value of $\gamma=1$.

We carried out 2D experiments with our approach to show the ability of our method to diffuse from a simple uni-modal distribution to a multi-modal distribution as in Figure \ref{fig:well_and_forward} where our learned bridge successfully splits. Furthermore, we can visually observe how the forward and backwards trajectories are mirror images of each other as expected.% Unfortunately, we were unable to get the method by \cite{pavon2018data} to give sensible results on these 2D examples, thus pursuing a comparison with their approach in 1D.
% \subsection{2D Double Well Experiments}

\label{sec:double}
\subsection{2D Double Well Experiments} In the double well experiment, we illustrate how to incorporate an arbitrary functional prior and learn the distribution over paths connecting $\pi_0$ and $\pi_1$. In order to encode prior information, we experiment with the potential well illustrated in Figure \ref{fig:well_tr} and \ref{fig:well} (Figure \ref{fig:well} can be found in the Appendix). The boundary distributions $\pi_1, \pi_0$ are taken to be Gaussian distributions centred at the centre of each well respectively (See Appendix \ref{sec:well} for the experiment specification and IPML parameters). 
%\end{paracol}

 \begin{figure*}[!t]
    %\widefigure
    \centering
    \includegraphics[scale=0.54]{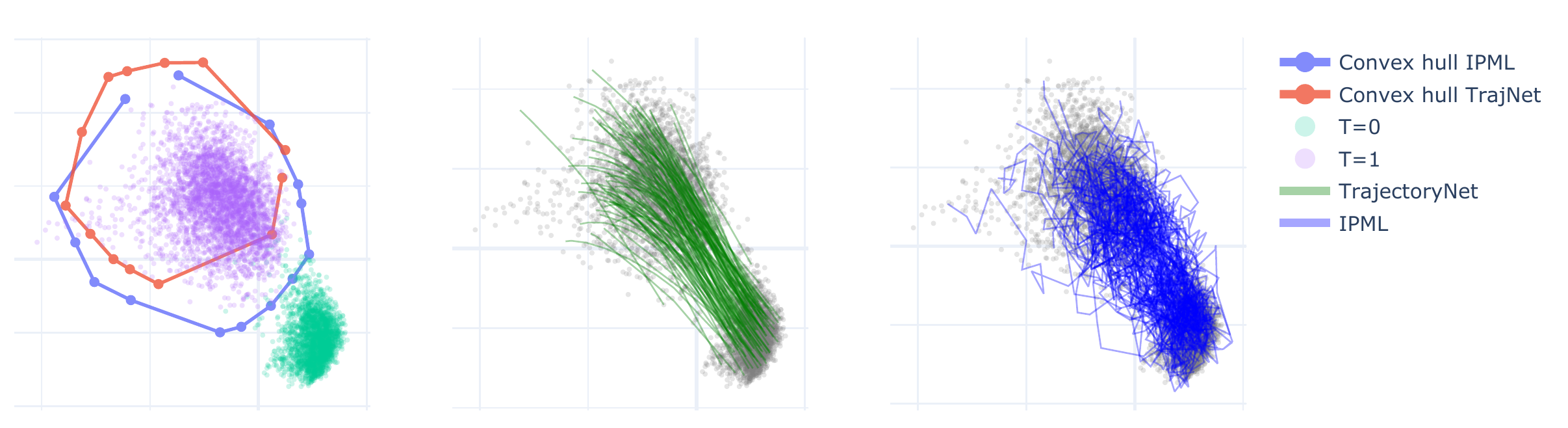}
    \caption{The left plot represent the first and last frame of the cells data and the convex hull of the forward model for TrajectoryNet and IPML. The right plots represent sample trajectories for Trajectorynet and IPML.}
    \label{fig:single_cel_trajectories}
\end{figure*}

 %\begin{paracol}{2}
%\linenumbers
%\switchcolumn

The motivation behind this experiment is to show that the SBP with this prior follows low energy (according to the well's potential function) trajectories for configurations of particles sampled at the wells. Intuitively, we can expect the learned trajectories to avoid the high energy peak located at $\vx=(0,0)$ and go via the ``passes'' on either side. Note that if we estimated the optimal transport (OT) geodesics between $\pi_0$ and $\pi_1$ or similarly ran IPML with a Brownian motion prior, the learned optimal trajectories would go right through the middle, which is the highest energy path between wells.

The prior is incorporated in the algorithm in two different ways, (1) by having the first drift $\Q^{*}_0 $ to follow the negative derivative of the potential function $d \rvx(t) = -\nabla_{\vx}U (\rvx) + \gamma d\rvw(t)$ and (2) by setting the mean function of the GP used to fit the drift as the negative derivative of the potential function. As illustrated in Figure \ref{fig:well_tr}, the learned trajectories between the two wells avoid the main high energy region and go through via lower energy passes, thus respecting the potential prior. However, the behavior of the trajectory will differ depending on the choice of kernel, as illustrated in Figure \ref{fig:well} (Appendix), underlying the need for its careful consideration.  We compared our approach to DDSB using the mean squared error distance to the prior in our evaluation. The results were \textbf{DDSB: 22.1, IPML (no prior): 19.9, IPML: (prior) 18.7}. As we can see IPML considerably outperforms DDSB.

\subsection{Finite sample/iteration convergence} 
Theorem \ref{theorem:mle-rev} provides us with asymptotic guarantees. However, it does not extend to the finite sample and discretisation case. To highlight the importance of finite effects on IPML, we carried out this analysis empirically. In Figure \ref{fig:compare_param} we plot an empirical estimate of the error term in the control formulation of the SB (Eq. 5). This term is effectively the mean squared error between the learned drift and the prior drift (gradient field of the well). We analyze this metric for different values of $N$ (number of samples) and $\Delta_t$ (discretization factor). We observe that IPML quickly reaches a low error valley, then, the cumulative error from the successive finite sample MLE can be observed and the drift starts to slowly deviate from the prior, this motivates early stopping. As $N$ is increased, IPML achieves lower error faster and deviates less from the prior in later iterations. Finally, we observe placing the drift prior via the GP has a significant effect in improving the error and its convergence. Additionally in Appendix \ref{appdx:finite} we detail how this question could be approached from a theoretical perspective while underlining its significance and difficulty as illustrated by the lack of such analyses in related algorithms \cite{pavon2018data,bernton2019schr}.

\label{sec:cells}
\subsection{Single Cell - Embryo Body (EB) Data set}
We perform an experiment on an embryoid body scRNA-seq time course \cite{tong2020trajectorynet}. Single-cell RNA sequencing enables accurate identification of cells at specific time-points, however, all cells are destroyed by measurement. This prevents modelling single-cell trajectory and instead we rely on modelling the data-manifold at discrete time points. The datasets consist of 5 time points illustrated in Figure \ref{fig:cell_volume}. To evaluate the performance of the algorithms, we fit the models at the endpoints ($T=1,5$) and predict the intermediate frames. The metric used is the Earth mover's distance between the data at intermediate frames and the predicted distribution. We evaluate the performance at the endpoints by considering the prediction of the forward model at $T=5$ and the backward model at $T=1$.

\begin{table}
% \renewcommand{\arraystretch}{1.2} % Default value: 1
% \begin{table}[]
% \label{tab:pavon_compare}
% \begin{minipage}[b]{0.2\textwidth}
\caption{Earth Moving Distance (EMD) on the EB data. EXP stands for the exponential kernel and EQ for exponentiated quadratic. The column "full" represents EMD averaged over all frames whereas "path" is averaged over the intermediate frames ($T={2,3,4}$).}
\centering
\begin{adjustbox}{width=\columnwidth}
\begin{tabular}[t]{@{}llllllll@{}}
\toprule
\multirow{2}{*}{} & \multirow{2}{*}{T=1} & \multirow{2}{*}{T=2} & \multirow{2}{*}{T=3} & \multirow{2}{*}{T=4} & \multirow{2}{*}{T=5} & \multicolumn{2}{c}{Mean} \\
                  &                      &                      &                      &                      &                      & Path        & Full       \\ \midrule
TrajectoryNet     & 0.62                 & 1.15                 & 1.49                 & 1.26                 & 0.99                 & 1.30        & 1.18       \\
IPML EQ                & 0.38                 & 1.19                 & 1.44                 & 1.04                 & 0.48                 & 1.22        & 1.02        \\
IPML EXP & 0.34 & 1.13 & 1.35 & 1.01 & 0.49 & 1.16 & 0.97\\
OT                & Na                   & 1.13                 & 1.10                 & 1.11                 & Na                   & 1.11        & Na          \\
\bottomrule
\end{tabular}
\end{adjustbox}
\label{tab:cell_tab}
\end{table}

We compare the performance with two methods. The first one, TrajectoryNet \cite{tong2020trajectorynet}, uses continuous normalizing flows with a soft constraint based on optimal transport. The second, leverages the McCann interpolant \cite{mccann2011five,schiebinger2019optimal} to interpolate the discrete OT solution; this corresponds to the linear interpolation induced by the transport map.

The results are summarized in Table \ref{tab:cell_tab}. In most frames, IPML outperforms TrajectoryNet and performs similarly to OT. As the noise (volatility) in the trajectories goes to 0, IPML theoretically converges to the OT solution with linear geodesics. So without any additional prior information, we would not expect IPML to outperform OT. However, when dealing with finite data, the DriftFit procedure allows for some non-linearity in the trajectories if it improves the fit. As a result, we do see some differences in the convex hull displayed in Figure \ref{fig:single_cel_trajectories} where we observe a better coverage of the single cell observations. We hypothesize this explains the improvement in performance vs OT for frame 4. The performance of IPML may be further improved by incorporating domain-specific knowledge as a prior. It would outperform OT in those settings. 

%With a suitable prior, IPML results should outperform OT across all frames, however due to lacking domain specific knowledge of the problem we were unable to construct such a prior.
%\end{paracol}
\begin{figure}[t!]
    %\widefigure
    \centering
    \hspace{1cm}
    \begin{minipage}{0.28\textwidth}
    \centering
    \includegraphics[width=\textwidth]{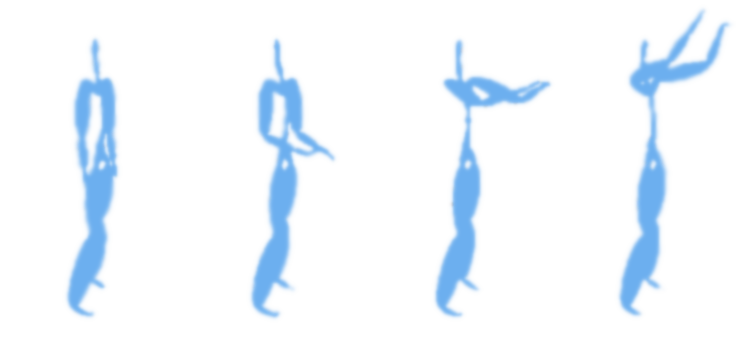}
    % \caption{Learned SBP trajectories in the double well experiment of Section \ref{sec:double}, with prior $\Q_0^\gamma$ expressed in terms of an energy landscape $U(x,y)$ as $d \rvx(t) = -\nabla_{\vx}U (\rvx) + \gamma d\rvw(t)$}
    % \caption{}
    \end{minipage}
    \hspace*{\fill}
    \begin{minipage}{\columnwidth}
    \centering
    \includegraphics[width=\columnwidth]{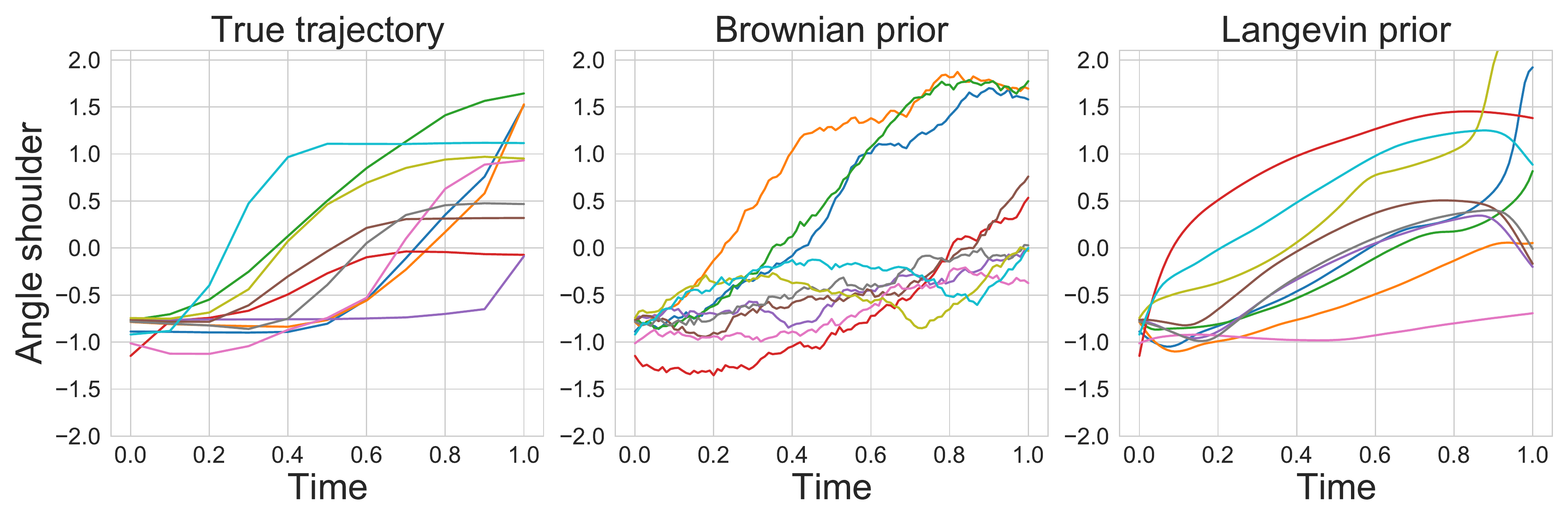}
    % \caption{Fitted SBP on the start and end data slices with Brownian prior on single cell human Embryo data. Observations depicted as red point clouds. See section \ref{sec:cells} for experimental details.}
    % \caption{}
    \end{minipage}
    \caption{\emph{Top}: 3D animation of the basketball signal motion modeled in Section \ref{subsq:mocap} (motion from left to right). \emph{Bottom}: Trajectories of the shoulder's oriented angle sensor through the motion. The two plots on the right demonstrate IPML's fit using a Brownian and Langevin prior.}
    \label{fig:motion_plot_main}
\end{figure}
 %\begin{paracol}{2}
%\linenumbers
%\switchcolumn
% \subsection{Motion Capture}
\label{subsq:mocap}
\subsection{Motion Capture}
In this experiment, we demonstrate how IPML can be used to model human motion from sensor data.\footnote{Data from The CMU Graphics Lab Motion Capture Database 
funded by the NSF (\url{http://mocap.cs.cmu.edu}).} The motion corresponds to a basketball movement where the subject raises both arms simultaneously as illustrated in Figure \ref{fig:motion_plot_main} and each sensor corresponds to an oriented angle. We focus on modelling the right shoulder and elbow where the starting distribution corresponds to the leftmost image and the ending one the rightmost. This results in a 4 dimensional space as we model both position and velocity for each sensor. We compare the fit of IPML using a Brownian and a 2nd order linear ODE (Langevin) prior. The experimental details can be found in \ref{app:mot}. As illustrated in figure \ref{fig:motion_plot_main}, IPML is able to approximately model the dynamics using both priors. The Brownian prior displays noisy trajectories as expected in contrast to the Langevin prior that, by construction, smooths out the predicted positions (due to the 2nd order term). We observe that the Langevin prior approximates the step function nature of the true trajectory more closely than the Brownian prior, additionally, we can see that it also produces slightly better alignments.

\section{Limitations and Opportunities}

In this work, we propose to use GPs to estimate the drift, however, IPML could also be used with a parametric function estimator (e.g. a neural network). This could be useful with high-dimensional data where GPs may underperform. The main advantage of using a GP is its capacity to incorporate functional priors via the mean function. This can be useful in applications such as molecular dynamics where a potential function may be available. In contrast, implementing functional regularization in a neural network would require approximating a non-trivial high-dimensional integral to estimate the mean squared error between the parametric estimator and the functional prior.

A particular useful extension would be to adapt the SBP to work with more general forms of volatility functions that are not constant. This can be used, for example, in enforcing positivity constraints on a stochastic process via a geometric Brownian motion prior; this has applications in modelling biological signals such as transcription factors \cite{sanguinetti2006probabilistic}. Work in this direction would require extending the theory of SBPs to non constant volatility functions.

Another promising setting is when multiple frames of data are available rather than just two boundary conditions. The IPFP algorithm trivially adapts to multiple constraints \cite{cramer2000probability} rather than just initial and terminal distributions. Future work could explore experiments similar to the one presented in \cite{tong2020trajectorynet} where multiple frames are considered during training and the performance is measured using a leave-one-out procedure.  

% Finally, in this work, we did not make use of the uncertainty estimates provided by the GP. One interesting direction would be to leverage the uncertainty estimates to encourage exploration when simulating and fitting the half bridge problems in order to have further control over the splitting of the drift (similarly to the double well experiment). 

%Furthermore due to computational limitations another aspect of GPs which we did not explore was the fitting of hyperparameters, further work for scaling and adapting hyper-parameter fitting to the IPML iterations could result in considerable improvements to this approach.

\subsection{Conclusion} 
We have presented IPML, a method to solve the Schrödinger bridge for arbitrary diffusion priors. We presented theoretical results guaranteeing convergence in the limit of infinite data. We devised a practical application of the algorithm using Gaussian Processes and presented several experiments on a variety of problems from synthetic to biological data. The approach opens up opportunities in science, where oftentimes, prior knowledge about the temporal evolution of a process has been developed but needs to be combined with data-driven methods to scale up to modern problems.

\subsection{Acknowledgements} 

We would like to thank Abdul Fatir Ansari for insightful discussion and helpful remarks.

% \subsection{References} 

% \externalbibliography{yes}
\bibliographystyle{aaai}
\bibliography{bib_2021}

\appendix
\onecolumn

\section*{Appendix}
% \makeatletter
% \def\CTEX@section@format{\Large\bfseries}
% \makeatother

\section{Assumptions Across Proofs}

For abstraction purposes we will list the set of assumptions assumed across all results in this appendix:
\begin{itemize}
    \item All SDEs considered have $L-$Lipchitz diffusion coefficients as well as satisfying linear growth.
    \item The optimal drifts are elements of a compact space and thus satisfy the HJB equations. Note for the proposes of Theorem 1 this can be relaxed using notions of $\Gamma-$convergence.
    \item All SDEs satisfy a Fokker-Plank equation and thus $p(\vx,t)=\mathrm{Law}(X_t)$ is differentiable with respect to $\vx$. 
    \item The boundary distributions are bounded in $\mathscr{L}_{2}(\pi_i)$ that is  $\exists C< \infty, \; s.t. \; \E_{\pi_i}[|X|^2] \leq C$.
\end{itemize}

\section{Brief Introduction to the Schrödinger System and Potentials}\label{appdx:system}

The Schrödinger system and its potentials are mentioned when introducing some of the results and connections of the full Schrödinger bridge problem. In this section we will provide a brief introduction to how the system arises from the original static Schrödinger bridge. For brevity we will use $\vx = \vx(0)$ and $\vy=\vx(1)$ to denote the boundaries.

The static Schrödinger bridge problem can be derived from the full dynamic bridge by marginalising out the dynamics via the Disintegration Theorem and focusing on the problem only concerning what is happening at the boundaries. The static SBP is formulated as:

\begin{linenomath}
\begin{equation} \label{eq:static_bridge}
    \argmin_{q(\vx,\vy) \in \calD(\pi_0, \pi_1)} \KL(q || p^{\Q^{\gamma}})
\end{equation}
\end{linenomath}

Where $p^{\Q^{\gamma}}(\vx,\vy)$ is the prior joint distribution for the boundary and can be obtained by solving the FPK equation corresponding to the prior SDE. Now if we write down the Lagrangian for the above problem we arrive at:
\begin{linenomath}
\begin{align} \label{eq:static_bridge_lag}
   \KL(q || p^{\Q^{\gamma}}) + \int \lambda(\vx) \left[\int q(\vx,\vy)d\vy - \pi_0(\vx) \right]d\vx + \int \mu(\vy) \left[\int q(\vx,\vy)d\vx - \pi_1(\vy) \right]d\vy
\end{align}
\end{linenomath}

which via performing the appropriate variations wrt to $q$ leads to the optimal solution:
\begin{linenomath}
\begin{equation}
    q^{*}(\vx,\vy) = \exp\left(\ln p^{\Q^{\gamma}}(\vx)-1-\lambda(\vx)\right) p^{\Q^{\gamma}}(\vy | \vx) \exp\left( -\mu(\vy)\right),
\end{equation}
\end{linenomath}

which when relabeling the terms containing the Lagrange multipliers we obtain:
\begin{linenomath}
\begin{equation}
    q^{*}(\vx,\vy) = \hat{\phi}(\vx)p^{\Q^{\gamma}}(\vy | \vx) \phi(\vy),
\end{equation}
\end{linenomath}

such that :

%\begin{linenomath}
%\begin{equation}
\begin{align}
        \hat{\phi}(\vx) \int p^{\Q^{\gamma}}(\vy | \vx) \phi(\vy)d\vy = \pi_0(\vx) \\
    {\phi}(\vy) \int p^{\Q^{\gamma}}(\vy | \vx) \hat{\phi}(\vx)d\vx = \pi_1(\vy) ,
\end{align}
%    \begin{split}

%    \end{split}
%\end{equation}
%\end{linenomath}
furthermore if we relabel the potentials to indicate the times they correspond to we arrive at the Schrödinger system:
\begin{align}
    \phi(\vx,0) \hat{\phi}(\vx, 0) = \pi_0(\vx) , \quad \phi(\vy,1) \hat{\phi}(\vy, 1) = \pi_1(\vy),
\end{align}
where
\begin{align}
    \hat{\phi}(\vx, 0) = \hat{\phi}(\vx) , \quad & \phi(\vy,1)=  \phi(\vy), \\
    \phi(\vx,0) = \int p^{\Q^{\gamma}}(\vy | \vx) \phi(\vy)d\vy,\quad & \hat{\phi}(\vy,1) = \int p^{\Q^{\gamma}}(\vy | \vx) \hat{\phi}(\vx)d\vx .
\end{align}
The above functional system is refered to as the Schrödinger system and the Schrödinger potentials are given by $\phi, \hat{\phi}$. Furthermore the time interpolates for the potentials can be obtained by the propagation's: 
\begin{align}
            \phi(\vx, t) &= \!\!\!\int \!\!\phi(\vz(1)) p^{\Q_0^{\gamma}}(\vz(1) | \vx(t)) d\vz(1) \\
            \hat{\phi}(\vy, t) &= \!\!\!\int\!\! \hat{\phi}(\vz(0)) p^{\Q_0^{\gamma}}(\vy(t) | \vz(0)) d\vz(0) 
\end{align}
For a more rigorous and extensive introduction please see \cite{pavon1991free}.

\section{Disintegration Theorem - Product Rule for Measures}\label{app:desint}

In this section, we present the Disintegration Theorem in the context of probability measures, which serves as the extension of the product rule to measures that do not admit the traditional product rule. Furthermore we will provide a proof for a direct lemma of the Disintegration Theorem that is more analogous to the standard product rule. Like the product rule these theorems are essential for decomposing and manipulating path measures and thus is needed for most results pertaining to the dynamic SBP.

\begin{theorem} (Disintegration Theorem for continuous probability measures): 

For a probability space $(Z, \calB(Z) ,\P)$, where $Z$ is a product space: $Z = Z_x \times Z_y$ and
\begin{itemize}
    \item  $Z_x \subseteq \R^d, Z_y \subseteq \R^{d'}$,
    \item  $\pi_i: Z \rightarrow Z_i$ is a measurable function known as the canonical projection operator (i.e. $\pi_x(z_x,z_y) = z_x$ and $\pi^{-1}_x(z_x) = \{y | \pi_x(z_x) = z\}$),
\end{itemize}
there exists a measure $\P_{y|x}(\cdot | \vx)$, such that
  \begin{align}
      \int_{Z_x \times Z_y} f(\vx, \vy) d\P(\vy) = \int_{Z_x}\int_{Z_y} f(\vx,\vy) d\P_{y|x}(\vy | \vx) d\P(\pi^{-1}(\vx)),
  \end{align}
 where $P_x(\cdot) = \P(\pi^{-1}(\cdot))$ is a probability measure, typically referred to as a push-forward measure, and corresponds to the marginal distribution.
\end{theorem}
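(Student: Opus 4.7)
The plan is to construct the conditional kernel $\P_{y|x}(\cdot \mid \vx)$ via a Radon--Nikodym argument and then extend the resulting identity from indicator functions of rectangles to arbitrary integrable $f$ by the standard monotone class machinery. First I would introduce the pushforward (marginal) measure $\P_x := \pi_{x,\#}\P$ on $(Z_x,\calB(Z_x))$, and for each fixed Borel set $B \in \calB(Z_y)$ define the finite signed measure $\mu_B$ on $\calB(Z_x)$ by $\mu_B(A) := \P(A \times B)$. Since $\P_x(A)=0$ implies $\P(A\times Z_y) = 0$ and hence $\P(A \times B) = 0$ for every $B$, we have $\mu_B \ll \P_x$, so the Radon--Nikodym theorem yields a density $k(\vx, B) := (d\mu_B/d\P_x)(\vx)$, determined $\P_x$-a.e.

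The main obstacle is that the family $\{k(\cdot,B)\}_{B\in\calB(Z_y)}$ gives, for each $B$, only an equivalence class, and we need a single jointly-chosen version such that $B \mapsto k(\vx, B)$ is a probability measure for $\P_x$-a.e.\ $\vx$ and $\vx \mapsto k(\vx, B)$ is measurable for every $B$. I would handle this using the fact that $Z_y \subseteq \R^{d'}$ is a Polish (hence standard Borel) space: work with the countable semialgebra of dyadic rectangles in $Z_y$, verify finite additivity and the values $k(\vx, \emptyset)=0$, $k(\vx, Z_y)=1$ on a $\P_x$-conull set by discarding countably many null sets, and then use Carath\'eodory's extension theorem fiberwise to produce a genuine probability measure $\P_{y|x}(\cdot \mid \vx)$ on $\calB(Z_y)$ for each $\vx$ in that conull set (defining it arbitrarily, say as a Dirac mass, off that set). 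The Polish hypothesis is what rules out the pathologies that prevent a regular conditional probability in general.

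Once $\P_{y|x}$ is in hand, the identity in the theorem follows by the usual four-step extension. For $f = \mathbf{1}_{A \times B}$ the right-hand side equals $\int_A k(\vx, B)\, d\P_x(\vx) = \mu_B(A) = \P(A\times B)$, which matches the left-hand side. The collection of sets $E \in \calB(Z_x \times Z_y)$ for which $\P(E) = \int \int \mathbf{1}_E(\vx,\vy)\, d\P_{y|x}(\vy\mid\vx)\, d\P_x(\vx)$ holds is a Dynkin class containing the $\pi$-system of measurable rectangles, so by the $\pi$--$\lambda$ theorem it equals all of $\calB(Z_x \times Z_y)$. Extending to simple functions by linearity, to non-negative measurable $f$ by monotone convergence, and to integrable $f$ by splitting into positive and negative parts completes the proof.

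The two delicate points to flag in the write-up are (i) the measurability-in-$\vx$ of $k(\vx, B)$ as $B$ ranges over an uncountable $\sigma$-algebra, which is precisely what forces us to invoke the standard Borel structure of $Z_y$ rather than getting the kernel for free, and (ii) uniqueness, which holds only up to $\P_x$-null sets and is obtained by noting that any two disintegrating kernels must agree $\P_x$-a.e.\ on every measurable rectangle and hence, via a Dynkin argument, on all of $\calB(Z_y)$ outside a $\P_x$-null set. The routine Fubini-style calculations after the kernel is constructed I would not spell out in detail.
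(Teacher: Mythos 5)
The paper does not prove this statement at all: it is quoted as classical background (the disintegration theorem for Borel probability measures on products of Euclidean spaces), and the paper's appendix only proves a corollary of it (the Radon--Nikodym product rule, Lemma \ref{lemma:rn_des}). So there is no in-paper argument to compare against; what you have written is the standard textbook construction of a regular conditional probability on a standard Borel space, and its architecture is correct: define $\P_x=\pi_{x,\#}\P$, obtain $k(\cdot,B)$ as the Radon--Nikodym derivative of $A\mapsto\P(A\times B)$ with respect to $\P_x$, glue the uncountably many equivalence classes into a single kernel by working on the countable dyadic algebra of $Z_y$ and discarding countably many null sets, extend fiberwise, and then pass from rectangles to all of $\calB(Z_x\times Z_y)$ by the $\pi$--$\lambda$ theorem and to general $f$ by the usual monotone-class ladder. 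Your identification of the two delicate points (joint measurability of the kernel, and uniqueness only $\P_x$-a.e.) is also exactly right.

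The one place where the sketch is thinner than it should be is the invocation of Carath\'eodory: that theorem requires a \emph{premeasure}, i.e.\ countable additivity on the algebra, not merely finite additivity, and verifying countable additivity simultaneously for $\P_x$-a.e.\ $\vx$ is where the real work (and the Polish/standard-Borel hypothesis) lives. The issue is that a countable algebra still admits uncountably many decreasing sequences with empty intersection, so you cannot dispose of them one null set at a time. The standard repair is the compact-class (inner-regularity) argument: for each set $B$ in the countable algebra and each rational $\varepsilon>0$ choose a compact $K\subseteq B$ with $\P(Z_x\times(B\setminus K))<\varepsilon$, impose the resulting countably many inequalities $k(\vx,B)-k(\vx,K')\le\dots$ on a single conull set, and conclude countable additivity from Marczewski's compact-class criterion. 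You correctly flag that the Polish structure is what rules out pathologies, but the write-up should make explicit that this is the mechanism by which it does so; with that addition the proof is complete.
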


A direct consequence of the above instance of the disintegration theorem is, with $f(\vx,\vy) = \ind_{A_x \times A_y}(\vx,\vy)$,
\begin{align}
    \P(A_x \times A_y) = \int_{A_x}\P(A_y | \vx) d\P_x(\vx).
\end{align}
 We can see that, in the context of probability measures, the above is effectively analogous to the product rule.

\subsection{RN Derivative Disintegration}

We now have the required ingredients to show the following:

\begin{lemma}\label{lemma:rn_des}(RN-derivative product rule)
Given two probability measures defined on the same product space,  $(Z_x \times Z_y, \calB(Z_x \times Z_y) ,\P)$ and $(Z_x \times Z_y, \calB(Z_x \times Z_y) ,\Q)$, the Radon–Nikodym derivative $\frac{d\P}{d\Q} (\vx,\vy)$ can be decomposed as
\begin{align}
    \frac{d\P}{d\Q} (\vx,\vy) = \frac{d\P_{y|x}}{d\Q_{y|x}}(\vy)\frac{d\P_x}{d\Q_x}(\vx).
\end{align}
\end{lemma}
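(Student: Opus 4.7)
The plan is to reduce the claim to the Disintegration Theorem just stated, combined with uniqueness of the Radon--Nikodym derivative on the generating $\pi$-system of measurable rectangles. First I would note that since $\P \ll \Q$ on $Z_x \times Z_y$, the marginals satisfy $\P_x \ll \Q_x$ (push-forwards preserve absolute continuity), and for $\Q_x$-a.e.\ $\vx$ the conditionals satisfy $\P_{y|x}(\cdot|\vx) \ll \Q_{y|x}(\cdot|\vx)$. This is needed so that the two factor Radon--Nikodym derivatives on the right-hand side are well-defined $\Q$-almost everywhere.

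Next I would take an arbitrary measurable rectangle $A_x \times A_y \in \calB(Z_x) \otimes \calB(Z_y)$ and compute $\P(A_x \times A_y)$ two ways. Applying disintegration to $\P$ gives
\begin{align*}
\P(A_x \times A_y) &= \int_{A_x} \P_{y|x}(A_y \mid \vx) \, d\P_x(\vx).
\end{align*}
Then I would rewrite the inner integrand using the conditional Radon--Nikodym derivative and the outer integral using the marginal derivative, obtaining
\begin{align*}
\P(A_x \times A_y) &= \int_{A_x} \!\left(\int_{A_y} \frac{d\P_{y|x}}{d\Q_{y|x}}(\vy)\, d\Q_{y|x}(\vy\mid\vx)\right) \frac{d\P_x}{d\Q_x}(\vx)\, d\Q_x(\vx) \\
&= \int_{A_x \times A_y} \frac{d\P_{y|x}}{d\Q_{y|x}}(\vy)\, \frac{d\P_x}{d\Q_x}(\vx) \, d\Q(\vx,\vy),
\end{align*}
where the last equality is again disintegration, now applied to $\Q$. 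Since measurable rectangles form a $\pi$-system generating $\calB(Z_x \times Z_y)$, a monotone class / Dynkin argument extends the identity to all of $\calB(Z_x \times Z_y)$, and uniqueness of the Radon--Nikodym derivative then yields the stated factorization $\Q$-a.e.

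The main obstacle I expect is purely measure-theoretic bookkeeping rather than anything deep: one must verify that $(\vx,\vy) \mapsto \frac{d\P_{y|x}}{d\Q_{y|x}}(\vy)$ admits a jointly measurable version, which requires the disintegrations to be regular conditional probabilities. Under the standing assumption that $Z_x, Z_y$ are Borel subsets of Euclidean space (which are standard Borel), such regular versions exist, and the joint measurability of the conditional density follows from a standard selection argument. Once that technicality is in place, the algebraic identity on rectangles together with the uniqueness of Radon--Nikodym derivatives closes the proof.
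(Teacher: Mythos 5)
Your proof is correct and follows essentially the same route as the paper's: disintegrate $\P$ over a measurable rectangle, apply the Radon--Nikodym theorem to the conditional and then to the marginal, reassemble via the disintegration of $\Q$, and conclude by uniqueness of the Radon--Nikodym derivative. Your additional remarks on the $\pi$-system extension, the absolute continuity of the conditionals, and joint measurability make explicit some technicalities the paper's sketch leaves implicit (and outsources to the cited reference of L\'eonard), but the argument is the same.
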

\begin{proof}
Note that this is an instructive sketch for a well known result \citep{pavon1991free,pavon2018data,leonard2013survey,leonard2014some}. We assume that $d\Q_{y|x} >> d\P_{y|x} $ which is not a trivial fact yet it follows from the disintegration theorem in this particular setting (See Theorem 1.6 b) in \cite{leonard2014some}) .
Starting from
\begin{align*}
    \P(A_x \times A_y) =  \int_{A_x}\P(A_y | \vx)  d\P_{x}(\vx),
\end{align*}
we apply the Radon-Nikodym theorem to $\P(A_y | \vx)$ and then to $P_x$:
\begin{align*}
    \P(A_x \times A_y) &=  \int_{A_x}\int_{A_y} \frac{d\P_{y|x}}{d\Q_{y|x}}(\vy) d\Q_{y|x}(\vy) d\P_{x}(\vx) \\
     &= \int_{A_x}\left(\int_{A_y} \frac{d\P_{y|x}}{d\Q_{y|x}}(\vy) d\Q_{y|x}(\vy)\right) \frac{d\P_{x}}{d\Q_{x}}(\vx)d\Q_{x}(\vx) \\
      &= \int_{A_x}\int_{A_y}\frac{d\P_{x}}{d\Q_{x}}(\vx)\frac{d\P_{y|x}}{d\Q_{y|x}}(\vy) d\Q_{y|x}(\vy) d\Q_{x}(\vx).
\end{align*}
Now, via the disintegration we have that
\begin{align*}
  \int_{A_x \times A_y} \frac{d\P_{x}}{d\Q_{x}}(\vx)\frac{d\P_{y|x}}{d\Q_{y|x}}(\vy) d\Q(\vx,\vy) = \int_{A_x}\int_{A_y}\frac{d\P_{x}}{d\Q_{x}}(\vx)\frac{d\P_{y|x}}{d\Q_{y|x}}(\vy) d\Q_{y|x}(\vy) d\Q_{x}(\vx).
\end{align*}
Thus, we conclude that
\begin{align*}
    \P(A_x \times A_y) = \int_{A_x \times A_y} \frac{d\P_{x}}{d\Q_{x}}(\vx)\frac{d\P_{y|x}}{d\Q_{y|x}}(\vy) d\Q(\vx,\vy), 
\end{align*}
which, via the Radon-Nikodym theorem, implies
\begin{align*}
    \frac{d\P}{d\Q} (\vx,\vy) = \frac{d\P_{y|x}}{d\Q_{y|x}}(\vy)\frac{d\P_x}{d\Q_x}(\vx).
\end{align*}

Note that swapping $\Q$ for the Lebesgue measure would result in the standard product rule for probability density functions.
\end{proof}

% \section{Proof Sketches For Control Formulation}\label{app:control}
% \begin{proof}
% Via the Disintegration Theorem and Theorem \ref{lemma:rn_des}, we can condition on the endpoint and re-write the RN derivative as
% \begin{align*}
%     \frac{d\Q}{d\W^{ \gamma}} = \frac{\pi_0^\Q}{\pi_0^{ \W^{\gamma}}} \frac{d\Q_{(0,1]}}{d\W_{(0,1]}^{\gamma}}\left(\cdot | \rvx(0) =\vx\right),
% \end{align*}
% where the disintegration $\Q_{(0,1]}\left(\cdot | \rvx(0) = \vx \right)$ is a solution to $d\rvx(t) = \rvb_t^+ dt + \sqrt{\gamma} \rvw^+(t)$. Then, by Theorem \ref{thrm:ito_ratio}, we can express the RN derivative in terms of  the drift $\rvb_t^+$:
% \begin{align*}
%     \frac{d\Q}{d\W^{\gamma}} = \frac{\pi_0^\Q}{\pi_0^{ \W^{\gamma}}}\exp\left(\int_0^1\frac{1}{2\gamma} \big|\big|\rvb^+(t)\big|\big|^2 dt\right).
% \end{align*}
% Substituting the above back into the KL divergence completes the result for Theorem \ref{eq:free_energy_1}.
% \end{proof}
% \section{Brief Introduction to IPFP}

\section{Proof Sketches For Half Bridges}

In this section we provide a proof sketch for the closed form solution of the half bridges as well as a proof for Observation \ref{obs:half}.

\label{app:half}
\begin{theorem}\label{thrm:half_bridge_forward}
    The forward half bridge admits the following solution: 
\begin{align}
    {\P}^{*}\left(A_0 \times A_{(0,1]}\right) =  \int_{A_0\times A_{(0,1]}} \frac{d \pi_0}{ d\pi_0^{\Q}} d\Q.
\end{align}
\end{theorem}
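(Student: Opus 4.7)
The plan is to reduce this to a one-line optimization by disintegrating both $\P$ and $\Q$ along the time-$0$ coordinate, so that the constraint (which pins down only the $t=0$ marginal) decouples cleanly from the free part of the problem. Concretely, write the path space as the product $Z_0 \times Z_{(0,1]}$ and use Lemma \ref{lemma:rn_des} to decompose
\begin{equation*}
  \frac{d\P}{d\Q}(\vx_0,\vx_{(0,1]}) \;=\; \frac{d\P_0}{d\Q_0}(\vx_0)\,\cdot\,\frac{d\P_{(0,1]\mid 0}}{d\Q_{(0,1]\mid 0}}(\vx_{(0,1]}\mid \vx_0),
\end{equation*}
where $\Q_0 = \pi_0^{\Q}$. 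Taking logs and integrating against $\P$ yields the chain rule for KL,
\begin{equation*}
  \KL(\P\,\|\,\Q) \;=\; \KL(\P_0\,\|\,\pi_0^{\Q}) \;+\; \E_{\P_0}\!\left[\,\KL\bigl(\P_{(0,1]\mid 0}(\cdot\mid\vx_0)\,\big\|\,\Q_{(0,1]\mid 0}(\cdot\mid\vx_0)\bigr)\,\right].
\end{equation*}

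Next I would exploit the constraint $\P \in \calD(\pi_0,\cdot)$. This forces $\P_0 = \pi_0$, so the first summand is the fixed constant $\KL(\pi_0\,\|\,\pi_0^{\Q})$ and does not enter the optimization. The second summand is an integral of non-negative quantities, and by Gibbs' inequality it is minimized (and equal to zero) precisely when $\P^{*}_{(0,1]\mid 0}(\cdot\mid \vx_0) = \Q_{(0,1]\mid 0}(\cdot\mid \vx_0)$ for $\pi_0$-a.e.\ $\vx_0$. In other words, the optimal half-bridge keeps the conditional dynamics of $\Q$ intact and merely replaces the time-$0$ marginal with $\pi_0$, i.e.\ $\P^{*} = \pi_0 \otimes \Q_{(0,1]\mid 0}$.

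Finally I would unpack this product back into the claimed closed form by reassembling $\Q$ via disintegration. Writing $d\pi_0 = \tfrac{d\pi_0}{d\pi_0^{\Q}}\,d\pi_0^{\Q}$ and reinserting into the product gives
\begin{equation*}
  \P^{*}(A_0\times A_{(0,1]}) \;=\; \int_{A_0}\!\!\int_{A_{(0,1]}} \frac{d\pi_0}{d\pi_0^{\Q}}(\vx_0)\, d\Q_{(0,1]\mid 0}(\vx_{(0,1]}\mid\vx_0)\, d\pi_0^{\Q}(\vx_0) \;=\; \int_{A_0\times A_{(0,1]}}\frac{d\pi_0}{d\pi_0^{\Q}}\, d\Q,
\end{equation*}
as required, where the second equality is another application of the disintegration theorem in the opposite direction.

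The main obstacle is justifying absolute continuity: Lemma \ref{lemma:rn_des} and the chain rule for KL presuppose $\P_0 \ll \pi_0^{\Q}$ and that the conditional Radon--Nikodym derivatives exist. These, however, are forced by finiteness of the infimum, since any admissible $\P$ with $\KL(\P\,\|\,\Q)<\infty$ must have $\pi_0 = \P_0 \ll \pi_0^{\Q}$; thus without loss of generality we may restrict to this case (otherwise the infimum is $+\infty$ and the statement is vacuous). Verifying measurability of the conditional Radon--Nikodym derivative in $\vx_0$ is standard under the regularity assumptions in the appendix.
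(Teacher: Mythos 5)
Your proof is correct and takes essentially the same route as the paper's: disintegrate the KL divergence along the time-$0$ coordinate, observe that the optimal half bridge keeps the conditional dynamics $\Q(\cdot\mid\vx_0)$ intact while swapping in the marginal $\pi_0$, and reassemble via the Radon--Nikodym derivative $d\pi_0/d\pi_0^{\Q}$. The only difference is that you make explicit the Gibbs-inequality step and the absolute-continuity caveat, which the paper's proof sketch leaves implicit.
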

\begin{proof}
Via the disintegration theorem, we have the following decomposition of KL:
\begin{align}
    \KL(\P || \Q) = \KL(\pi_0^\P|| \pi_0^{\Q} )  + \E_{\pi_0^\P}\left[\KL(\P(\cdot| \vx) || \Q(\cdot | \vx))\right]. \nonumber
\end{align}
Thus, via matching terms accordingly, we can construct $\P^*$ by setting $\P(\cdot| \vx)=\Q(\cdot| \vx)$ and matching the constraints:
\begin{align}
    {\P}^{*}\left(A_0 \times A_{(0,1]}\right) = \int_{A_0}\!\!\Q(A_{(0,1]} | \vx) d\pi_0(\vx),
\end{align}
\begin{align}
    {\P}^{*} &= \int_{A_0}  \frac{d\pi_0}{d\pi_0^{\Q}}(\vx)   \Q(\cdot | \vx) d \pi_0^{\Q}(\vx) \nonumber \\
    &= \int_{A_0 \times A_{(0,1]} }  \frac{d\pi_0}{d\pi_0^\Q}(\vx)  d \Q.
\end{align}
\end{proof}

\begin{repobservation}{obs:half}
We can parametrise a measure $\Q$ with its drift as the solution to either of the following SDEs:
\begin{align*}
    d\rvx^{\pm}&(t) =  \rvb^{\pm}(t) + \sqrt{\gamma} d \rvw^{\pm}(t) ,\\
 &\rvx^+(0) \sim \pi_0^\Q,\:\rvx^-(0) \sim \pi_1^\Q.
\end{align*}
Then, we can sample from the solution to the following half bridges:
\begin{align*}
   \P^{*-} = \arginf_{\P \in \calD(\cdot,\pi_1)}\KL(\P || \Q), \\ 
   \P^{*+} = \arginf_{\P \in \calD(\pi_0,\cdot )}\KL(\P || \Q),
\end{align*}
via simulating trajectories following the SDEs
\begin{align*}
     d\rvx^{-}(t) &=   \rvb^{-}(t) + \sqrt{\gamma} d \rvw^{-}(t) , \quad  \rvx^{-}(0) \sim \pi_1, \\
     d\rvx^{+}(t) &=  \rvb^{+}(t) + \sqrt{\gamma} d \rvw^{+}(t) , \quad  \rvx^{+}(0) \sim \pi_0.
\end{align*}
Paths sampled from the above SDEs will be distributed according to $\P^{*-}$ and $\P^{*+}$ respectively.
\end{repobservation}

\begin{proof}(Sketch)
W.l.o.g., Consider the decomposition of the KL divergence that follows from the disintegration Theorem (Appendix \ref{app:desint}):
\begin{align}
    \KL(\P || \Q) = &\KL(\pi_0^\P|| \pi_0^{\Q} ) \nonumber  + \E_{\pi_0^\P}\left[\KL(\P(\cdot| \vx) || \Q(\cdot | \vx))\right]. \nonumber
\end{align}
Furthermore the disintegration $\Q( \cdot| \rvx(0) )$ is a solution to the dynamics $d\rvx^{+}(t) =  \rvb^{+}(t) + \sqrt{\gamma} d \rvw^{+}(t)$. We can make the term $\E_{\pi_0^\P}\left[\KL(\P(\cdot| \vx) || \Q(\cdot | \vx))\right]$ go to 0 by setting $\P( \cdot| \rvx(0) ) =\Q( \cdot| \rvx(0) )$, it is clear the dynamics of $\P( \cdot| \rvx(0) )$ follows  $d\rvx^{+}(t) =  \rvb^{+}(t) + \sqrt{\gamma} d \rvw^{+}(t)$. What is left is to attach the constraint via $\rvx(0) \sim \pi_0(\rvx(0))$ which brings $\KL(\pi_0^\P|| \pi_0^{\Q} )$ to  $\KL(\pi_0 || \pi_0^{\Q})= \KL(\P^{*+} || \Q)$ coinciding with the half bridge minima as per \cite{pavon1991free,bernton2019schr}.

This is simply enforcing the constraints via the product rule (Disintegration Theorem) and then matching the remainder of the unconstrained interval with the disintegration for the reference distribution $\Q$, following Equation \ref{eq:half_bridge_forward}.
\end{proof}

\section{Proof Sketch for Reverse-MLE Consistency}  \label{app:cons}

The proof sketch for Theorem \ref{theorem:mle-rev} will show how the likelihood converges in the large data and small time step limit to an optimisation of the KL divergence between two reverse time diffusions, from here one can use the standard arguments to show this quantity is minimised when the two measures describe the same stochastic process or equivalently when the drifts are equal.

\begin{lemma}\label{lemma:normalization}
Normalizing the time reversed likelihood with the true discretised backwards SDE density does not affect the maximum likelihood estimate. That is :
% \end{paracol}
\begin{align}
    \argmax_{\vb} &\sum_n \sum_{k=0}^T \ln \gN \left( \vx^{(n)+}_{t_k- \Delta t} \Big| \vmu^{(n)+}_{t_k- \Delta t} , \gamma \Delta t\right) \nonumber \\ = \argmax_{\vb} &\frac{1}{N}\sum_n \sum_{k=0}^T \left(\ln \gN \left( \vx^{(n)+}_{t_k- \Delta t} \Big| \vmu^{(n)+}_{t_k- \Delta t} , \gamma \Delta t\right) -\ln  \gN \left( \vx^{(n)+}_{t_k- \Delta t} \Big| \vx^{(n)+}_{t_k } -  \Delta t\vb^{-}_{0}\left(\vx^{(n)+}_{t_k},1- t_k\right)  , \gamma \Delta t\right) \right)
\end{align}
% \begin{paracol}{2}
% \linenumbers
% \switchcolumn
\end{lemma}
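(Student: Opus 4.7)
The plan is to observe that the two objectives on the left and right differ only by (i) multiplication by a positive constant and (ii) subtraction of a term that does not depend on the optimisation variable $\vb^{-}$, so their $\argmax$ coincide.

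First I would emphasise the dependency structure: in $\vmu^{(n)+}_{t_k - \Delta t}$ the drift appearing is $\vb^{-}$, i.e.\ the variable we optimise over, whereas in the subtracted log-density the drift appearing is $\vb^{-}_{0}$, the true (but fixed) reverse drift of the data-generating SDE. Consequently, writing
\[
C_N \;:=\; \frac{1}{N}\sum_{n}\sum_{k=0}^{T} \log \gN\!\left(\vx^{(n)+}_{t_k-\Delta t}\,\Big|\, \vx^{(n)+}_{t_k} - \Delta t\, \vb^{-}_{0}(\vx^{(n)+}_{t_k}, 1-t_k),\, \gamma\Delta t\right),
\]
the quantity $C_N$ is a deterministic function of the samples $\{\vx^{(n)+}\}$ and of $\vb^{-}_{0}$, but is entirely independent of the argument $\vb$ over which we maximise.

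Next I would apply the elementary facts that $\argmax_{\vb} f(\vb) = \argmax_{\vb}\bigl(\alpha f(\vb) + c\bigr)$ for any scalar $\alpha > 0$ and any constant $c \in \R$ (assuming a maximiser exists). Applying this with $\alpha = 1/N$ and $c = -C_N$ to the left-hand-side objective yields precisely the right-hand-side objective, which concludes the argument.

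There is no real obstacle here: the statement is a bookkeeping lemma that rewrites the MLE objective as a (scaled, shifted) log-likelihood ratio. Its purpose, which the subsequent proof of Theorem \ref{theorem:mle-rev} will exploit, is to make visible that the normalised empirical objective converges, as $N \to \infty$ and $\Delta t \to 0$, to a Kullback--Leibler divergence between two time-reversed Itô diffusions (one driven by $\vb^{-}$, the other by $\vb^{-}_{0}$); one can then invoke Girsanov/Gibbs-type identifiability to conclude that the unique minimiser is $\vb^{-} = \vb^{-}_{0}$.
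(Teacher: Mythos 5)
Your proof is correct and takes essentially the same approach as the paper: the subtracted Gaussian log-density involves only the fixed true drift $\vb^{-}_{0}$ and the data, hence is an additive constant in the optimisation variable, and the $1/N$ factor is a positive rescaling, neither of which changes the $\argmax$. The paper's own proof is a one-line version of exactly this observation (it only mentions the additive constant explicitly, leaving the positive rescaling implicit), so there is nothing to add.
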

\begin{proof}
The term $-\ln  \gN \left( \vx^{(n)+}_{t_k- \Delta t} \Big| \vx^{(n)+}_{t_k } -  \Delta t\vb^{-}_{0}\left(\vx^{(n)+}_{t_l}, 1-t_k \right)  , \gamma \Delta t\right)$ does not depend on $\vb$ and thus is an additive constant.
\end{proof}

\begin{reptheorem}{theorem:mle-rev}
(Consistency of Reverse-MLE Formulations) Let $\left\{ \big(\vx^{(n)+}_{t_k} \big)_{k=0}^T \right\}_{n=0}^{N}$ be sampled/discretised trajectories from the SDE that represents the half bridge measure $\P^+$:
\begin{align*}
     d\rvx^{+}(t) &=  \vb^{+}_0(\rvx^{+}(t),t) + \sqrt{\gamma} d \rvw^{+}(t) , \quad  \rvx(0) \sim \pi_0.
\end{align*}
Then carrying out maximum likelihood estimation of the time reversed drift ${\vb_0}^{-}$ on time reversed samples:
\begin{align}\label{eq:mle2}
   \prod_n p&\left( \big(\vx^{(n)+}_{t_k}\big)_{k=0}^T  \Big| \vb^{-}\right) \propto\\ \nonumber
    &\prod_n p(\rvx_{t_T}^{(n)+})\prod_{k=0}^T \gN \left( \vx^{(n)+}_{t_k- \Delta t} \Big| \vmu^{(n)+}_{t_k- \Delta t} , \gamma \Delta t\right),
\end{align}
where,
\begin{align}
    &\vmu^{(n)+}_{t_k- \Delta t} = \vx^{(n)+}_{t_k } -  \Delta t\vb^{-}\left(\vx^{(n)+}_{t_k}, 1-t_k\right) 
\end{align}
and $\vb^{-}$ is the drift of our estimator SDE:
\begin{align}
    d\rvx^{-}(t) &=  \vb^{-}(\rvx^{-}(t), t) + \sqrt{\gamma} d \rvw^{-}(t) , \quad  \rvx^{-}(t)(0) \sim p,
\end{align}
converges in probability to the to the true dual drift $\vb^{-}_0(\vx,t)=\vb^{+}_0(\vx,t) -\gamma \nabla_{\vx} \ln p(\vx, t)$ where $\vb^{+}_0(x, t)$ is the optimal half bridge drift for $\P^{+}$ as $N \rightarrow \infty, \Delta t \rightarrow 0$. Under the assumptions:
\begin{itemize}
    \item The density $p(\vx,t)$ is differentiable with respect to $\vx$.
    \item The optimal drift lies in a compact space (This can be relaxed using notions of $\Gamma-$convergence).
    \item The prior drift coefficient is $L-$Lipchitz and satisfies linear growth.
\end{itemize}
\end{reptheorem}
\begin{proof}

For the interest of brevity let $\vb^{(n)-}_t = \vb^{-}\left(\vx^{(n)+}_{t_k}, 1-t_k \right)$. Taking logs and applying Lemma \ref{lemma:normalization} to Equation \ref{eq:mle2} yields:
% \newpage
% \end{paracol}
\begin{align}
    \frac{1}{N}\sum_n \sum_{k=0}^T \left(\ln \gN \left( \vx^{(n)+}_{t_k- \Delta t} \Big| \vmu^{(n)+}_{t_k- \Delta t} , \gamma \Delta t\right) -\ln  \gN \left( \vx^{(n)+}_{t_k- \Delta t} \Big| \vx^{(n)+}_{t_k } -  \Delta t\vb^{-}_{0}\left(\vx^{(n)+}_{t_k},1-t_k\right)  , \gamma \Delta t\right) + \ln\frac{p(\rvx_{t_T})}{q(\rvx_{t_T})}\right) \\
    \frac{1}{N} \sum_n \frac{1}{2\gamma\Delta t}\sum_{k=0}^T \left(-\Bigg|\Bigg| {\vx^{(n)+}_{t_k- \Delta t} -   (\vx^{(n)+}_{t_k } - \Delta t\vb^{(n)-}_t) }\Bigg|\Bigg|^2  +\Bigg|\Bigg|{\vx^{(n)+}_{t_k- \Delta t} - (\vx^{(n)+}_{t_k } - \Delta t\vb^{(n)-}_{0t}) }\Bigg|\Bigg|^2 + \ln\frac{p(\rvx_{t_T})}{q(\rvx_{t_T})}\right)
\end{align}
% \begin{paracol}{2}
% \linenumbers
% \switchcolumn

Where $q(\rvx_{t_T})$ represents the terminal distribution of the forward SDE (i.e. $\rvx^{+}(1) \sim \pi$).
Now we can equivalently write the above expression in terms of the time reversed samples (i.e. $ \vx_{t_i}^{-} = \vx_{t_{n-i}}^{+}$):
% \end{paracol}
\begin{align}
     \frac{1}{N}\sum_n \frac{1}{2\gamma\Delta t} \sum_{k=0}^T \left(-\Bigg|\Bigg| {\vx^{(n)-}_{t_k} -   (\vx^{(n)-}_{t_k- \Delta t} - \Delta t\vb^{(n)-}_t) }\Bigg|\Bigg|^2  +\Bigg|\Bigg| {\vx^{(n)-}_{t_k} - (\vx^{(n)}_{t_k - \Delta t} - \Delta t\vb^{(n)-}_{0t}) }\Bigg|\Bigg|^2+ \ln\frac{p(\rvx^{(n)-}
_{t_0})}{q(\rvx^{(n)-}
_{t_0})}\right)
\end{align}
% \begin{paracol}{2}
% \linenumbers
% \switchcolumn
Expanding the squares and re-arranging:
% \end{paracol}
\begin{align} \label{eq:maximand}
     \frac{1}{N}\sum_n \frac{1}{\gamma } \sum_{k=0}^T \left( { (\Delta \vx^{(n)-} )^\top (\vb^{(n)-}_{1-t} - \vb^{(n)-}_{0(1-t)} )} - \frac{1}{2} (||\vb^{(n)-}_{1-t}||^2 - ||\vb^{(n)-}_{0(1-t)}||^2 )\Delta t + \ln\frac{p(\rvx^{(n)-}
_{t_0})}{q(\rvx^{(n)-}
_{t_0})}\right),
\end{align}
% \begin{paracol}{2}
% \linenumbers
% \switchcolumn
where $\Delta \vx^{(n)-} = \vx^{(n)-}_{t_k} -   \vx^{(n)-}_{t_k -\Delta t}$. Now we can consider the limit of Equation \ref{eq:maximand}:
% \end{paracol}
\begin{align} 
     \lim_{N \rightarrow \infty}\lim_{\Delta t \rightarrow 0}\frac{1}{N}\sum_n \frac{1}{\gamma } \sum_{k=0}^T \left( { (\Delta \vx^{(n)-} )^\top (\vb^{(n)-}_{1-t} - \vb^{(n)-}_{0(1-t)} )} - \frac{1}{2} (||\vb^{(n)-}_{1-t}||^2 - ||\vb^{(n)-}_{0(1-t)}||^2 )\Delta t + \ln\frac{p(\rvx^{(n)-}
_{t_0})}{q(\rvx^{(n)-}
_{t_0})}\right),
\end{align}
% Under uniform convergence assumptions of the parametrised drift I think it is we can swap maxes and limits (TODO: check a more thorough MLE consistency proof for conditions, should we be assuming $\Gamma-$convergence ?).
\begin{align} 
    \lim_{N \rightarrow \infty}\frac{1}{N}\sum_n \frac{1}{\gamma }\lim_{\Delta t \rightarrow 0}\sum_{k=0}^T \left( { (\Delta \vx^{(n)-} )^\top (\vb^{(n)-}_{1-t} - \vb^{(n)-}_{0(1-t)} )} - \frac{1}{2} (||\vb^{(n)-}_{1-t}||^2 - ||\vb^{(n)-}_{0(1-t)}||^2 )\Delta t+   \ln\frac{p(\rvx^{(n)-}
_{t_0})}{q(\rvx^{(n)-}
_{t_0})} \right),
\end{align}
% \begin{paracol}{2}
% \linenumbers
% \switchcolumn
We can write the sum over the time grid as a stochastic integral if we express the inner terms using the continuous time approximation from Lemma \ref{lemma:rev_eu}:
% \end{paracol}
\begin{equation} 
    \lim_{N \rightarrow \infty}\frac{1}{N}\sum_n \frac{1}{\gamma }\lim_{\Delta t \rightarrow 0}\int_{0}^1  {  (\vb^{(n)-}_{1-t} - \vb^{(n)-}_{0(1-t)} )^{\top} d\hat{\vx}^{(n)-}(t) } - \frac{1}{2} \int_{0}^1 (||\vb^{(n)-}_{1-t}||^2 - ||\vb^{(n)-}_{0(1-t)}||^2 )dt +   \ln\frac{p(\rvx^{(n)-}
_{t_0})}{q(\rvx^{(n)-}
_{t_0})}  ,
\end{equation}
% \begin{paracol}{2}
% \linenumbers
% \switchcolumn
% Now using  that $\hat{\vx}^{(n)-}(t) \overset{P}{\rightarrow} {\vx}^{(n)-}(t)$ from Lemma \ref{lemma:rev_eu} as $\Delta  t \rightarrow 0$ it. For Euler part
From \citep{nelson1967dynamical,follmer1984entropy,anderson1982reverse} it follows that  $\hat{\vx}^{(n)-}(t)$ is a semi-martingale (w.r.t. to the backwards filtration see \cite{elliott1985reverse,kunitha1982backward}) then in the limit the stochastic integrals are taken with respect to the true time reversed stochastic process adapted to the backwards filtration $(\calF^{-}_i)_{i\in T}$ (see Theorem 2.13 in \cite{revuz2013continuous}): % ,for another formal argument of Equation \ref{eq:stoch_rev} see Theorem 3.17 and the equation following Equation 2.3 in \cite{cattiaux2021time}):
% \end{paracol}
\begin{align}
    \lim_{\Delta t \rightarrow 0}\int_{0}^1  {  (\vb^{(n)-}_{1-t} - \vb^{(n)-}_{0(1-t)} )^{\top} d\hat{\vx}^{(n)-}(t) }  &\xrightarrow{P}\int_{0}^1  {  (\vb^{-}(\vx^{(n)-}(t), t) - \vb^{-}_{0}(\vx^{(n)-}(t), t) )^{\top} d{\vx}^{(n)-}(t) } \label{eq:stoch_rev}  \\
    \lim_{\Delta t \rightarrow 0}\int_{0}^1 (||\vb^{(n)-}_{1-t}||^2 - ||\vb^{(n)-}_{0(1-t)}||^2 )dt &\xrightarrow{P} \int_{0}^1 (||\vb^{-}(\vx^{(n)-}(t), t) ||^2 - ||\vb^{-}_{0}(\vx^{(n)-}(t), t) ||^2 )dt ,
\end{align}
% \begin{paracol}{2}
% \linenumbers
% \switchcolumn
now we have:
% \end{paracol}
\begin{align} 
    \lim_{N \rightarrow \infty}\frac{1}{N}\sum_n& \frac{1}{\gamma }\int_{0}^1  {  (\vb^{-}(\vx^{(n)-}(t), t) - \vb^{-}_{0}(\vx^{(n)-}(t), t) )^{\top} d{\vx}^{(n)-}(t) }  \nonumber \\ &- \frac{1}{2\gamma} \int_{0}^1 (||\vb^{-}(\vx^{(n)-}(t), t) ||^2 - ||\vb^{-}_{0}(\vx^{(n)-}(t), t) ||^2 )dt + \ln\frac{p(\rvx^{(n)-}
(0))}{q(\rvx^{(n)-}
(0))} ,
\end{align}
% \begin{paracol}{2}
% \linenumbers
% \switchcolumn
where by Lemma \ref{lemma:rev_eu} each random function $\rvx^{(n)-}(t)$ is sampled i.i.d from the SDE:
\begin{align} \label{eq:convrged_sde_back}
     d\rvx^{-}(t) &=  \vb^{-}_0(\rvx^{-}(t),t) + \sqrt{\gamma} d \rvw^{+}(t) , \quad  \rvx^{-}(0) \sim q,
\end{align}
we can now apply the weak law of large numbers (WLLN):
\begin{align} 
    \E_{\Q}& \Bigg[\frac{1}{\gamma }\int_{0}^1  {  (\vb^{-}(\vx^{(n)-}(t), t) - \vb^{-}_{0}(\vx^{(n)-}(t), t) )^{\top} d{\vx}^{(n)-}(t) }  \nonumber \\
    &- \frac{1}{2\gamma} \int_{0}^1 (||\vb^{-}(\vx^{(n)-}(t), t) ||^2 - ||\vb^{-}_{0}(\vx^{(n)-}(t), t) ||^2 )dt + \ln\frac{p(\rvx^{(n)-}
(0))}{q(\rvx^{(n)-}
(0))}\Bigg],
\end{align}
Now using that the log RN derivative between the estimator SDE and the true SDE is given by Girsanov's theorem \citep{kailath1971structure,pavon1991free,sottinen2008application}:
\begin{align}
    \ln \frac{d \Q}{d\P_{\vb}} = &\frac{1}{\gamma }\int_{0}^1  {  ( \vb^{-}_{0}(\vx^{(n)-}(t), t)-\vb^{-}(\vx^{(n)-}(t), t)  )^{\top} d{\vx}^{(n)-}(t) } \nonumber\\
    &- \frac{1}{2\gamma} \int_{0}^1 (||\vb^{-}_{0}(\vx^{(n)-}(t), t) ||^2-||\vb^{-}(\vx^{(n)-}(t), t) ||^2  )dt - \ln\frac{p(\rvx^{(n)-}
(0))}{q(\rvx^{(n)-}
(0))},
\end{align}
we arrive at :
\begin{align} 
     \E_{\Q}& \Bigg[-\ln  \frac{d \Q}{d\P_{\vb}}\Bigg] =  -D(\Q || \P_{\vb}).
\end{align}
Thus using $\frac{1}{N}\mathcal{L}(\vb)$ to denote the negative normalized log-likelihood we have shown the following pointwise (i.e. for $\vb \in \mathcal{B}$) convergence in probability: 
\begin{align}
   \left| \frac{1}{N}\mathcal{L}(\vb) - D(\Q || \P_{\vb}) \right| \xrightarrow{P} 0 .
\end{align}
Then If $\mathcal{B}$ , is compact plus additional continuity and boundedness assumptions on $\mathcal{L}(\vb)$ a stronger form of uniform convergence in probability can be attained:
\begin{align}
   \sup_{b \in \mathcal{B}}\left| \frac{1}{N}\mathcal{L}(\vb) - D(\Q || \P_{\vb}) \right| \xrightarrow{P} 0 ,
\end{align}
from the above it directly follows that:
\begin{align}
  \left| \frac{1}{N}\mathcal{L}(\vb^{*}_N) - D(\Q || \P_{\vb^{*}_N,}) \right| \xrightarrow{P} 0 ,
\end{align}
where $\vb^{*}_N$ is the MLE estimate of the dual drift at $N$ samples. Which implies:
\begin{align}
\lim_{N\xrightarrow{P}\infty }D(\Q ||& \P_{\vb^{*}_N})  = 0, \\
\lim_{N\xrightarrow{P}\infty } \vb^{*}_N &= \vb_0^{-}
\end{align}
See Chapter 4.5 of \cite{levy2008principles} for a more detailed discussion on the required assumptions of $\mathcal{L}$.
\end{proof}
% \end{proof}

\subsection{Extending Theorem \ref{theorem:mle-rev} to EM Samples}

Note that the above proof holds for discretised samples from the original SDE, however we have been unable to extend it to approximate sampling schemes such as EM. We believe that the following result motivates the possibility that Theorem \ref{theorem:mle-rev} holds when the samples are obtained from a consistent scheme such as EM:

\begin{lemma}\label{lemma:rev_eu} (Convergence of discrete time reversal)
The time reversal of discrete Euler-Mayurama samples :
\begin{align}
    \vx_{t_i}^{-} = \vx_{t_{n-i}}^{+}
\end{align}
converges in probability to the solutions of the time reversed diffusion:
\begin{align}
    d\rvx^{-}(t)& = \rvb^{-}(t) dt + \sqrt{\gamma} d\rvw^{-}(t)
\end{align}
where $\rvx^{-}(t)=\rvx^{+}(1-t)$
\end{lemma}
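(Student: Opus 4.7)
The plan is to reduce the claim to the classical strong convergence of the Euler--Maruyama (EM) scheme combined with the measurability (in fact continuity) of the time--reversal map on path space, and then invoke Nelson's duality (the earlier cited lemma) to identify the limit with the stated reverse SDE.

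First, I would make the discrete EM iterates into a genuine continuous path by the usual piecewise--constant (or piecewise--linear) interpolation on $[0,1]$. Call this interpolant $\widetilde{\rvx}^{+,\Delta t}(t)$. Under the paper's blanket assumptions (Lipschitz and linear growth of the drift, constant diffusion coefficient $\sqrt{\gamma}$), the standard strong convergence result for EM (e.g.\ Kloeden--Platen) gives
\begin{equation*}
\E\Big[\sup_{t\in[0,1]}\big\|\widetilde{\rvx}^{+,\Delta t}(t)-\rvx^{+}(t)\big\|^{2}\Big]\;\longrightarrow\;0\qquad\text{as }\Delta t\to 0,
\end{equation*}
where $\rvx^{+}$ is the strong solution of the forward SDE driven by the same Brownian motion. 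In particular, $\widetilde{\rvx}^{+,\Delta t}\to\rvx^{+}$ in probability in $C([0,1];\mathbb{R}^{d})$.

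Second, define the time--reversal map $R:C([0,1];\mathbb{R}^{d})\to C([0,1];\mathbb{R}^{d})$ by $(Rx)(t)=x(1-t)$. This map is a sup--norm isometry and hence continuous, so by the continuous mapping theorem we obtain
\begin{equation*}
R\widetilde{\rvx}^{+,\Delta t}\;\xrightarrow{\;P\;}\;R\rvx^{+}\qquad\text{in }C([0,1];\mathbb{R}^{d}).
\end{equation*}
The left--hand side is precisely the continuous interpolant of the time--reversed EM samples $\vx^{-}_{t_{i}}=\vx^{+}_{t_{n-i}}$, while the right--hand side is $\rvx^{+}(1-t)$.

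Third, by Nelson's duality (the lemma cited after equation \eqref{eq:sde1}), the process $\rvx^{-}(t):=\rvx^{+}(1-t)$ is itself the solution of an It\^o SDE
\begin{equation*}
d\rvx^{-}(t)=\vb^{-}(\rvx^{-}(t),t)\,dt+\sqrt{\gamma}\,d\rvw^{-}(t),\qquad \rvx^{-}(0)=\rvx^{+}(1),
\end{equation*}
adapted to the reverse filtration, with drift $\vb^{-}$ satisfying Nelson's duality relation \eqref{eq:nelson}. Combining this with the previous step shows that the time--reversed EM samples converge in probability (in sup norm on paths) to a solution of the stated reverse SDE, which is the claim.

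The only nontrivial step is really the first one: the paper's assumptions on the drift (globally Lipschitz, linear growth) are exactly what is needed for strong $L^{2}$ convergence of EM, so no extra work is required beyond citing that result. The main conceptual subtlety -- and the part worth flagging -- is that the Brownian motion $\rvw^{-}$ driving the reversed process is \emph{not} the time reversal of $\rvw^{+}$ but an auxiliary Brownian motion adapted to the backward filtration; this is why the identification of the limit requires Nelson's duality rather than a naive change of variables.
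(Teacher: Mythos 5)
Your proposal is correct and follows essentially the same route as the paper's proof: interpolate the EM iterates into a continuous path, use a standard EM convergence result to get convergence in probability in sup norm to the forward solution, and then observe that time reversal is a sup-norm isometry (the paper phrases this as the substitution $s=1-t$ inside the probability statement), with the earlier Nelson time-reversal lemma identifying $\rvx^{+}(1-t)$ as a solution of the stated reverse SDE. The only cosmetic difference is that you invoke strong $L^{2}$ convergence of EM where the paper cites Gy\"ongy's convergence-in-probability result; both deliver the same conclusion.
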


\begin{proof}
First let's consider the continuous time step-wise approximation induced by the the EM samples:
\begin{align*}
    \hat{\rvx}(t) = \sum_{i=1}^T \vx_{t_{i-1}} \delta_{t \in [t_{i-1}, t_i)},
\end{align*}

From \cite{gyongy1996existence} it is a well known result that under the standard regularity assumptions on the drift $\vb(\vx,t)$ (i.e. Lipchitz continuity in $t$ and and $\vx$) that the above approximation converges in probability to the SDE solution $\rvx^{+}(t)$ as $\Delta t \rightarrow 0$.

Observing that that the time reversal of the above corresponds to the approximation induced by the reverse samples we now consider the continuous time reversed approximation:
\begin{align*}
    \hat{\rvx}^{-}(t) = \hat{\rvx}(1-t)= \sum_{i=1}^T \vx_{t_{i-1} } \delta_{1-t \in [t_{i-1}, t_i)},
\end{align*}
Using $\hat{\rvx}(t)$ converges in probability to $\rvx^{+}(t)$ we have that $\forall t \in [0,1], \; \epsilon > 0$:
\begin{align}
    \lim_{\Delta t \rightarrow 0} P\left(\sup_{t \in[0,1]}\big|\hat{\rvx}(t) - \rvx^{+}(t)\big| > \epsilon\right) = 0,
\end{align}
we carry out the following substitution $s=1-t$ then $\forall s \in [0,1], \; \epsilon > 0$:
\begin{align}
    \lim_{\Delta t \rightarrow 0} &P\left(\sup_{t \in[0,1]}\big|\hat{\rvx}(1-s) - \rvx^{+}(1-s)\big| > \epsilon\right) = 0, \\
    \lim_{\Delta t \rightarrow 0} &P\left(\sup_{t \in[0,1]}\big|\hat{\rvx}^{-}(s) - \rvx^{-}(s)\big| > \epsilon\right) = 0,
\end{align}
which completes the proof. Note that showing that reversing backwards samples reversed to a forward direction converges in probability follows the same sketch structure.
\end{proof}

A potential proof strategy would be to try and exploit the strong convergence properties of the Euler scheme to show that the stochastic integral from Theorem \ref{theorem:mle-rev} also converges in the case of Euler samples.  This remains an interesting question for future work.

% It may be the case that a correction term needs to be introduced in order for the integrals built from the reversed EM sample to converge to the backwards integrals. . 

\section{Towards a Finite Sample Analysis of Approximate IPFP Schemes} \label{appdx:finite}

We use the term approximate IPFP schemes for methodologies such as the one we present in this paper (i.e. IPML) where steps 5,6 of IPFP (Algorithm \ref{alg:gipfp}) are replaced with inexact approximations. In this section we will present a rough sketch that takes the first step towards formally analysing approximate IPFP schemes in the finite sample / discretisation regime. This will serve to illustrate many of the challenges that still remain in this analysis as well as provide some initial results.

\begin{conjecture}(Heuristic Finite Sample Bound)

Given that:
    \begin{itemize}
        \item The exact IPFP at the $i^{th}$ iteration can be bounded from above as:
            \begin{align}
                ||\vu^{+}_{*}- {\vu}^{+}_{i}|| \leq \frac{S^{\Q_0^\gamma}_{\pi_{0,1}}}{i}
            \end{align}
        \item The approximate IPFP projection operators $\hat{\gP}^{\pm}_{\pi} : \gX \rightarrow \gX $ are $K-$Lipchitz 
        \item and the approximate finite sample projection error can be bounded by a constant:
        \begin{align}
            ||\vu^{+}_{i}-{\hPr} [\vu^{+}_{i-1}]|| \leq\epsilon_{\Delta T, N},
        \end{align}
    \end{itemize}

Then it follows that the approximate IPFP iteration error can be bounded from above by:
\begin{align} \label{eq:one_bound}
    ||\vu^{+}_{*}-\hat{\vu}^{+}_{i}||  \leq \epsilon_{\Delta T, N}  i + \frac{S^{\Q_0^\gamma}_{\pi_{0,1}}}{i}
\end{align}
when $\gP^{\pm}_{\pi}$ are non-expansive operators ($K=1$), and:
\begin{align}
    ||\vu^{+}_{*}-\hat{\vu}^{+}_{i}||  \leq \epsilon_{\Delta T, N}  \frac{1-K^{i}}{1-K} + \frac{S^{\Q_0^\gamma}_{\pi_{0,1}}}{i}
\end{align}
when $K\neq1$. Where $S^{\Q_0^\gamma}_{\pi_{0,1}} = \inf_{\Q \in \calD(\pi_0, \pi_1)} \KL\left(\Q \big|\big| \Q_0^{\gamma}\right),$ and $|| . ||$ is the $\mathscr{L}_{2}(\Q^{*})$ norm error as per Equation \ref{eq:free_energy_11}, that is it is just the half bridge in Line 6 of Algorithm \ref{alg:gipfp} written in terms of the drifts. 
\end{conjecture}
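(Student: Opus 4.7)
My plan is to reduce the claimed finite sample bound to a standard perturbed fixed-point iteration argument. Let $\mathcal{P}$ denote the exact IPFP operator (one full forward/backward sweep of both half bridges) so that the exact iterates satisfy $\vu^{+}_{i} = \mathcal{P}[\vu^{+}_{i-1}]$, and let $\hat{\mathcal{P}}$ denote the finite-sample and time-discretised operator implemented by IPML, with approximate iterates $\hat{\vu}^{+}_{i} = \hat{\mathcal{P}}[\hat{\vu}^{+}_{i-1}]$. Under this notation the per-step approximation hypothesis reads $\|\mathcal{P}[\vu^{+}_{i-1}] - \hat{\mathcal{P}}[\vu^{+}_{i-1}]\| \leq \epsilon_{\Delta T,N}$ at every iterate. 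Since both chains are initialised at the prior $\Q^{*}_{0} = \Q^{\gamma}_{0}$, we also have $\hat{\vu}^{+}_{0} = \vu^{+}_{0}$.

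The first step I would take is the triangle inequality
\begin{align*}
    \|\vu^{+}_{*}-\hat{\vu}^{+}_{i}\| \;\leq\; \|\vu^{+}_{*}-\vu^{+}_{i}\| + \|\vu^{+}_{i}-\hat{\vu}^{+}_{i}\|,
\end{align*}
whose first summand is controlled directly by the assumed exact sublinear rate and contributes $S^{\Q_0^\gamma}_{\pi_{0,1}}/i$. The whole task thus reduces to bounding the drift $e_{i} := \|\vu^{+}_{i}-\hat{\vu}^{+}_{i}\|$ between the exact and approximate trajectories.

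Next I would insert $\hat{\mathcal{P}}[\vu^{+}_{i-1}]$ inside the norm and apply the triangle inequality a second time to obtain
\begin{align*}
    e_{i} \;\leq\; \|\mathcal{P}[\vu^{+}_{i-1}] - \hat{\mathcal{P}}[\vu^{+}_{i-1}]\| + \|\hat{\mathcal{P}}[\vu^{+}_{i-1}] - \hat{\mathcal{P}}[\hat{\vu}^{+}_{i-1}]\| \;\leq\; \epsilon_{\Delta T,N} + K\, e_{i-1},
\end{align*}
where the first term is bounded by the per-step approximation assumption and the second by the $K$-Lipschitzness of $\hat{\mathcal{P}}$. Unrolling this linear recursion from $e_{0} = 0$ yields $e_{i} \leq \epsilon_{\Delta T,N}\sum_{j=0}^{i-1} K^{j}$, which evaluates to $\epsilon_{\Delta T,N}\, i$ when $K=1$ and to $\epsilon_{\Delta T,N}(1-K^{i})/(1-K)$ otherwise. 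Adding back the exact-iteration error recovers the two inequalities in the statement.

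The main obstacle is not this telescoping argument, which is essentially a textbook perturbed-iteration estimate, but rather justifying the three standing assumptions in the first place. Specifically: (i) an explicit $O(1/i)$ rate for exact IPFP in an $\mathscr{L}^{2}(\Q^{*})$-type norm on drifts is not available in our generality and would require sharpening the qualitative convergence results of \cite{ruschendorf1995convergence,bernton2019schr}; (ii) establishing Lipschitzness of $\hat{\mathcal{P}}$, and ideally $K \leq 1$ so that errors accumulate additively rather than geometrically, is delicate because $\hat{\mathcal{P}}$ composes an Euler--Maruyama SDE solve with a Gaussian process regression step whose stability constants depend on the kernel and training data; and (iii) quantifying $\epsilon_{\Delta T,N}$ requires combining an Euler discretisation error in $\Delta T$, a Girsanov-type translation of drift $L^{2}$ error into KL on path space, and a finite-sample concentration rate in $N$ for the GP drift estimator that is compatible with the consistency statement of Theorem \ref{theorem:mle-rev}. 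Obtaining explicit, non-vacuous rates for these three quantities is the genuinely hard content and is precisely why the statement is advanced as a conjecture rather than a theorem.
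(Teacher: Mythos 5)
Your proposal is correct and follows essentially the same route as the paper's own sketch: the same triangle-inequality split into the exact-IPFP convergence term and the exact-versus-approximate drift, followed by the same perturbed fixed-point recursion $e_i \leq \epsilon_{\Delta T,N} + K e_{i-1}$ using the per-step error and Lipschitz assumptions, and the same geometric-sum unrolling into the two cases $K=1$ and $K\neq 1$. Your unrolling from $e_0=0$ to $\sum_{j=0}^{i-1}K^j$ is in fact slightly cleaner than the paper's $\sum_{j=0}^{i}K^j$, and your closing discussion of why assumptions (i)--(iii) are the real difficulty matches the paper's own commentary on why this is stated as a conjecture.
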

\begin{proof}(Sketch)

    For notational simplicity we will relabel the half bridges in Algorithm \ref{alg:gipfp} as projection operators $\gP^{\pm}_{\pi} : \gX \rightarrow \gX $  in a function space $\calX$. This leads to the following iterates:
    \begin{align}
        \vu^{-}_{i} &= \gP^{-}_{\pi_1}[\vu^{+}_{i-1}] \\
        \vu^{+}_{i} &= \gP^{+}_{\pi_0}[\vu^{-}_{i}]
    \end{align}
    And for the approximate IPFP we have the iterates:
    \begin{align}
        \hat{\vu}^{-}_{i} &= \hat{\gP}^{-}_{\pi_1}[\hat{\vu}^{+}_{i-1}] \\
        \hat{\vu}^{+}_{i} &= \hat{\gP}^{+}_{\pi_0}[\hat{\vu}^{-}_{i}].
    \end{align}
    Furthermore to simplify the analysis further we will consider the composition of the two projection operators (i.e. $\Pr = \gP^{+}_{\pi_0} \circ \gP^{-}_{\pi_1}$ )combined into a single operator (for both exact and approximate projections):
     \begin{align}
        \vu^{+}_{i} =\Pr[\vu^{+}_{i-1}] = \gP^{+}_{\pi_0}[\gP^{-}_{\pi_1}[\vu^{+}_{i-1}]]
    \end{align}
    We will now proceed to bound the error:
    \begin{align}
        ||\vu^{+}_{*}-\hat{\vu}^{+}_{i}|| \leq ||\vu^{+}_{i}-\hat{\vu}^{+}_{i}|| + ||\vu^{+}_{*}-{\vu}^{+}_{i}||
    \end{align}
    From the assumptions it follows that:
    \begin{align}
        ||\vu^{+}_{*}- {\vu}^{+}_{i}|| \leq \frac{S^{\Q_0^\gamma}_{\pi_{0,1}}}{i}
    \end{align}
    yielding:
    \begin{align}
        ||\vu^{+}_{*}-\hat{\vu}^{+}_{i}|| \leq ||\vu^{+}_{i}-\hat{\vu}^{+}_{i}|| + \frac{S^{\Q_0^\gamma}_{\pi_{0,1}}}{i},
    \end{align}
    Now we will proceed to analyse the first term :
    \begin{align}
        ||\vu^{+}_{i}-\hat{\vu}^{+}_{i}|| &\leq ||\vu^{+}_{i}-{\hPr} [\vu^{+}_{i-1}]||  +  || {\hPr} [\vu^{+}_{i-1}]- \hat{\vu}^{+}_{i} ||\\
        &\leq ||\vu^{+}_{i}-{\hPr} [\vu^{+}_{i-1}]||  +  || {\hPr} [\vu^{+}_{i-1}] -\hPr[\hat{\vu}^{+}_{i-1}] || \\
         &\leq\epsilon_{\Delta T, N} +  K||\vu^{+}_{i-1}-\hat{\vu}^{+}_{i-1} ||
    \end{align}
    Now we can expand the recurrence until the first iteration, yielding:
     \begin{align}
        ||\vu^{+}_{i}-\hat{\vu}^{+}_{i}|| & \leq \sum_{j=0}^{i} \epsilon_{\Delta T, N} K^{j}     \end{align}
    When $K \neq 1$ we attain the bound:

    \begin{align}
        ||\vu^{+}_{i}-\hat{\vu}^{+}_{i}||  \leq \epsilon_{\Delta T, N}\frac{(1-K^{i})} {1-K} 
    \end{align}
    When the projection operator $\hPr$ is non expansive this gives the bound:
    \begin{align}
        ||\vu^{+}_{i}-\hat{\vu}^{+}_{i}|| & \leq \sum_{j=0}^{i} \epsilon_{\Delta T, N} K^{j} \\
        &  \leq \epsilon_{\Delta T, N}  i 
    \end{align}
\end{proof}

Note that the above sketch is more of a strategy to outline the challenges and steps required for this analysis rather than a proof itself. We believe that the formal finite sample analysis on approximate IPFP schemes merits its own separate work and is thus not the focus of this work however in order to highlight its importance we have given a template towards this formal analysis following a strategy similar to \cite{sra2012scalable}. This analysis allows us to understand what conditions are needed to be shown and how they will affect the finite sample convergence rates:

We will now proceed to discuss each of the assumptions that are required as elements of the above analysis strategy. 

\begin{assumption}
The exact IPFP at the $i^{th}$ iteration can be bounded from above as:
\begin{align}
    ||\vu^{+}_{*}- {\vu}^{+}_{i}|| \leq \frac{S^{\Q_0^\gamma}_{\pi_{0,1}}}{i}
\end{align}
\end{assumption}

This assumption/conjecture is somewhat reasonable since Proposition 2.1 in \cite{bernton2019schr} proves formally that the sum of the KL errors of the $i^{th}$ marginals are bounded by $\frac{S^{\Q_0^\gamma}_{\pi_{0,1}}}{i}$, furthermore via the disintegration theorem one can show that the half bridge error terms are in fact bounded by the sum of the error of the $i^{th}$ marginals \citep{pavon1991free}. Thus it seems feasible to extend the result in \cite{bernton2019schr} to the path error we use in our conjecture. 

\begin{assumption}
The approximate IPFP projection operators $\hat{\gP}^{\pm}_{\pi} : \gX \rightarrow \gX $ are $K-$Lipchitz . Specifically we are interested in the cases where they may be non-expansive and contractive.
\end{assumption}
This assumption is specific to the approximation method used to estimate the half-bridges. It requires going into rigorous details on the nature of the approximation and imposing suitable regularity assumptions. Not that ideally we want the operator to be contractive or at the very least no expansive since for Lipchitz constants greater than one the cumulative error would potentially accumulate exponentially.

It is interesting to note that in the contractive case we can attain the bound:
\begin{align}
    ||\vu^{+}_{*}-\hat{\vu}^{+}_{i}||  \leq  \frac{\epsilon_{\Delta T, N} }{1-K} + \frac{S^{\Q_0^\gamma}_{\pi_{0,1}}}{i}
\end{align}
where the error term is constant with respect to the iteration number. Additionally the bound in Equation \ref{eq:one_bound} also trivially applies.

\begin{assumption}
The approximate finite sample projection error can be bounded by a constant:
\begin{align}
    ||\vu^{+}_{i}-{\hPr} [\vu^{+}_{i-1}]|| \leq\epsilon_{\Delta T, N},
\end{align}
\end{assumption}

This should typically be something that one shows however as seen in \cite{sra2012scalable} it is something that is often assumed in these proof strategies. The reason for assuming the result is that the projection varies depending on how we approximate the KL minimisation's, in our case we combine our proposed IPML with drift estimation via Gaussian processes. The error of the drift estimation (i.e. $\epsilon_{\Delta T, N}$) via this approach is not straightforward to obtain and is still an area of active research see \citep{pokern2013posterior,papaspiliopoulos2012nonparametric} for more details. As mentioned earlier this is not the focus of our work thus it is reasonable for us to assume that the drift estimation machinery proposed by prior authors is sound and should be possible to bound with a reasonable error.

\subsection{GPDriftFit Implementation Details} \label{appdx:predmean}
In this section we provide the closed form predictive formulas used by our method to estimate the drift. Our GP formulation for the half bridge approximations yield the following mean per dimension $d$ estimates of the drift:
\begin{align}
[\bar{\vb}^{+}(\vx, t)]_d \!= \!\vec\big(\vk_d^{+}(\vx \!\oplus\! t)\big)^{\top} \!\!\left(\tilde{\mK}_d^{+} \!+\! \frac{\gamma}{\Delta t} \mathbb{I}_{MT}\right)^{-1}\!\!\!\!\!\vec(\mY_d^{+}), \\
[\vb^{-}(\vx, t)]_d \!= \!\vec\big(\vk_d^{-}(\vx\! \oplus \!t)\big)^{\top} \!\!\left(\tilde{\mK}_d^{-}\! +\! \frac{\gamma}{\Delta t} \mathbb{I}_{NT}\right)^{-1}\!\!\!\!\vec(\mY_d^{-}) ,
\end{align}
where
\begin{align*}
   & [ \tilde{\mK}_d^{\pm} ]_{i\cdot T+l,i'\cdot T+l'} = [ {\mK}_d^{\pm} ]_{ii'll'},\\
   &[ \mK_d^{+} ]_{mm'll'} \!= \!{K}_d\big(\vx^{(m)-}_{t_l\!-\!\Delta t}\!\oplus\! (t_l\!-\!\Delta t), \vx^{(m')-}_{t_{l'} - \Delta t}\!\oplus\!( t_{l'}\!- \!\Delta t) \big), \\
   &[ \mK_d^{-} ]_{nn'kk'} = {K}_d\big(\vx^{(n)+}_{t_k} \oplus t_k , \vx^{(n')+}_{t_{k'}}\oplus t_{k'} \big),  \\
   &[\vk_d^{+}(\vx \oplus t)]_{lm} = K_d(\vx \oplus t , \vx^{(m)-}_{t_l- \Delta t}\oplus (t_l - \Delta t)), \\
   &[\vk_d^{-}(\vx \oplus t)]_{kn} = K_d(\vx \oplus t , \vx^{(n)-}_{t_k}\oplus (t_k)), \\
   &[\mY_d^{\pm}]_{li} =  \left[\frac{\pm\vx^{(i)\mp}_{t_l} \mp \vx^{(i)\mp}_{t_l - \Delta t}}{\Delta t}\right]_d, %\quad [\mY_d^{-}]_{kn} =  \left[\frac{\vx^{(n)+}_{t_k - \Delta t}  - \vx^{(n)+}_{t_k}}{\Delta t} \right]_d,
\end{align*}
where $\oplus$ is the concatenation operator, $\vec$ is the standard vectorisation operator, and $K_d: \R^{d+1} \times \R^{d+1} \rightarrow \R$ is a valid kernel function.

Similar to \cite{papaspiliopoulos2012nonparametric,ruttor2013approximate}, we are not making use of the predictive variances\footnote{Using the predictive mean as an estimate for the drift can also be interpreted as a form of kernel ridge regression under the empirical risk minimisation framework}. Instead, we simply use the predictive mean as an estimate of the drift and subsequently use that estimate to perform the EM method, thus effectively we could interpret this approach as a form of kernel ridge regression under the empirical risk minimisation framework.

\subsubsection{Coupled vs Decoupled Drift Estimators}

A careful reader may enquire if the decoupled drift parametrisation we have used imposes any limitations on the generality of the drifts that our method can estimate in contrast to an approach that couples/correlates drift dimensions. While it may seem counter-intuitive, correlating the GP outputs (i.e. coupling the drift dimensions) is more restrictive than the decoupled approach we have taken. This point is discussed and motivated in \citep{alvarezkernel,evgeniou2005learning}, where it is illustrated that the coupling of the drift outputs correspond to regularising the RKHS hypothesis space. So in fact the decoupled approach we take in this work is more flexible (less limited) than coupling the drift outputs as discussed in \citep{alvarezkernel,evgeniou2005learning}. It is however important to note that the regularisation effects of coupling may be desirable in certain physical systems where we wish to impose a constraint, however this would come at a large computational cost. Finally note that our method parametrises the drift $[\vb]_i = f_i(\vx, t)$ with a function that depends on all dimensions of the dynamical process $\rvx(t)$, that is each drift coordinate depends on the entire input space and thus no simplifying assumptions have been made in this setting.

 \section{Estimates Required by the Sinkhorn-Knop Algorithm}\label{app:sink}
 
 In order to adapt the Sinkhorn-Knop algorithm to a general prior we require the following considerations:
 
 \begin{itemize}
     \item The Sinkhorn-Knop algorithm requires computing the cost $C^{\Q_0^\gamma}_{ij} = \ln p^{\Q_0^{\gamma}}(\vy_i | \vx_j)$. For more general SDE priors whose transition densities are not available in closed form can be challenging:
    \begin{itemize}
        \item First we have to estimate the empirical distribution for $\ln p^{\Q_0^{\gamma}}(\vy_i | \vx_j)$ by generating multiple $\rvx(1)$ samples from the priors SDE for each $\vx_j$ in the dataset:
        \begin{align}
        d\rvx (t)  = \vb(\rvx(t),t)dt + \gamma d\rvw(t) , \quad \rvx(0)=\vx_j\nonumber
        \end{align}
        \item Once we have samples $\{\rvx_{k}\}_j$ we must carry out  density estimation for each $j$ in order to evaluate $\ln p^{\Q_0^{\gamma}}(\vy_i | \vx_j)$.
    \end{itemize}

\item The Sinkhorn-Knop algorithm produces discrete potentials which can be interpolated using the logsumexp formula. However how do we go from these static potentials to time dependant trajectories ?
    \begin{itemize}
        \item From \cite{pavon2018data} we can obtain expressions for the optimal drift as a function of the time extended potentials:
        \begin{align}\label{eq:potentialt}
            \phi(\vx, t) &= \!\!\!\int \!\!\phi(\vz(1)) p^{\Q_0^{\gamma}}(\vz(1) | \vx(t)) d\vz(1) \\
            \hat{\phi}(\vy, t) &= \!\!\!\int\!\! \hat{\phi}(\vz(0)) p^{\Q_0^{\gamma}}(\vy(t) | \vz(0)) d\vz(0) \label{eq:potentialt2}
        \end{align}
        The first integral can be approximated using the Montecarlo approximation via sampling from the prior SDE to draw samples from $p^{\Q_0^{\gamma}}(\vy(t) | \vz(0)) $. However computing the $\hat{\phi}$ potential integral is less clear and requires careful thought. 
        % One way could be applying Bayes theorem and sampling fro the reverse time SDE of the prior, however some density estimation would still be required to estimate $p^{\Q_0^{\gamma}}(\vz(0))$ resulting from the application of Bayes Theorem.
        \item Note that in order to estimate the drift we would have to simulate the SDE prior every time we want to evaluate the drift which itself will be run in another SDE simulation to generate optimal trajectories. 
    \end{itemize}
\end{itemize}

 \begin{figure}[!t]
    \centering
    \includegraphics[scale=0.4]{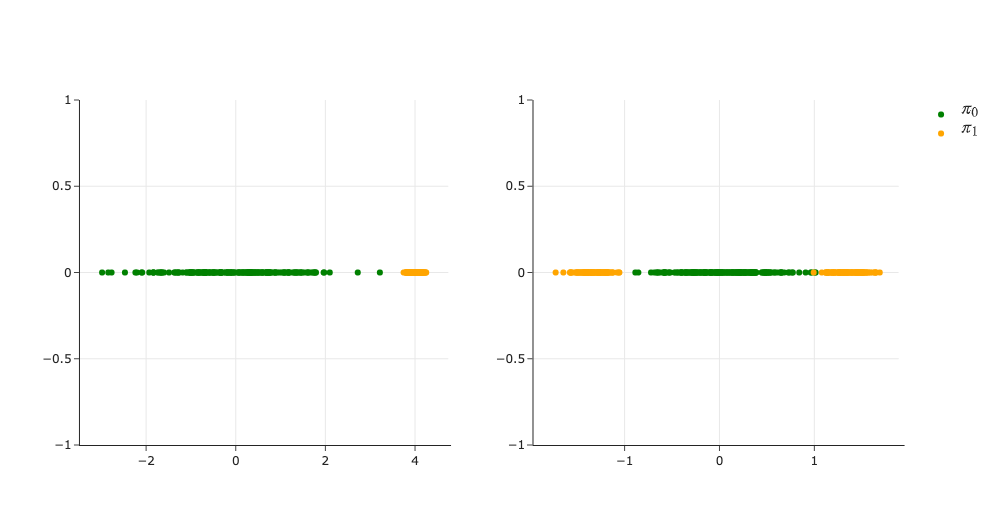}
    \caption{Unimodal experiment on the left and bimodal on the right.}
    \label{fig:toy_experiment}
\end{figure}

 \section{Approximations Required by (Pavon, Tabak, and Trigila 2018)} \label{app:pavonap}
 
 %\citet{pavon2018data}
 
 The approach in \cite{pavon2018data} iterates the following two objectives.
 \begin{align}
    \hat{\beta}^*_i = \argmax_{\hat{\beta} } &\frac{1}{M} \sum_s \log \hphi_0(\vx_s; \hat{\beta} )   \phi_0^{(i)}(\vx_s)\\
    &- \int \hphi_0(\vx; \hat{\beta} )   \phi_0^{(i)}(\vx) d\vx ,
    \quad \vx_s\sim \pi_0(\vx) ,  \nonumber 
\end{align}
and
\begin{align}
    {\beta}^*_i = \argmax_{{\beta} } &\frac{1}{N} \sum_s \log \phi_1(\vy_s; {\beta} )   \hphi_1^{(i)}(\vy_s)\\
    &- \int \phi_1(\vy; {\beta} )   \hphi_1^{(i)}(\vy) d\vy ,
    \quad \vy_s\sim \pi_1(\vy). \nonumber 
\end{align}
Where $\hat{\phi}_0, \phi_1$ are parametric functions aimed at estimating the potentials, and:
\begin{align} 
\int \hphi_0&(\vx; \hat{\beta} )   \phi_0^{(i)}(\vx) d\vx = \int \phi_1(\vy; {\beta} )   \hphi_1^{(i)}(\vy) d\vy\nonumber\\
& =\int  \hphi_0(\vx; \hat{\beta} ) \int p^{\Q_0^{\gamma}}(\vy| \vx) \phi_1^{(i)}(\vx; \beta) d\vy d\vx, \label{eq:hdint1}
\end{align}

Note that in Equation \ref{eq:hdint1} the outer most integral is not taken with respect to a probability distribution and thus does not admit standard approximations.  The authors in \cite{pavon2018data} propose the method of importance sampling \citep{martino2017effective}, however this does not scale well to higher dimensions. This is contrasting to our approach where all integrals are expectations with respect to the empirical distribution and the SDEs being fitted.
\begin{figure}[t]
    \centering
    \includegraphics[scale=0.4,trim={2.3cm 0.2cm 1.5cm 0}, clip]{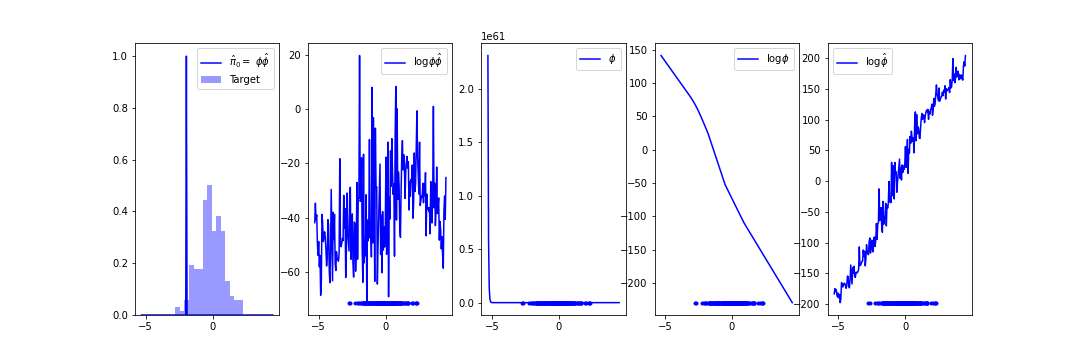} \\\vspace{-0.2cm}
    \includegraphics[scale=0.4,trim={2.3cm 0 1.5cm 1.5cm}, clip]{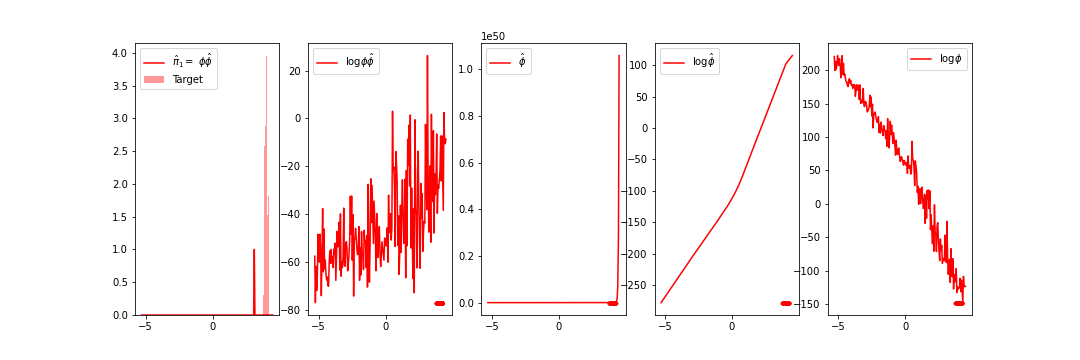} 
    \caption{Schrödinger Bridge results using the DDSB on unimodal to bimodal Gaussian 1D data. This example illustrates the Dirac delta collapse of the marginals.}
    \label{fig:small_delta_collapse}
\end{figure}

\section{Experiments} \label{app:experiments}

In this section we provide in depth details regarding the parameter configurations of our approach plus our experimental setup. For all the experiments below, unless stated otherwise, we used the exponential kernel with a lengthscale set at the default value of 1. We ran IPML for 5 iterations with a discretization factor $\Delta_t = 0.01$ and $\gamma=1$. The iteration number (5) was selected by observing that the algorithm converged after that number in all experiments considered.
%  IPML is implemented using a GP with an exponential kernel, $\gamma=1$, and over 5 iterations.
\subsection{1D Experiment}\label{sec:1d}
The prior $\Q_0^\gamma$ in IPML\footnote{The practical implementation of $\Q_0^\gamma$ will be added in the main paper during the revision period.} is specified by a prior drift and an initial value distribution $\pi_0^{\Q_0^\gamma} $.  We set $\pi_0^{\Q_0^\gamma}  = \pi_0$ and the prior to a Brownian motion $\W^\gamma$ unless stated otherwise. The initial value distribution does not affect the argmax of the SBP and is thus a free parameter. Note that $\pi_0^{\Q_0^\gamma}$ is used in practice to estimate the backwards drift of the prior before entering the IPFP loop by first sampling from $\Q_0^\gamma$ using the SDESolve subroutine. It is possible to encourage more exploration of the space by increasing the variance of $\pi_0^{\Q_0^\gamma}$.

For the unimodal boundary distributions we used:
\begin{align}
    \pi_0 \sim \calN\left(0,1\right), \;\; \pi_1 \sim \calN\left(4, 0.1^2\right)
\end{align}
and for the bimodal experiment we used:
\begin{align}
     \pi_0 \sim  \calN(0,  1) \quad \pi_1\sim \frac{1}{2}\calN(1.8, 0.6 ^2)+ \frac{1}{2}\calN(-1.9, 0.6 ^2) .
\end{align}

\subsubsection{Delta Collapse in DDSB} \label{appsec:delta}
Here we will illustrate a common failure case of the approach in \cite{pavon2018data}. That is when the distributions $\pi_0$ and $\pi_1$ are distant from each other the methodology proposed in \cite{pavon2018data} breaks for suitable values of $\gamma$ (i.e. $\gamma=1,2,3...$). Using the same marginals $\pi_0, \pi_1$ as in Experiment 1 we re-train the method by \cite{pavon2018data} with $\gamma=1$. In Figure \ref{fig:small_delta_collapse} we can see the results of this experiment. We can see the marginals learned by DDSB collapse all the mass in a very small low density region for the true marginals. We found this phenomena to occur often and harder to overcome in 2 dimensional experiments and thus did not pursue further comparisons with this method. Note we normalised the delta spike of the DDSB marginals to have a range in $[0,1]$ so we could compare to the true density, its actual value is on the order of $10^{11}$ which confirms that the model is collapsing to a point mass (dirac delta function).

\subsection{Well experiment}\label{sec:well}

\begin{figure}%{l}{0.5\textwidth}
% \begin{figure}[H]
\centering
    \includegraphics[scale=0.6]{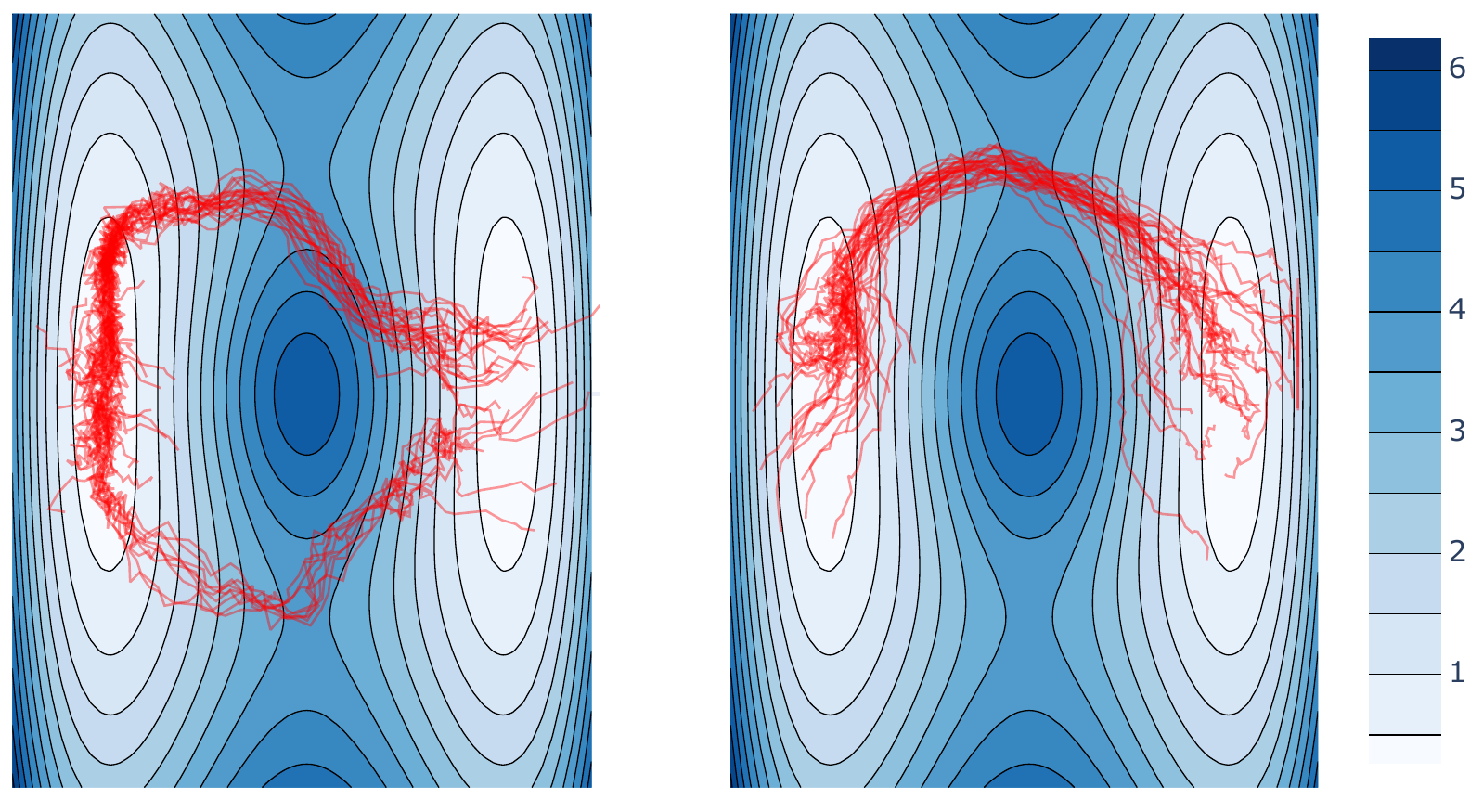}
% \end{figure}
\caption{IPML forward trajectories on potential well. Exponentiated kernel on the left and Exponential Quadratic on the right.}
    \label{fig:well}
\end{figure}

We used the following potential to model the double wells:
\begin{align}
    U(x,y) = \frac{5}{2}(x^2-1)^2+y^2 +  \frac{1}{\delta}\exp\left(-\frac{  x^2 +y^2}{\delta}\right),
\end{align}
furthermore we used the boundary distributions:
\begin{align*}
    \pi_0 \sim \calN\left(\begin{pmatrix} -1 \\
    0\\
    \end{pmatrix} ,\begin{pmatrix} 0.0125
 & 0 \\
    0 & 0.15\\
    \end{pmatrix}\right),\;\; \pi_1 \sim 
    \calN\left(\begin{pmatrix} 1 \\
    0\\
    \end{pmatrix} ,\begin{pmatrix} 0.0125
 & 0 \\
    0 & 0.15\\
    \end{pmatrix}\right)
\end{align*}
Where we selected the boundary distributions to be located at the wells and visually follow a similar spread/curvature as the contour of the well. For these experiments we use $\gamma=3$. As highlighted previously $\pi_0^{\Q_0^\gamma}$ is a free parameter which in this section we set as:
\begin{align}
\pi_0^{\Q_0^\gamma} \sim \calN\left(\begin{pmatrix} 0 \\
    0\\
    \end{pmatrix} ,\begin{pmatrix} 0.5
 & 0 \\
    0 & 0.5\\
    \end{pmatrix}\right),
\end{align}
we select this as the marginal prior since its samples cover the space over which the wells is defined well, and helps estimate the backwards drift of the prior more accurately. As mentioned in the paper, using different kernesl as well as lengthscales resulted in slightly different behavior on the well experiment where the trajectory would not split as illustrated in figure \ref{fig:well}. The splitting pattern was obtained using an exponential kernel with a lower lengthscale ($0.25$) underlying the need for careful consideration of the kernel and its parameters.  

\subsubsection{Discussion on Kernel Choice}

We can observe that when using EQ kernel, the trajectories do not cross both passes but instead choose one. We conjecture that this is due to the fitted predictive means having a preference for simpler functions that send nearby particles in the same direction. We empirically verify this conjecture by experimenting with alternative kernels to EQ (i.e. exponential) that model a wider class of functions. We believe that these alternative kernels are less prone to discouraging splitting since, when viewed within the kernel ridge regression framework, they span an RKHS of functions that do not have the smoothness constraint of the EQ kernel. 
\begin{figure}[t!]
    \centering
    \includegraphics[scale=0.65]{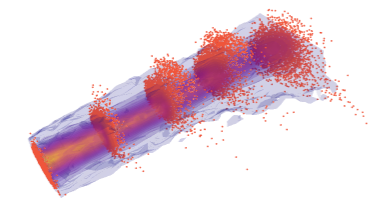}
    \caption{Fitted SBP distribution on the start and end data slices with Brownian prior on single cell human Embryo data from (Tong et al. 2020). Observations depicted as red point clouds. See section \ref{sec:cells} for experimental details.}
    \label{fig:cell_volume}
\end{figure}
\subsection{Cell experiment}
The data used for this experiment was taken directly from the github repository published by \cite{tong2020trajectorynet}. PCA is first applied to the data and the first 5 components are selected. We used the code available at \url{https://github.com/KrishnaswamyLab/TrajectoryNet/tree/master/TrajectoryNet} to reproduce the performance of TrajectoryNet as well as the optimal transport baseline. 
\subsection{Motion experiment}\label{app:mot}

\begin{figure}
    \centering
    \includegraphics[scale=0.33]{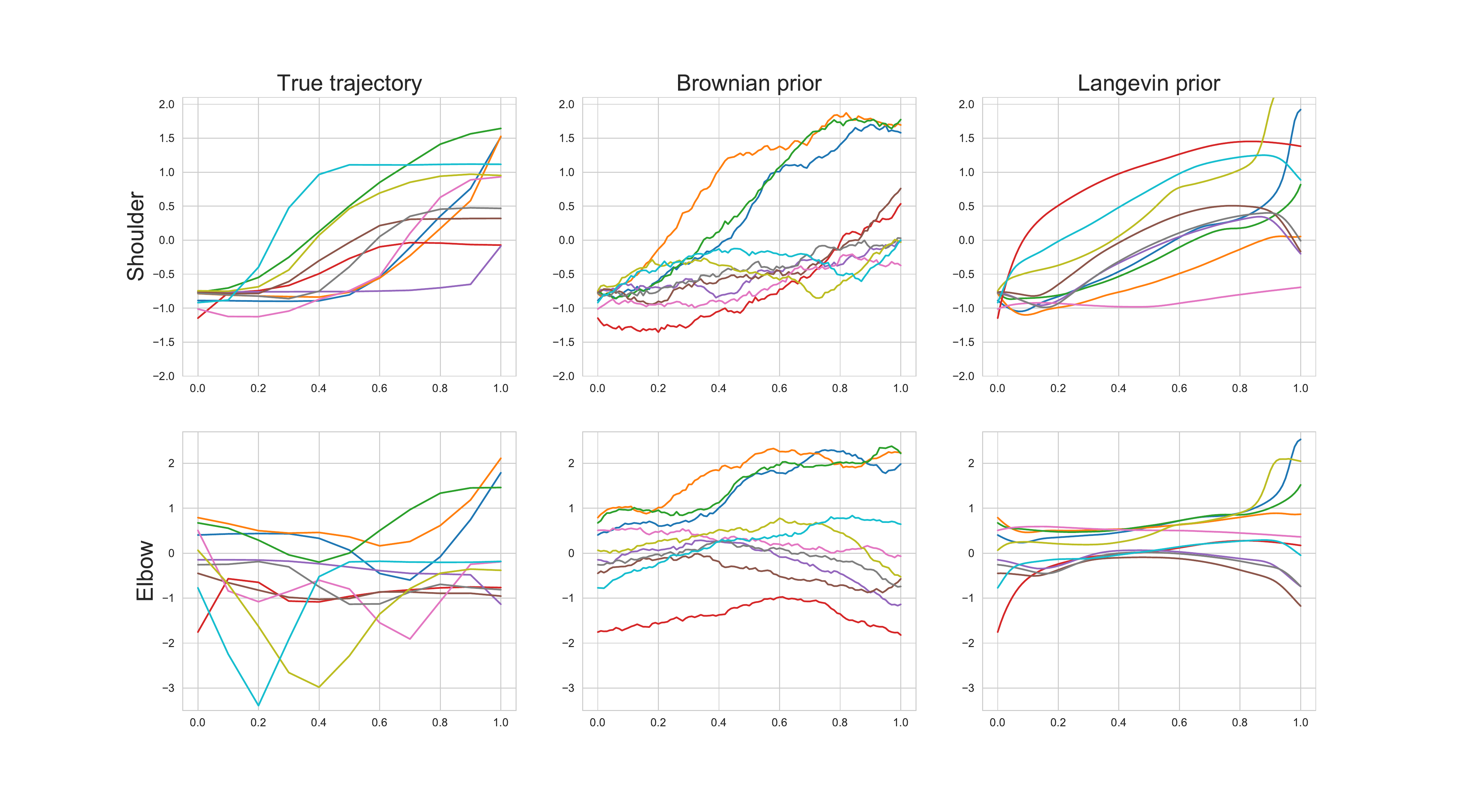}
    \caption{Trajectories of the shoulder and arm oriented angle sensor through the motion. The two plots on the right demonstrate IPML's fit using a Brownian and Langevin prior.}
    \label{fig:motion_plot}
\end{figure}

For this experiment we use a second order SDE prior (i.e. Langevin dynamics). This can be encoded in first order SDE using the companion/controllable form of a dynamical system:

\begin{align}
d\begin{pmatrix}
\rvx(t)\\
\rvv(t)
\end{pmatrix} = \begin{pmatrix}
\rvv(t)\\
 \mK\rvx(t)-\beta\rvv(t)
\end{pmatrix}dt + \gamma \begin{pmatrix}
\mathbf{0}_{d}\\
\mathbf{1}_d
\end{pmatrix} d\rvw(t)
\end{align}
For simplicity we set the dampening factor $\beta$ to $0$ and $\gamma=0.5$. As for the spring matrix we explored the following decoupled and coupled forms:
\begin{align}
    \mK = k \mathbb{I} \quad \mK = k\frac{3}{2}\mathbb{I} - \frac{k}{2} \mathbf{1} \mathbf{1}^{\top}
\end{align}
Where $k$ defines the frequency of the solution via $\omega = \sqrt{k/m}$ in our setting we ignore the mass (imagine is at absorbed in the matrix $\mK$ and work directly with setting $\omega$. Since the movement we select moves from rest to a high position and does not return to its starting point / or repeat itself we set $\omega\approx\pi$ that corresponds to $0.5$Hz which describes half a revolution. We visually confirmed with the true trajectories that this seems like a reasonable prior.

For the Brownian motion prior that we use as a baseline comparison in Figure \ref{app:mot} we explored several values of $\gamma$ and selected the best performing one (visually) of $\gamma=0.3$. Note we do pick a smaller value of $\gamma$ for the Brownian motion and this is partly to compensate that the Langevin prior trajectories will always look smooth by construction thus we pick a not so noisy Brownian motion that still gives good results. 

Note that the volatility term here is singular and in our mocap application setting it is exactly the vector:
\begin{align}
   \gamma  \begin{pmatrix}
    \mathbf{0}_d \\
    \mathbf{1}_d
    \end{pmatrix}
\end{align}
Where the dimension $d$ is given by the number of sensors we use to fit the motion. A careful reader may wonder weather the SBP machinery still applies to such a sparsely structured diffusion and in fact it does. Following Theorem 4 of \cite{sottinen2008application} we can see how the Radon-Nikodym derivative is finite and can be expressed in an almost identical fashion to the original controlled SBP formulation and thus justifying the existence of the SBP in this scenario as well as the application of the IPML algorithm and its derivatives.

\subsection{Computational resources}\label{app:comp}
\paragraph{Computational costs:}

The main computational cost of IPML comes from fitting a Gaussian Process to model the drift (DriftFit). Assuming the cardinality of  $|\pi_0|=|\pi_1|=N$ and a discretization factor $\Delta_t$, at each iteration, IPML requires fitting a Gaussian Process on $\frac{N}{\Delta_t}$ samples. It is a well-known fact that GPs have a cubic time complexity making the costs of the DriftFit subroutine $O((\frac{N}{\Delta_t})^3)$. This could be scaled down by using GP approximations (Nystrom) and was successful in our preliminary results. The computational costs of the SDESolve sub-routine in comparison is $O((\frac{N}{\Delta_t})^2)$ due to the GP predictions costs. Finally, the memory complexity of the algorithm is $O((\frac{N}{\Delta_t})^2)$ due to the GP fitting in DriftFit. The running time of IPML on the machine described in the section below for the well experiment is around 5 minute. For the cell experiment, the running time is between 1 and 2 hours depending on the discretization factor chosen. 
\paragraph{Infrastructure used:} The experiments are performed on Compute Canada clusters. Specifically, they ran on a CPU cluster composed of Intel CPUs. Each node contains 20 Intel Skylake cores (2.4GHz, AVX512), for a total of 40 cores per node and 202GB of RAM. The computational costs for the toy experiments (Well and 1D)  was quite small, however, the RAM consumption of the cell experiment was significant due to the use of a Gaussian Process.

% \end{paracol}

\end{document}